\documentclass[11pt]{article}

\usepackage{amsmath,amsfonts,amsthm,amssymb,bbm}
\usepackage{algorithm}	    %
\usepackage{algorithmic}	    %
\usepackage{graphicx,subcaption}
\usepackage{textcomp}

\usepackage{epigraph}
	\setlength{\epigraphwidth}{\textwidth}
	\setlength{\epigraphrule}{0pt}
	\setlength{\beforeepigraphskip}{0pt}
	\setlength{\afterepigraphskip}{0pt}

\usepackage{soul}
\usepackage{upgreek}

\usepackage[utf8]{inputenc} %
\usepackage[T1]{fontenc}    %
\usepackage{hyperref}       %
\usepackage{url}            %
\usepackage{booktabs}       %
\usepackage{multirow}	    %
\usepackage{nicefrac}       %
\usepackage{microtype}      %
\usepackage{xcolor}         %
\usepackage{bookmark}

\usepackage{pgfplots}
\usepackage{amsfonts}
\usepackage{pgfplotstable}
\usetikzlibrary {arrows.meta}

\pgfplotsset{compat=1.17}

\usepackage{tabularx}
\usepackage{array, collcell}
\usepackage{makecell}
\usepackage{amsmath}
\setcellgapes{3pt}\makegapedcells

\newcommand\AddLabel[1]{%
  \refstepcounter{equation}%
  (\theequation)%
  \label{#1}%
}
\newcolumntype{M}{>{\hfil$\displaystyle}X<{$\hfil}} %
\newcolumntype{L}{>{\collectcell\AddLabel}r<{\endcollectcell}}

\newtheorem*{rep@theorem}{\rep@title}
\newcommand{\newreptheorem}[2]{%
\newenvironment{rep#1}[1]{%
 \def\rep@title{#2 \ref{##1}}%
 \begin{rep@theorem}}%
 {\end{rep@theorem}}}
\makeatother

\newtheorem*{rep@lemma}{\rep@title}
\newcommand{\newreplemma}[2]{%
\newenvironment{rep#1}[1]{%
 \def\rep@title{#2 \ref{##1}}%
 \begin{rep@lemma}}%
 {\end{rep@lemma}}}
\makeatother

\newtheorem*{rep@corollary}{\rep@title}
\newcommand{\newrepcorollary}[2]{%
\newenvironment{rep#1}[1]{%
 \def\rep@title{#2 \ref{##1}}%
 \begin{rep@corollary}}%
 {\end{rep@corollary}}}
\makeatother

\theoremstyle{plain}
\newtheorem{theorem}{{\bf Theorem}}[section]
\newreptheorem{theorem}{{\bf Theorem}}
\newtheorem{lemma}[theorem]{{\bf Lemma}}
\newreptheorem{lemma}{{\bf Lemma}}

\newreptheorem{proposition}{{\bf Proposition}}
\newtheorem{corollary}[theorem]{{\bf Corollary}}
\newreptheorem{corollary}{{\bf Corollary}}
\newtheorem{definition}{{\bf Definition}}[section]

\newtheorem{remark}[theorem]{{\bf Remark}}

\usepackage{authblk}
\usepackage[title]{appendix}
\usepackage[margin=1in]{geometry}

\usepackage[square,numbers]{natbib}
\bibliographystyle{abbrvnat} 
\usepackage{cleveref}

\providecommand{\Comments}{0}  %
\ifnum\Comments=1
\usepackage[colorinlistoftodos,prependcaption,textsize=scriptsize]{todonotes}
\paperwidth=\dimexpr\paperwidth + 2cm\relax
\marginparwidth=\dimexpr\marginparwidth + 1.7cm\relax
\else
\usepackage[disable]{todonotes}
\fi

\newcommand{\mytodo}[1]{\ifnum\Comments=1{#1}\fi}

\newcommand{\tableoftodos}{\ifnum\Comments=1 \listoftodos[Comments/To Do's] \fi}

\definecolor{Gred}{RGB}{219, 50, 54}
\definecolor{Ggreen}{RGB}{60, 186, 84}
\definecolor{Gblue}{RGB}{72, 133, 237}
\definecolor{Gyellow}{RGB}{247, 178, 16}
\definecolor{ToCgreen}{RGB}{0, 128, 0}
\definecolor{myGold}{RGB}{231,141,20}
\definecolor{myBlue}{rgb}{0.19,0.41,.65}
\definecolor{myPurple}{RGB}{175,0,124}

\title{Laplace Transform Interpretation of Differential Privacy}
\author{%
    \textbf{Rishav Chourasia}$^{1,2}$\thanks{Corresponding author}
    \quad
    \textbf{Uzair Javaid}$^{1}$
    \quad
    \textbf{Biplap Sikdar}$^{2}$
    \\
    $^1$Betterdata \quad $^2$National University of Singapore \\
    \texttt{\{rishav, uzair\}@betterdata.ai} \quad
    \texttt{bsikdar@nus.edu.sg}
}
\date{}

\newcommand{\eps}{\varepsilon}
\newcommand{\del}{\delta}
\newcommand{\q}{q}
\newcommand{\n}{n} %
\newcommand{\rhO}{\rho}

\newcommand{\M}{\mathcal{M}} %
\newcommand{\D}{D} %
\newcommand{\UniLap}[1]{\mathcal{L}\left\{#1\right\}}
\newcommand{\BiLap}[1]{\mathcal{B}\left\{#1\right\}}

\newcommand{\X}{\mathcal{X}}
\newcommand{\OO}{\Omega}

\newcommand{\R}{\mathbb{R}}
\newcommand{\Com}{\mathbb{C}}

\newcommand{\Id}{\mathit{I}_d}

\newcommand{\Thet}{\Theta} %
\newcommand{\thet}{\theta} %
\newcommand{\priv}[2]{L_{#1\vert#2}}
\newcommand{\privloss}[2]{\mathrm{PLD}(#1\Vert#2)}
\newcommand{\Z}{Z} %
\newcommand{\Zc}{{Z'}} %

\newcommand{\dirac}{\triangle}
\newcommand{\convolve}{\circledast}

\newcommand{\prob}[2]{\underset{#1}{\mathbb{P}}\left[#2\right]}
\newcommand{\expec}[2]{\underset{#1}{\mathbb{E}}\left[#2\right]}
\newcommand{\indic}[1]{\mathbb{I}\{#1\}}

\newcommand{\Ren}[3]{\mathrm{R}_{#1}\left(#2\middle\Vert#3\right)}	%
\newcommand{\Eren}[3]{\mathrm{E}_{#1}\left(#2\middle\Vert#3\right)}	%

\newcommand{\eqdef}{:=}

\newcommand{\dif}[1]{\mathrm{d} #1}

\begin{document}

\maketitle

\begin{abstract}

    We introduce a set of useful expressions of Differential Privacy (DP) notions in terms of the Laplace transform of the privacy loss distribution. Its bare form expression appears in several related works on analyzing DP, either as an integral or an expectation. We show that recognizing the expression as a Laplace transform unlocks a new way to reason about DP properties by exploiting the duality between time and frequency domains. Leveraging our interpretation, we connect the $(\q, \rhO(\q))$-Rényi DP curve and the $(\eps, \del(\eps))$-DP curve as being the Laplace and inverse-Laplace transforms of one another. This connection shows that the R\'enyi divergence is well-defined for complex orders $\q = \gamma + i \omega$. Using our Laplace transform-based analysis, we also prove an adaptive composition theorem for $(\eps, \del)$-DP guarantees that is exactly tight (i.e., matches even in constants) for all values of $\epsilon$. Additionally, we resolve an issue regarding symmetry of $f$-DP on subsampling that prevented equivalence across all functional DP notions.

    \end{abstract}

\section{Introduction}
\label{sec:intro}
Differential privacy (DP)~\citep{dwork2006differential} has become a widely adopted standard for quantifying privacy of algorithms that process statistical data. In simple terms, differential privacy bounds the influence a single data-point may have on the outcome probabilities. Being a statistical property, the design of differentially private algorithms involves a \emph{pen-and-paper analysis} of any randomness internal to the processing that obscures the influence a data-point might have on its output. A clear understanding of the nature of differential privacy notions is therefore tantamount to study and design of privacy-preserving algorithms.

Throughout its exploration, various functional interpretations of the concept of differential privacy have emerged over the years. These include the privacy-profile curve $\delta(\epsilon)$~\citep{balle2020privacy}  that traces the $(\epsilon, \delta)$-DP point guarantees, the $f$-DP~\citep{dong2019gaussian} view of worst-case \emph{trade-off curve} between type I and type II errors for hypothesis testing membership~\citep{kairouz2015composition,balle2020hypothesis}, the Rényi DP~\citep{mironov2017renyi} function of order $\q$ that admits a natural analytical composition~\citep{abadi2016deep,mironov2017renyi}, the view of the \emph{privacy loss distribution (PLD)}~\citep{sommer2018privacy} that allows for approximate numerical composition~\citep{koskela2020computing,gopi2021numerical}, and the recent \emph{characteristic function formulation} of the dominating privacy loss random variables~\citet{zhu2022optimal}. Each of these formalisms have their own properties and use-cases, and none of them seem to be superior in all aspects. 

Regardless of their differences, they all have some shared difficulties---certain types of manipulations on them are harder to perform in the time-domain, but considerably simpler to do in the frequency-domain. For instance, \citet{koskela2020computing} noted that composing PLDs of two mechanisms involve convolving their probability densities, which can be numerically approximated efficiently by multiplying their Discrete Fast-Fourier Transformations (DFFT) and then inverting it back to get the convolved density using Inverse-DFFT. Such maneuvers are also frequently performed for analytical reasons while proving properties of differential privacy, often without even realizing this detour through the frequency-domain. A notable example of this is the analysis of Moments' accountant by~\citet{abadi2016deep}, where the authors bound higher-order moments of subsampled Gaussian distributions, compose the moments through multiplication, and then derive the $(\eps,\del)$-DP bound on the DP-SGD mechanism. Their analysis goes the through frequency space, as the moment generating function of a random variable corresponds to the two-sided Laplace transform of its probability density function~\citep{miller1951moment}. Often times when dealing with a functional notion of DP, expressing components like expectations or cumulative densities their integral form ends up being a Fourier or a Laplace transform. Realizing them as such can be tremendously useful in analysis.

In this paper, we formalize these time-frequency domain dualisms enjoyed by the functional representations into a new interpretation of differential privacy. In addition to augmenting existing perspectives on DP, this interpretation provides a flexible analytical toolkit that greatly extends our cognitive reach in reasoning about DP and its underpinnings. This interpretation is based on recognizing that the privacy-profile $\del_{P|Q}(\eps) := \sup_S P(S) - e^\eps \cdot Q(S)$ and the R\'enyi-divergence $\Ren{\q}{P}{Q} := \frac{1}{\q-1} \int_\OO P^\q Q^{1-\q} \dif{\thet}$ between any two distributions $P, Q$ on the same space $\OO$ can be seen as a Laplace transform\footnote{Laplace transform maps a time-domain function $g(t)$ with $t \in \R$ to a function $\UniLap{g}(s) := \int_0^\infty e^{-s t} g(t) \dif{t}$ with $s \in \Com$ in the complex space. Similarly, bilateral Laplace transform of $g(t)$ is defined as $\BiLap{g}(s) := \int_{-\infty}^\infty e^{-s t} g(t) \dif{t}$.} of the privacy loss distribution $\privloss{P}{Q}$, the distribution of privacy loss random variable $\Z = \priv{P}{Q}(\Thet)$ where $\Thet \sim P$:
\begin{align}
	\forall \eps \in \R \ &: \ \del_{P|Q}(\eps) = \expec{\Z \leftarrow \privloss{P}{Q}}{\max\{0, 1 - e^{\eps - \Z} \}}  = \UniLap{1 - F_\Z(t + \eps)}(1), \label{eqn:profile_common_expr} \\
	\forall \q \in \Com \ &: \ e^{(\q -1)\cdot\Ren{\q}{P}{Q}} = \expec{\Z \leftarrow \privloss{P}{Q}}{e^{(\q - 1) \cdot \Z}}  = \BiLap{f_\Z(t)}(1 - \q), \label{eqn:renyi_common_expr}
\end{align}
where $F_\Z(t) = \Pr[\Z < t]$ is the cumulative distribution function and $f_\Z(t) = \int_{\{\thet \in \OO : \priv{P}{Q}(\thet)=z\}} P \dif{\thet}$ is the (generalized) density function of the privacy loss random variable $\Z$. The first equality in~\eqref{eqn:profile_common_expr} is a widely used way to represent the $(\eps, \del(\eps))$-DP curve in literature~\citep{sommer2018privacy,balle2020hypothesis,balle2020privacy,koskela2020computing,gopi2021numerical,steinke2022composition,canonne2020discrete}. Similarly, the first equality in~\eqref{eqn:renyi_common_expr} represents the well-known moment-generating function of privacy loss~\citep{mironov2017renyi,abadi2016deep,balle2020hypothesis}. The second equalities above are part of a set of Laplace expressions presented in this paper. Together, these expressions unlock a formal approach to perform a wide-variety of manipulations and transformations on them using the fundamental properties of the Laplace functional (see~\autoref{tab:laplaceTransformProp}). Using them, we show that the privacy-profile and R\'enyi divergence between any two distributions have the following equivalence.
\begin{equation}
	\forall \q  \in \Com \ : \ e^{(\q-1)\cdot\Ren{\q}{P}{Q}} = \q(\q-1) \cdot \BiLap{\del_{P|Q}(t)}(1-\q),
\end{equation}
which again is a Laplace transform expression. Furthermore,~\citet{zhu2022optimal}'s characteristic function of the privacy loss $\phi_{P|Q}(\q) := \expec{P}{e^{i\q \log(P/Q)}}$ also turns out to be a Fourier transform, which is a special case of the bilateral Laplace transform:
\begin{equation}
	\label{eqn:zhu_char}
	\forall \q  \in \R\ : \ \phi_{P|Q}(\q) = \expec{\Z \leftarrow \privloss{P}{Q}}{e^{i\q \Z}} = \BiLap{f_\Z}(-i\q).
\end{equation}

These expressions can take advantage of the relationship between their time-domain and complex frequency-domain representations, as certain manipulations are more straightforward in one domain as compared to the other. Using the Laplace transform interpretations extensively, our paper presents the following findings.
\begin{enumerate}
	\item We note that the Laplace transform expression of R\'enyi divergence permits the order $\q$ to be a complex number in $\Com$. Based on this observation, we revisit the discussion on equivalence and interconversion between $(\q, \rhO)$-R\'enyi DP and $(\eps, \del)$-DP in literature~\citep{balle2020hypothesis,zhu2022optimal,asoodeh2020better,canonne2020discrete}. We show that the privacy-profile curve $\del_{P|Q}(\eps)$ and the R\'enyi divergence $\Ren{\q}{P}{Q}$ as a function of $\q$ are equivalent as long as either $P$ is absolutely continuous w.r.t. $Q$ (denote as $P \ll Q$) or $Q \ll P$; absolute continuity in both directions is not necessary. Moreover, we establish that while $\del_{P_1|Q_1}(\eps) \leq \del_{P_2|Q_2}(\eps)$ for all $\eps$ implies that $\Ren{\q}{P_1}{Q_1} \leq \Ren{\q}{P_2}{Q_2}$ for all $\q > 1$, the converse does not hold. This is due to the fact that the dominance relationship between privacy profiles $\del_{P_1|Q_1}(\eps)$ and $\del_{P_2|Q_2}(\eps)$ depends on how the R\'enyi divergence curves $\Ren{\q}{P_1}{Q_1}$ and $\Ren{\q}{P_2}{Q_2}$ behave along the complex line $\{\q \in \Com : \mathfrak{Re}(\q) = c\}$ at any $c\in \R$ for which the two divergences exist; not along $(1, \infty)$.

	\item Among all functional notions of DP, \emph{exactly tight adaptive} composition theorem is only known for R\'enyi DP in an explicit form\footnote{\citet{dong2019gaussian} examine tight composition under the $f$-DP framework by defining an abstract composition operation, denoted $f_1 \otimes f_2$. However, they do not provide an explicit form for this operator for a general trade-off function $f$, offering it only for the specific case of the Gaussian trade-off function $G_\mu$.}\citep{mironov2017renyi,bun2016concentrated}. And, for the PLD formalism, only non-adaptive composition theorems are known that are \emph{exactly tight}\footnote{Unlike under non-adaptivity, composing two privacy loss random variables $ Z_1 $ and $ Z_2 $ does not amount to convolving their privacy loss distributions (PLDs) \( f_{Z_1} \circledast f_{Z_2} \) when the mechanisms are adaptive because $Z_1, Z_2$ become dependent. We note that \citet{gopi2021numerical} seem to incorrectly assert their Theorem 5.5 to be valid under adaptivity.}~\citep{gopi2021numerical,sommer2018privacy,koskela2020computing}.
	In this paper, we establish an exactly tight theorem for composing any two privacy profiles, $\del_{P_1|Q_1}(\eps)$ and $\del_{P_2|Q_2}(\eps)$, leveraging time-frequency dualities with their R\'enyi divergence curves. Our composition method also extends to adaptive scenarios, provided that the conditional distributions $P_2^\thet$ and $Q_2^\thet$, given an observation $\thet$ from the first distribution pair, are dominated by a privacy profile for all $\thet$, which is a standard assumption for adaptive composition guarantees.

	\item We apply our composition theorem for privacy profiles to derive an optimal composition theorem for \((\epsilon_i, \delta_i)\)-point guarantees of differential privacy. Our approach begins by determining the worst-case privacy profile \(\delta_i(\epsilon)\) that \emph{any} \((\epsilon_i, \delta_i)\)-DP mechanism must satisfy. We then use our composition theorem to derive the combined privacy profile \(\delta^\otimes(\epsilon)\). This provides the most precise composition guarantee possible when given only that a sequence of mechanisms each satisfies an \((\epsilon_i, \delta_i)\)-DP point guarantee. Our bound surpasses the optimal composition result in \citet[Theorem 3.3]{kairouz2015composition} because, whereas their result only provides a discrete set of \((\epsilon, \delta)\) values met by the composed curve, ours forms a continuous curve. This continuity enables us to determine the tightest \(\epsilon\) value for any given \(\delta\) budget. We also show that our results align with the bounds generated by numerical accountants such as Google’s PLDAccountant \citep{doroshenko2022connect} and Microsoft’s PRVAccountant \citep{gopi2021numerical}.

	\item The concept of \( f \)-DP introduced by \citet{dong2019gaussian} provides a functional perspective on the indistinguishability between two distributions \( P \) and \( Q \) through hypothesis testing. The function \( f: [0, 1] \rightarrow [0, 1] \) represents a bound on the trade-off \( T(P, Q) : [0, 1] \rightarrow [0, 1] \) between Type-I and Type-II errors for any test aimed at determining whether a sample \( \thet \) originates from \( P \) or \( Q \). Unlike other functional notions of differential privacy, \( f \)-DP is unique in being \emph{not connected} to the rest via a Laplace transform. Instead, \citet{dong2019gaussian} establish that the privacy profile \( \del(\eps) \) and the trade-off curve \( f \) of a mechanism exhibit a convex-conjugate relationship, also known as Fenchel duality. However, \citet[Proposition 2.12]{dong2019gaussian} confirm this functional equivalence only when \( f \) is \emph{symmetric}. With Poisson subsampling at probability \( p \), the resulting amplified curve \( f_p(x) = p f(x) + (1 - p) \cdot x \) becomes asymmetric. To ensure symmetry, the subsampling result \citep[Theorem 4.2]{dong2019gaussian} applies a \( p \)-sampling operator \( C_p(f) : \min\{f_p, f_p^{-1}\}^{**} \) that overestimates the \( f_p \)-curve. We show that this symmetrization step disrupts the equivalence between \( f \)-DP and privacy profile formalisms (and thereby with other functional notions). To address this, we propose maintaining the natural asymmetry in \( f \)-DP and avoid the need for this symmetrization step by adopting a convention on the direction of skew. This completes the equivalences across all functional notions of DP.

\end{enumerate}

\textbf{Related work:} Our work builds an interpretation of differential privacy by leveraging several works that appeared before. This includes, but not limited to the works on various interpretations of privacy by~\citet{dong2019gaussian, dwork2016concentrated, dwork2014algorithmic, mironov2017renyi, bun2016concentrated, sommer2018privacy, gopi2021numerical, koskela2020computing, zhu2022optimal}. In particular, the work of~\citet{zhu2022optimal} shares the most similarity with ours, as they were the first to observe that many functional notions of differential privacy appear to be linked via Laplace or Fourier transforms. However, their work centers on using the characteristic function of privacy loss (in \eqref{eqn:zhu_char}) as an intermediate functional representation connecting various DP notions. In contrast, we examine the nature of these connections themselves to harness the perspective of Laplace transformations as an analytical toolkit for differential privacy.

Relevant studies on composition theorems for differential privacy include~\citet{dwork2010boosting, kairouz2015composition, murtagh2015complexity, bun2016concentrated, mironov2017renyi}, along with numerical accounting methods such as those by~\citet{gopi2021numerical, koskela2020computing, doroshenko2022connect}.

\textbf{Paper structure:} After reviewing preliminaries on DP and Laplace transforms in \autoref{sec:prelim}, we present an equivalent description of both the $\del_{P|Q}(\eps)$ privacy profile and the $\Ren{\q}{P}{Q}$-R\'enyi divergence curve in terms of a set of Laplace transforms of the privacy loss distribution's probability function in~\autoref{sec:characterization} and use them to connect the two notions. After discussing the implications of these connections, we provide our composition results for privacy profiles and $(\eps, \del)$-DP point guarantees in~\autoref{sec:composition}. Finally, in~\autoref{sec:subsampling} we discuss the problem of asymmetry in functional notions and an approach to handling it without breaking equivalences.

\section{Preliminaries}
\label{sec:prelim}
Here we introduce our notations, provide some background related to Differential Privacy and a short introduction to Laplace transforms.

\subsection{Background on Differential Privacy}
\label{ssec:dp}

For a data universe \(\X\), we consider datasets \(\D\) of size \(n\): \(\D = (x_0, \ldots, x_n) \in \X^n\) and algorithms \(\M: \X^n \rightarrow \OO\) that return a random output in space \(\OO\). We assume that \(\X\) includes a sentinel element \(\bot\), representing an empty entry, to simulate `add' and `remove' adjacency within `replace' adjacency. Two datasets \(\D\) and \(\D'\) in \(\X^n\) are considered \emph{neighboring} (denoted by \(\D \simeq \D'\)) if they differ by a single record replacement. Throughout the paper, we denote the distributions of the output random variables \(\M(\D)\) and \(\M(\D')\) as \(P\) and \(Q\), respectively, and focus our study on the indistinguishability behavior of these two distributions. For simplicity, we use the same symbols \(P\) and \(Q\) to refer to their probability mass or density functions.

\begin{definition}[$(\eps, \del)$-Differential Privacy and Privacy Profiles~\citep{dwork2014algorithmic, balle2020privacy}]
	\label{def:dp}
	Let $\eps \geq 0$ and $0\leq \del \leq 1$. A randomized algorithm ${\M}$ is $(\eps, \del)$-\emph{differentially private} (hereon $(\eps,\del)$-DP) 
	\begin{equation}
		\label{eqn:dp}
		\text{if}\quad \forall \D \simeq \D' \quad \text{and}\quad \forall S \subset \OO, \quad \mathbb{P}[\M(\D) \in S] \leq e^\eps \cdot \mathbb{P}[\M(\D') \in S] + \del.
	\end{equation}
	The \emph{privacy profile} of $\M$ on a pair of neighbours $\D\simeq \D'$ is the function $\del_{P|Q}(\eps)$ for $\eps \in \R$, where
	\begin{equation}
		\label{eqn:profile}
		\del_{P\vert Q}(\eps) \eqdef \sup_{S \subset \OO} P(S) - e^\eps Q(S) = \int_\OO \max\{0, P(\thet) - e^\eps \cdot Q(\thet)\}d\thet.
	\end{equation}
	For any $\eps > 0$, algorithm $\M$ \emph{tightly} satisfies $(\eps, \del)$-DP if $\del = \sup_{\D \simeq \D'} \del_{\M(\D)|\M(\D')}(\eps)$.
\end{definition}
\begin{remark}
	\label{rem:dp_reversal}
	The privacy profile \(\del_{P|Q}(\eps)\) is equivalent to the Hockey-stick divergence \(H_{e^\epsilon}(P \Vert Q)\). Our definition of the privacy profile allows \(\eps < 0\), in contrast to the original definitions of privacy profile by \citet{balle2020privacy} and of Hockey-stick divergence by \citet{sason2016f}, which consider only non-negative \(\eps\). Although permitting \(\eps < 0\) might seem counterintuitive, it is both accurate and efficient, as negative values of \(\eps\) yield the privacy profile with \(P\) and \(Q\) reversed.
	\begin{equation}
		\label{eqn:remark_profile}
 		\del_{Q\vert P}(\eps) = \sup_{S \subset \OO} Q(S) - e^\eps P(S) = 1 - e^{\eps} + e^{\eps}[\sup_{S \subset \OO} P(S^\complement) - e^{-\eps} Q(S^\complement)]  = 1 - e^{\eps} + e^{\eps} \del_{P\vert Q}(-\eps).
 	\end{equation}
\end{remark}

\begin{definition}[R\'enyi Differential Privacy~\citep{mironov2017renyi}]
	\label{def:rdp}
	Let $\q > 1$ and $\rhO \geq 0$. A randomized algorithm $\M$ is $(\q, \rhO)$-\emph{R\'enyi differentially private} (henceforth $(\q, \rhO)$-R\'enyi DP) if, for all datasets $\D \simeq \D'$, the $\q$-R\'enyi divergence satisfies $\Ren{\q}{\M(\D)}{\M(\D')} \leq \rhO$.
	For two distributions $P, Q$ over the same space, we define the R\'enyi divergence $\Ren{\q}{P}{Q}$ of any order $\q \in \mathrm{ROC}_{P, Q}$ as
	\begin{align}
		\label{eqn:rdp}
		\Ren{\q}{P}{Q} \eqdef \frac{1}{\q-1} \log \Eren{\q}{P}{Q}, \quad \text{ where }\quad
		\Eren{\q}{P}{Q} \eqdef \int_{\thet \in \OO} P(\thet)^\q Q(\thet)^{1-\q} d\thet,
	\end{align}
	and $\mathrm{ROC}_{P, Q}$ is the region consisting of all orders $\q \in \Com$ where the integral is \emph{conditionally convergent}.

\end{definition}
\begin{remark}
	\label{rem:rdp_reversal}
	It is known that R\'enyi divergence converges to Kullback-Leibler divergence as order $\q$ tends to one\footnote{It follows from the \emph{replica trick}: $\expec{}{\log X} = \lim_{n \rightarrow 0} \frac{1}{n} \log \expec{}{X^n}$, when $\log X$ is the privacy loss rv $\Z \leftarrow \privloss{P}{Q}$.}. Among real orders $\q \in \R$, privacy researchers typically restrict themselves to $\q > 1$ without thinking much about values smaller than $1$. Just like privacy profile $\del_{P|Q}(\eps)$, R\'enyi divergence for orders $\q < 1$ yield the R\'enyi divergence with $P$ and $Q$ reversed.
	\begin{align}
		e^{(\q - 1) \cdot \Ren{\q}{P}{Q}} = \Eren{\q}{P}{Q} = \int_{\thet \in \OO} P(\thet)^\q Q(\thet)^{1-\q} d\thet = \Eren{1-\q}{Q}{P} = e^{-\q \cdot \Ren{1-\q}{Q}{P}}.
	\end{align}
\end{remark}
\begin{definition}[$f$-Differential Privacy~\citep{dong2019gaussian}]
	\label{dfn:fdp}
	Let $f: [0, 1] \rightarrow [0, 1]$ be a convex, continuous, non-increasing function such that $f(x) \leq 1 - x$ for all $x \in [0, 1]$. A randomized algorithm $\M$ is \emph{$f$-differentially private} (henceforth $f$-DP) if 
	\begin{equation}
		\label{eqn:fdp_bound}
		\forall \alpha \in [0, 1] : f_{\M(\D)|\M(\D')}(\alpha) \geq f(\alpha),
	\end{equation}
	where for any distributions $P, Q$ on $\OO$, the $f_{P|Q}: [0, 1] \rightarrow [0, 1]$ is the trade-off function defined as
	\begin{equation}
		\label{eqn:fdp}
		\forall \alpha \in [0, 1] : f_{P|Q}(\alpha) \eqdef \inf_{\phi: \OO \rightarrow [0, 1] } \{\beta_\phi : \alpha_\phi \leq \alpha \},
	\end{equation}
	where $\alpha_\phi \eqdef \expec{P}{\phi}$, and $\beta_\phi \eqdef 1 - \expec{Q}{\phi}$.
\end{definition}
\begin{remark}
	\label{rem:f_reversal}
	For a hypothesis test \(\phi\) on a sample \(\thet\) originating from either \(P\) or \(Q\), we follow the convention that \(\thet \sim Q\) represents the positive event, and \(\phi(\thet) = 1\) denotes a positive prediction. With this convention, \(\alpha_\phi\) and \(\beta_\phi\) represent the \emph{false positive rate} (Type I error) and \emph{false negative rate} (Type II error), respectively. Note that the left-continuous inverse of the trade-off function $f_{P|Q}^{-1}(\beta)$ gives the trade-off curve with $P$ and $Q$ reversed.
	\begin{equation}
		f_{P|Q}^{-1}(\beta) := \{\inf \alpha \in [0, 1] : f_{P|Q}(\alpha) \leq \beta \} = \inf_{\phi: \OO \rightarrow [0, 1]} \{\alpha_\phi : \beta_\phi \leq \beta \} = f_{Q|P}(\beta).
	\end{equation}
\end{remark}

\begin{theorem}[Privacy of Gaussian Mechanism~\citep{dwork2014algorithmic,balle2018improving,mironov2017renyi}]
	\label{thm:gaussian_privacy}
	If $P = \mathcal{N}(\mu, \sigma^2 \Id)$ and $Q = \mathcal{N}(\mu', \sigma^2 \Id)$ are two multivariate Gaussian distributions on $\R^d$ such that $\kappa = \Vert \mu - \mu'\Vert_2^2/2\sigma^2$, then $\Ren{\q}{P}{Q} = \kappa \q$ for all $\q > 1$ and for all $\eps \in \R$~\footnote{\footnotesize While the original result by~\citet{balle2018improving} was stated only for non-negative $\eps$, the expression remains identical for $\eps < 0$ on invoking~\eqref{eqn:remark_profile}, thanks to the symmetry property $\overline\Phi(-x) = 1 - \overline\Phi(x)$ of normal distribution.}, 
	\begin{equation}
		\label{eqn:gaussian_privacy_profile}
		\del_{P|Q}(\eps) = \overline{\Phi}\left(\frac{\eps - \kappa}{\sqrt{2\kappa}}\right) - e^\eps \overline{\Phi}\left(\frac{\eps + \kappa}{\sqrt{2\kappa}}\right) = O(e^{-\eps^2/4\kappa}), \text{   where   } \overline\Phi(t) = \prob{G \sim \mathcal{N}(0,1)}{G > t}.
	\end{equation}
\end{theorem}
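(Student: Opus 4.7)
The plan is to reduce both claims to a one-dimensional Gaussian calculation involving the privacy loss random variable $\Z = \priv{P}{Q}(\thet)$, and then invoke the Laplace-style expressions from~\eqref{eqn:profile_common_expr} and~\eqref{eqn:renyi_common_expr}. The starting point is to identify the distribution of $\Z$ explicitly. A direct expansion of the log-density ratio gives $\log(P(\thet)/Q(\thet)) = \langle \thet - \tfrac{\mu+\mu'}{2},\, \mu - \mu'\rangle/\sigma^2$, which is an affine functional of $\thet$ and hence Gaussian whenever $\thet$ is. A short mean-variance computation yields $\Z \sim \mathcal{N}(\kappa, 2\kappa)$ under $\thet \sim P$, and by the same derivation $\Z \sim \mathcal{N}(-\kappa, 2\kappa)$ under $\thet \sim Q$. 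Everything downstream is a consequence of these two one-dimensional distributions.

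For the R\'enyi part, I would apply~\eqref{eqn:renyi_common_expr}: $e^{(\q-1)\Ren{\q}{P}{Q}}$ equals the moment generating function of $\mathcal{N}(\kappa, 2\kappa)$ evaluated at $\q - 1$, which is $\exp(\kappa(\q-1) + \kappa(\q-1)^2) = \exp(\kappa \q (\q-1))$. Taking logs and dividing by $\q-1$ gives $\Ren{\q}{P}{Q} = \kappa \q$. Since the Gaussian MGF is entire, this computation in fact holds for every $\q \in \Com$, matching the paper's broader observation that R\'enyi divergence admits complex orders. For the privacy profile, I would use definition~\eqref{eqn:profile} and identify the supremizing set: the integrand $P(\thet) - e^\eps Q(\thet)$ is positive precisely on $S^\ast = \{\thet : \Z(\thet) > \eps\}$, so $\del_{P|Q}(\eps) = \prob{\thet \sim P}{\Z > \eps} - e^\eps \prob{\thet \sim Q}{\Z > \eps}$. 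Standardizing each Gaussian tail via the distributions from Step~1 gives the stated closed form. The formula holds uniformly for all $\eps \in \R$ without a separate negative-$\eps$ case, since $\overline\Phi(-x) = 1 - \overline\Phi(x)$ automatically encodes the reversal identity of~\autoref{rem:dp_reversal}.

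For the asymptotic $O(e^{-\eps^2/4\kappa})$ bound, I would apply the Mills-ratio inequality $\overline\Phi(t) \leq e^{-t^2/2}/(t\sqrt{2\pi})$ to each summand, using the one-line algebraic identity $(\eps \pm \kappa)^2/(4\kappa) = \eps^2/(4\kappa) \pm \eps/2 + \kappa/4$. After the $e^\eps$ prefactor is absorbed into the second summand, both contributions carry the common factor $e^{-\eps^2/4\kappa - \kappa/4}$ modulated by a polynomially decaying prefactor in $\eps$, yielding the stated rate. The main ``obstacle'' is pure bookkeeping rather than substance: identifying the optimal set in the profile step is immediate because the log-density ratio is continuous and both Gaussians have full support, and convergence of the MGF integral for all complex orders is automatic since the Gaussian MGF is entire. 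Once the distribution of $\Z$ from Step~1 is in hand, no step of the proof requires genuine work.
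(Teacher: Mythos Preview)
The paper does not actually prove this theorem; it appears in the Preliminaries as a background result cited from \citet{balle2018improving}, \citet{mironov2017renyi}, and \citet{dwork2014algorithmic}, with no proof in the body or the appendix. Your argument is the standard one and is correct for the two main claims: the privacy loss $\Z = \langle \thet - (\mu+\mu')/2,\, \mu - \mu'\rangle/\sigma^2$ is $\mathcal{N}(\kappa, 2\kappa)$ under $P$ and $\mathcal{N}(-\kappa, 2\kappa)$ under $Q$, from which the R\'enyi formula follows via the Gaussian MGF and the privacy-profile formula by identifying the optimal set as $\{\Z > \eps\}$ and standardizing each tail.

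One small slip in your asymptotic step: after absorbing $e^\eps$ into the second summand, the common exponential factor is $e^{-\eps^2/4\kappa + \eps/2 - \kappa/4}$, not $e^{-\eps^2/4\kappa - \kappa/4}$. Your own identity $(\eps \pm \kappa)^2/(4\kappa) = \eps^2/(4\kappa) \pm \eps/2 + \kappa/4$ gives exponent $-\eps^2/4\kappa + \eps/2 - \kappa/4$ for the first summand, and the second summand picks up $e^{\eps}\cdot e^{-\eps/2} = e^{\eps/2}$ as well. So each term individually behaves as $\eps^{-1} e^{-\eps^2/4\kappa + \eps/2}$ up to constants; their difference drops one power of $\eps$ in the prefactor, but the $e^{\eps/2}$ survives. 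In particular $\del_{P|Q}(\eps)\cdot e^{\eps^2/4\kappa} \to \infty$ as $\eps \to \infty$, and the paper's $O(e^{-\eps^2/4\kappa})$ should be read as $\log \del_{P|Q}(\eps) \sim -\eps^2/4\kappa$ rather than as a strict big-O. Your bookkeeping glitch is therefore inconsequential to the substance, since the statement you are trying to match is itself informal.
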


\subsection{Background on Laplace Transforms}
\label{ssec:laplace_transform}

Laplace transform is an integral transform that maps time domain functions with real arguments $(t \in \R)$ to frequency domain functions with complex arguments ($s \in \Com$). The one-sided and two-sided Laplace transformations of a function $g(t)$ at complex frequency $s$ is defined respectively as
\begin{equation}
	\label{eqn:app_laplace_def}
	\UniLap{g(t)}(s) \eqdef \int_{0^+}^\infty e^{-st} g(t) dt,\quad \text{and}\quad \BiLap{g(t)}(s) \eqdef \int_{-\infty}^\infty e^{-st} g(t) dt,
\end{equation}
where $0^+$ is the shorthand notation for limit approaching $0$ from positive side. We can express two-sided Laplace transforms using one-sided transform as 
\begin{equation}
	\label{eqn:bi_to_unilap}
	\BiLap{f(t)}(s) = \UniLap{f(t)}(s)  + \int_{0^-}^{0^+} f(t)dt + \UniLap{f(-t)}(-s),
\end{equation}
where $\int_{0^-}^{0^+} f(t)dt$ may not be $0$ if $f$ has an \emph{impulse} (aka.  \emph{integrable singularity}) at $0$, informally defined to be an infinitely dense point but with a finite mass. 
\begin{remark}
	Conventionally, one-sided Laplace transform is defined to include point mass located at $0$ entirely (i.e., integration is from $0^-$ instead of $0^+$). For aligning with the conventions in differential privacy, our definition entirely excludes the point mass located at $0$ by convention. The Laplace transform properties presented in this paper are adjusted to reflect the same.
\end{remark}

The values of $s \in \Com$ for which the integrals in~\eqref{eqn:app_laplace_def} converges conditionally\footnote{Conditional convergence for $\UniLap{g}$ at $s \in \Com$ means that the limit \(\lim_{\gamma \rightarrow \infty} \int_{0^+}^{\gamma} e^{-st} g(t) \, \mathrm{d}t\) exists. Similarly,  the limit \(\lim_{\gamma \rightarrow \infty} \int_{-\gamma}^{\gamma} e^{-st} g(t) \, \mathrm{d}t\) should exist for $\BiLap{g}$ to be conditionally convergent at $s$.} is referred to as the \emph{region of (conditional) convergence} (ROC) of the respective transforms. This region is always a strip parallel to the imaginary axis $\omega i$ where $i = \sqrt{-1}$ and $\omega \in \R$, which follows from dominated convergence theorem~\citep{oppenhiem1996signals}. We denote the region of convergence for a function $g$ as $\mathrm{ROC}_{\mathcal{L}}\{g\}$ in case of one-sided Laplace transform and as $\mathrm{ROC}_{\mathcal{B}}\{g\}$ for two-sided Laplace transforms.

\paragraph{Uniqueness and Inversion.} Laplace transforms are unique \citep{cohen2007numerical}: if two continuous functions \(g\) and \(h\) share the same Laplace transform, then, \(g(t) = h(t)\) holds for all \(t \in \mathbb{R}^+\) in the case of a one-sided Laplace transform and for all \(t \in \mathbb{R}\) for a two-sided Laplace transform.
As a consequence of uniqueness, Laplace transform $\bar{g}(s) = \UniLap{g(t)}(s)$ can be inverted to get back $g(t)$ by applying an \emph{Inverse Laplace transform}~\citep{orloff2015uniqueness}, defined as
\begin{equation}
	\label{eqn:inverse_laplace}
	g(t) = \mathcal{L}^{-1}\{\bar{g}(s)\}(t) \eqdef \frac{1}{2\pi i} \lim_{\omega \rightarrow \infty} \int_{\gamma - i\omega}^{\gamma + i\omega} e^{st} \bar{g}(s)ds = \frac{1}{2\pi i} \int_{\gamma - i\infty}^{\gamma + i\infty} e^{st} \bar{g}(s)ds,
\end{equation}
where the integral is taken over the line consisting of all points $s$ with $\mathfrak{Re}(s) = \gamma$ for any $\gamma$ lying in the ROC of $\bar{g}(s)$. The formula~\eqref{eqn:inverse_laplace} also inverts two-sided Laplace transform $\BiLap{g(t)}$, as long as we choose $\gamma$ in the ROC of the two-sided transform~\citep{oppenhiem1996signals}.
\begin{remark}
	Uniqueness of Laplace transforms extends to discontinuous functions as well. If $g$ and $h$ are continuous \emph{almost everywhere}, i.e. the set where either isn't continuous has a total Lebesgue measure of zero, then $g(t) = h(t)$ almost everywhere in the respective time-domains. In such cases, it can be shown~\citep{dyke2001introduction} that the inversion formula~\eqref{eqn:inverse_laplace} gives
	\begin{equation}
		\frac{1}{2\pi i} \int_{\gamma - i\infty}^{\gamma + i\infty} e^{st} \bar{g}(s)ds = \frac{1}{2}[g(t^-) + g(t^+)].
	\end{equation}
\end{remark}

\paragraph{Relation to Fourier transform.} The Fourier transform $G(\omega)$ of a function $g$ is defined as
\begin{equation}
	G(\omega) = \int_{-\infty}^\infty e^{-i 2\pi\omega t } g(t)dt,
\end{equation}
which is the same as the two-sided laplace transform $\BiLap{g(t)}(s)$ for a purely imaginary $s = i 2\pi \omega$. As such, Fourier transform is seen as a \emph{special case} of Laplace transforms.

\paragraph{Properties of Laplace transforms.} The Laplace transformation is a very useful tool because a lot of operations in the time-domain correspond to simpler operations in the frequency domain and vice versa. For a detailed exposition on these properties, refer to~\citet{cohen2007numerical} for one-sided Laplace transform and to~\citet{oppenhiem1996signals} for the two-sided counterpart. In Appendix~\ref{app:prop_table}, we provide a table summarizing all the properties that we rely on in this paper. We reference properties of~\autoref{tab:laplaceTransformProp} throughout the paper using the notation $\overset{(m)}{=}$, where $(m)$ is the equation number of the used property.

\section{Laplace Transform Expressions of Differential Privacy}
\label{sec:characterization}

Differential privacy bounds the maximum divergence in the output distribution caused by including or omitting a data-point from the dataset. This principle is mirrored in the notion of \emph{privacy loss distribution} which expresses how much an algorithm's output reveals about the inclusion of a specific data-point.

\begin{definition}[Privacy Loss Distribution~\citep{sommer2018privacy}]
	\label{def:pld}
	The \emph{privacy loss} of an observation $\thet \in \OO$ from an algorithm $\M$, when comparing datasets $\D \simeq \D'$, is defined as $\priv{P}{Q}(\thet) \eqdef \log(P(\thet)/Q(\thet))$, where $P$ and $Q$ are the probability mass/density functions of $\M(\D)$ and $\M(\D')$, respectively.
	The \emph{privacy loss distribution} $\privloss{P}{Q}$ is the distribution of $\priv{P}{Q}(\Thet)$ when $\Thet \sim P$.
\end{definition}
\begin{remark}
	\label{rem:pld_reversal}
	Similar to other functional privacy notions, the PLD formalism possesses a reversal property~\citep[Definition 3.1]{gopi2021numerical}. Let \( f_\Z \) and \( f_{\Z'} \) represent the generalized density functions of the random variables \( \Z \leftarrow \privloss{P}{Q} \) and \( \Z' \leftarrow \privloss{Q}{P} \), respectively. Then
	\begin{equation}
		\forall z \in \R \ : f_{\Z}(z) = e^{z} \cdot f_{\Z'}(-z).
	\end{equation}
\end{remark}
The $\privloss{P}{Q}$ describes how outputs arising from $\D$ increase an observer's confidence that they did not come from $\D'$. Many prior works make use of the following set of DP expressions in terms of the privacy loss distribution~\citep{sommer2018privacy,balle2020hypothesis,balle2020privacy,koskela2020computing,gopi2021numerical,steinke2022composition,canonne2020discrete}. 
The privacy profile $\del_{P|Q}(\eps)$ is expressed as
\begin{equation}
	\label{eqn:dp_via_Z}
	\del_{P|Q}(\eps) = \expec{\privloss{P}{Q}}{1 - e^{\eps - \Z}}_+ = \expec{\privloss{Q}{P}}{e^{-\Z'} - e^\eps}_+ = \Pr_{\privloss{P}{Q}} [ \Z > \eps] - e^\eps \cdot \Pr_{\privloss{Q}{P}} [\Z' < - \eps],
\end{equation}
and the R\'enyi divergence $\Ren{\q}{P}{Q}$ is expressed as
\begin{equation}
	\label{eqn:dp_via_Z_ren}
	\Ren{\q}{P}{Q} = \frac{1}{\q-1}\log\expec{\privloss{P}{Q}}{e^{(\q-1)\cdot\Z}} = \frac{1}{\q-1}\log\expec{\privloss{Q}{P}}{e^{-\q \cdot \Z'}}.
\end{equation}
In the following theorem, we present a more dynamic version of these relationships by expressing them as a set of \emph{Laplace transforms} of the privacy loss distributions $\privloss{P}{Q}$ and $\privloss{Q}{P}$.
\begin{theorem}
	\label{thm:dp_as_laplace}
	For a random variable $X$, let $F_X(t) \eqdef \Pr[X \leq t]$ denote its \emph{cumulative distribution function} and $f_X(t)$ denote its \emph{generalized probability density function}\footnote{\footnotesize We define density as $f_X(t)dt = \lim_{a\rightarrow 0^+}\int_{t-a}^{t+a} F_X(u) du$ to handle cases where $F_X$ isn't differentiable everywhere, such as when PLD is a discrete distribution. This density is expressible with \emph{Dirac delta} $\dirac(t)$ as $f_X(t) = \begin{cases} \dot{F}_X(t) & \text{if derivative $\dot F_X$ exists at $t$} \\ [F_X(t^+) - F_X(t^-)] \dirac(t) & \text{otherwise} \end{cases}$, and satisfies $F_X(t) = \int_{-\infty}^{t^+}f_X(u)du$.}. 
	Let $P$ and $Q$ be probability distributions and $\Z \sim \privloss{P}{Q}$ and $\Z' \sim \privloss{Q}{P}$ denote their privacy loss random variables.
	If $\Z \sim \privloss{P}{Q}$ and $\Z' \sim \privloss{Q}{P}$, then for all $\eps \in \R$, 
	\begin{align}
		\del_{P|Q}(\eps) &= \UniLap{1 - F_\Z(t + \eps)}(1) \label{eqn:dp_as_laplace_1}\\
				 &= e^\eps \cdot \UniLap{F_{\Z'}(-t-\eps)} (-1) \label{eqn:dp_as_laplace_2}\\
				 &= e^\eps \cdot \UniLap{f_{\Z'}(-t-\eps)}(-1) - \UniLap{f_\Z(t + \eps)}(1) \label{eqn:dp_as_laplace_3} \\
				 &= \UniLap{f_\Z(t + \eps)}(0)  - e^\eps \cdot \UniLap{f_{\Z'}(-t-\eps)}(0). \label{eqn:dp_as_laplace_4}
	\end{align}
	And, for all $\q \in \mathrm{ROC}_\mathcal{B}\{f_{\Z'}\}$ (or equivalently, $1-\q \in \mathrm{ROC}_{\mathcal{B}}\{f_{\Z}\}$),
	\begin{align}
		e^{(\q - 1) \cdot \Ren{\q}{P}{Q}} = \Eren{\q}{P}{Q} = \mathcal{B}\{f_{\Z}(t)\}(1-\q) = \mathcal{B}\{f_{\Z'}(t)\}(\q). \label{eqn:rdp_as_laplace}
	\end{align}
\end{theorem}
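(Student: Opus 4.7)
The plan is to derive each identity by recognizing the standard expectation expressions~\eqref{eqn:dp_via_Z} and~\eqref{eqn:dp_via_Z_ren} as Laplace integrals after a change of variable. For~\eqref{eqn:dp_as_laplace_1} I would start from $\del_{P|Q}(\eps) = \int_\eps^\infty (1 - e^{\eps-z}) f_\Z(z)\, dz$ and integrate by parts with $u = 1 - e^{\eps - z}$ and $v = -(1 - F_\Z(z))$. Both boundary contributions vanish: at $z = \eps$ because $1 - e^0 = 0$, and at $\infty$ because $1 - F_\Z(z) \to 0$. What remains is $\int_\eps^\infty (1 - F_\Z(z))\,e^{\eps - z}\, dz$, which after the substitution $t = z - \eps$ is exactly $\UniLap{1 - F_\Z(t+\eps)}(1)$. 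For~\eqref{eqn:dp_as_laplace_2} I would apply the analogous integration by parts to the dual form $\del_{P|Q}(\eps) = \int_{-\infty}^{-\eps} (e^{-z'} - e^\eps) f_{\Z'}(z')\, dz'$ taken from~\eqref{eqn:dp_via_Z}. The only subtle step is the boundary at $z' \to -\infty$, which I would dispatch using the reversal identity $f_{\Z'}(v) = e^v f_\Z(-v)$ from~\autoref{rem:pld_reversal}; it yields the bound $e^{-z'} F_{\Z'}(z') \le \Pr[\Z \ge -z'] \to 0$. Substituting $t = -z' - \eps$ then produces $e^\eps \UniLap{F_{\Z'}(-t-\eps)}(-1)$.

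For~\eqref{eqn:dp_as_laplace_3} and~\eqref{eqn:dp_as_laplace_4} I would avoid integration by parts and instead evaluate each Laplace term directly, matching it against the closed form $\del_{P|Q}(\eps) = \Pr[\Z > \eps] - e^\eps \Pr[\Z' < -\eps]$ from~\eqref{eqn:dp_via_Z}. The substitution $u = t + \eps$ immediately gives $\UniLap{f_\Z(t+\eps)}(0) = \Pr[\Z > \eps]$ and, symmetrically, $\UniLap{f_{\Z'}(-t-\eps)}(0) = \Pr[\Z' < -\eps]$, which proves~\eqref{eqn:dp_as_laplace_4}. For~\eqref{eqn:dp_as_laplace_3} I would plug the reversal identity inside each integrand after substitution: $\UniLap{f_\Z(t+\eps)}(1) = e^\eps \int_\eps^\infty e^{-u} f_\Z(u)\, du = e^\eps \int_\eps^\infty f_{\Z'}(-u)\, du = e^\eps \Pr[\Z' < -\eps]$, and a symmetric computation yields $e^\eps \UniLap{f_{\Z'}(-t-\eps)}(-1) = \Pr[\Z > \eps]$; their difference is $\del_{P|Q}(\eps)$.

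For the R\'enyi identity~\eqref{eqn:rdp_as_laplace} I would factor the integrand two ways: $P^\q Q^{1-\q} = P(P/Q)^{\q-1}$ gives $\Eren{\q}{P}{Q} = \expec{\Z}{e^{(\q-1)\Z}} = \int_{\R} e^{-(1-\q)t} f_\Z(t)\, dt = \BiLap{f_\Z(t)}(1-\q)$, while $P^\q Q^{1-\q} = Q(Q/P)^{-\q}$ gives $\expec{\Z'}{e^{-\q \Z'}} = \BiLap{f_{\Z'}(t)}(\q)$. Equivalence of the two ROC conditions $1 - \q \in \mathrm{ROC}_{\mathcal{B}}\{f_\Z\}$ and $\q \in \mathrm{ROC}_{\mathcal{B}}\{f_{\Z'}\}$ then follows by applying the time-reversal and exponential-modulation properties of the bilateral Laplace transform in~\autoref{tab:laplaceTransformProp} to $f_\Z(t) = e^t f_{\Z'}(-t)$. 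The main obstacle is the careful treatment of the generalized densities $f_\Z$ and $f_{\Z'}$, which carry Dirac-delta atoms whenever $P$ and $Q$ fail to be mutually absolutely continuous; I would handle this by interpreting each $\int g(z) f_\Z(z)\, dz$ as the Lebesgue--Stieltjes integral $\int g(z)\, dF_\Z(z)$, so that the integration-by-parts, substitution, and boundary-vanishing steps above all remain valid in the presence of such atoms.
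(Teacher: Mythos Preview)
Your proposal is correct and takes essentially the same approach as the paper: both unpack the expectation expressions~\eqref{eqn:dp_via_Z} and~\eqref{eqn:dp_via_Z_ren} into explicit integrals over the privacy loss variable and recognize them as Laplace transforms after a change of variable, using integration by parts (equivalently the derivative property~\eqref{eq:Differentiation}) to pass between density-based and CDF-based forms. The only cosmetic difference is ordering---the paper establishes~\eqref{eqn:dp_as_laplace_3} first by computing $P(S^*_{>\eps})$ and $Q(S^*_{>\eps})$ separately and then obtains~\eqref{eqn:dp_as_laplace_4} as the linear combination $\eqref{eqn:dp_as_laplace_1}+\eqref{eqn:dp_as_laplace_2}-\eqref{eqn:dp_as_laplace_3}$, whereas you derive~\eqref{eqn:dp_as_laplace_4} directly and recover~\eqref{eqn:dp_as_laplace_3} via the reversal identity of~\autoref{rem:pld_reversal}; the underlying manipulations are the same.
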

Laplace expressions in~\autoref{thm:dp_as_laplace} often arise in their explicit integral forms within several proofs in related works on differential privacy, for instance~\citep[Lemma 9]{canonne2020discrete}, \citep[Proposition 7]{steinke2022composition}, and~\citep[Theorem 5]{balle2018improving}. In their integral forms, they frequently undergo manipulations like integration-by-parts or change-of-variables which can quickly get complicated. Our~\autoref{thm:dp_as_laplace} offers a way to simplify the complexity of such manipulations as one can express the concerned terms in their Laplace expressions and invoke its properties from~\autoref{tab:laplaceTransformProp}, like~\eqref{eq:TimeShifting} and~\eqref{eq:Scaling} for shifting or scaling variable of integration and~\eqref{eq:Differentiation} and~\eqref{eq:BilatIntegration} for integrating-by-parts. Examples in this paper will illustrate that reasoning about privacy this way through its Laplace transform interpretation could be quite effective. 

In the following theorem, we show that the R\'enyi divergence $\Ren{\q}{P}{Q}$ and the privacy profile $\del_{P|Q}(\eps)$ are connected through a Laplace transform as well. We can show this using only the expressions in~\autoref{thm:dp_as_laplace}.

\begin{theorem}[R\'enyi DP from privacy profile]
	\label{thm:dp_profile_to_rdp}
	Let $\q >1$. For any two distributions $P$ and $Q$, 
	\begin{equation}
		\label{eqn:dp_profile_to_rdp}
		e^{(\q-1) \cdot \Ren{\q}{P}{Q}} = \Eren{\q}{P}{Q} = \q(\q-1) \cdot \BiLap{\del_{P|Q}(t)}(1-\q),
	\end{equation}
	for all orders $\q$ such that $1-\q \in \mathrm{ROC}_\mathcal{B}\{\del_{P|Q}\}$.
\end{theorem}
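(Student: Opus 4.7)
The plan is to relate $\del_{P|Q}(\eps)$ to the density $f_\Z$ of the privacy loss random variable through successive differentiations, then transfer that relation to the frequency domain via the differentiation property of the bilateral Laplace transform, and finally substitute $s = 1-\q$ to invoke the Laplace expression for $\Eren{\q}{P}{Q}$ from \eqref{eqn:rdp_as_laplace}.

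First I would start from the integral representation
\begin{equation*}
\del_{P|Q}(\eps) \;=\; \int_{\eps}^{\infty} f_\Z(z)\,\dif{z} \;-\; e^\eps \int_{\eps}^{\infty} e^{-z} f_\Z(z)\,\dif{z},
\end{equation*}
which is just the expectation form $\expec{\Z}{\max\{0,1-e^{\eps-\Z}\}}$ written out. Differentiating once in $\eps$, the two boundary contributions $\pm f_\Z(\eps)$ cancel, giving $\del_{P|Q}'(\eps) = -e^\eps \int_{\eps}^{\infty} e^{-z} f_\Z(z)\,\dif{z}$. Differentiating a second time yields the clean identity $f_\Z(\eps) = \del_{P|Q}''(\eps) - \del_{P|Q}'(\eps)$, which expresses the PLD density purely in terms of derivatives of the privacy profile.

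Next, I would apply the bilateral Laplace transform in $\eps$ to both sides and use the differentiation property $\BiLap{\dot g}(s) = s\,\BiLap{g}(s)$ on each term, obtaining
\begin{equation*}
\BiLap{f_\Z}(s) \;=\; s^2\,\BiLap{\del_{P|Q}}(s) \;-\; s\,\BiLap{\del_{P|Q}}(s) \;=\; s(s-1)\,\BiLap{\del_{P|Q}}(s).
\end{equation*}
Substituting $s = 1-\q$ makes $s(s-1) = (1-\q)(-\q) = \q(\q-1)$, and combining with the identity $\Eren{\q}{P}{Q} = \BiLap{f_\Z}(1-\q)$ from \eqref{eqn:rdp_as_laplace} delivers the desired equation. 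As a sanity check, the same relation can be obtained by a direct Fubini swap: writing $\BiLap{\del_{P|Q}}(s) = \expec{\Z}{\int_{-\infty}^{\Z} e^{-st}(1 - e^{t-\Z})\,\dif{t}}$ and evaluating the inner integral produces the factor $e^{-s\Z}/[s(s-1)]$, which on taking expectation over $\Z$ yields $\BiLap{f_\Z}(s)/[s(s-1)]$.

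The main obstacle is justifying the vanishing of the boundary terms when invoking the differentiation property of $\BiLap{\cdot}$. Since $\del_{P|Q}(\eps) \to 0$ as $\eps \to +\infty$ but $\del_{P|Q}(\eps) \to 1$ as $\eps \to -\infty$, the weight $e^{-s\eps}$ must suppress both ends. This requires $\mathfrak{Re}(s) < 0$ to handle the non-vanishing left tail, together with enough decay of $\del_{P|Q}$ at $+\infty$ to tame the growing right-tail exponential. These are exactly the conditions encoded by $1-\q \in \mathrm{ROC}_{\mathcal{B}}\{\del_{P|Q}\}$, and they are consistent with the hypothesis $\q > 1$ (which gives $\mathfrak{Re}(1-\q) < 0$); so the argument goes through on the assumed region of convergence.
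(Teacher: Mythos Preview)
Your proof is correct and lands on the same identity by the same mechanism as the paper, but with the order of operations flipped. The paper starts from the mixed representation $\del_{P|Q}(\eps) = 1 - F_\Z(\eps) - e^\eps F_{\Z'}(-\eps)$, takes the bilateral Laplace transform first, and then applies the derivative property \eqref{eq:Differentiation} in the frequency domain to convert $\BiLap{1-F_\Z}(1-\q)$ and $\BiLap{F_{\Z'}}(\q)$ into $\BiLap{f_\Z}(1-\q)/(\q-1)$ and $\BiLap{f_{\Z'}}(\q)/\q$; it then uses $\BiLap{f_\Z}(1-\q)=\BiLap{f_{\Z'}}(\q)=\Eren{\q}{P}{Q}$ to combine the two fractions into $1/[\q(\q-1)]$. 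You instead differentiate twice in the time domain to obtain $f_\Z = \ddot\del_{P|Q} - \dot\del_{P|Q}$ directly, and only then transform. Your route is a bit more economical since it never touches the reverse variable $\Z'$, and the time-domain identity you derive is in fact what the paper later exploits in its composition theorem~\eqref{eqn:compose_privacy_profile}; the paper's route, by contrast, exercises more of the Laplace-property table (frequency-shifting, reversal, differentiation), which suits its expository aim. Your Fubini sanity check is a third, self-contained derivation and is arguably the cleanest of all.
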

\begin{proof}
	Let $\Z \sim \privloss{P}{Q}$ and $\Z' \sim \privloss{Q}{P}$. From~\eqref{eqn:dp_as_laplace_4}, 
	\begin{align}
		\del_{P|Q}(\eps) &= \UniLap{f_\Z(t+\eps)}(0) - e^\eps \cdot \UniLap{f_{\Z'}(-t-\eps)}(0) \\
				 &= \int_{0^+}^\infty e^0 \cdot f_\Z(t + \eps)\mathrm{d}t - e^\eps \cdot \int_{0^+}^\infty e^0 \cdot f_{\Z'}(-t-\eps)\mathrm{d}t \\
				 &= 1 - F_\Z(\eps) - e^\eps \cdot F_{\Z'}(-\eps). \label{eqn:dp_pld}
	\end{align}
	We apply the linearity, time-shifting, and reversal properties of two-sided Laplace transforms to simplify the Laplace transform of privacy profile as follows
	\begin{align}
		\BiLap{\del_{P|Q}(t)}(1-\q) &= \BiLap{1-F_\Z(t) - e^t \cdot F_{\Z'}(-t)}(1-\q) \\
					    &\overset{\eqref{eq:Linearity}}{=} \BiLap{1-F_\Z(t)}(1-\q) - \BiLap{e^t \cdot F_{\Z'}(-t)}(1-\q) \\
					    &\overset{\eqref{eq:FrequencyShifting}}{=} \BiLap{1-F_\Z(t)}(1-\q) - \BiLap{F_{\Z'}(-t)}(-\q) \\
					    &\overset{\eqref{eq:Reversal}}{=} \BiLap{1-F_\Z(t)}(1-\q) - \BiLap{F_{\Z'}(t)}(\q). 
	\end{align}
	Next, we apply the derivative property of Laplace transforms to get
	\begin{equation}
		\BiLap{1-F_\Z(t)}(1-\q) \overset{\eqref{eq:Differentiation}}{=} \frac{\BiLap{f_\Z(t)}(1-\q)}{\q-1}\quad \text{and}\quad \BiLap{F_{\Z'}(t)}(\q) \overset{\eqref{eq:Differentiation}}{=} \frac{\BiLap{f_{\Z'}}(\q)}{\q}.
	\end{equation}
	Finally, noting from Theorem~\ref{thm:dp_as_laplace} and~\eqref{eqn:rdp} that $\BiLap{f_\Z}(1-\q) = \BiLap{f_\Zc}(\q) = \Eren{\q}{P}{Q}$, we have
	\begin{equation}
		\BiLap{\del_{P|Q}(t)}(1-\q) = \Eren{\q}{P}{Q} \left[\frac{1}{\q-1} - \frac{1}{\q}\right] = \frac{e^{(\q-1)\Ren{\q}{P}{Q}}}{\q(\q-1)}.
	\end{equation}
\end{proof}

The privacy profile $\del_{P|Q}$, for any pair of distributions $P,Q$, is a continuous function\footnote{We can write $\del_{P|Q}(\eps) = \sup_S \psi_S(\eps)$ where $\psi_S(\eps) := P(S) - e^\eps \cdot Q(S)$ is a continuous decreasing function. Therefore, the supremum of $\phi_S$ over all $S$ is also a continuous decreasing function.}. Therefore, by Lerch's theorem (cf. discussion on uniqueness in~\autoref{ssec:laplace_transform}), \emph{any two privacy profiles share the same R\'enyi divergence curve if and only if they are identical}. In other words, a privacy profile $\del_{P|Q}$ is equivalent to its R\'enyi divergence curve $\Ren{\q}{P}{Q}$, provided the Laplace transform $\BiLap{\del_{P|Q}}(1-\q)$ exists at some $\q \in \Com$, i.e., $\mathrm{ROC}_\mathcal{B}\{\del_{P|Q}\} \neq \emptyset$.

Note that \autoref{thm:dp_profile_to_rdp} establishes a connection between R\'enyi divergence and privacy profile that applies to \emph{all distributions $P$ and $Q$}, whether $P$ is absolutely continuous\footnote{$P$ is absolutely continuous with respect to $Q$ if for all measurable subsets $S \subset \OO$, $P(S) > 0 \implies Q(S) > 0$. We say that distribution pair $(P,Q)$ is absolutely continuous (w.r.t. each other) if $P \ll Q$ and $Q \ll P$.} (denoted as $P \ll Q$) with respect to $Q$ or not. The choice of $P, Q$ however influences the region of convergence $\mathrm{ROC}_\mathcal{B}\{\del_{P|Q}\}$ where the R\'enyi divergence can be defined. One can verify that if $P \ll Q$, the Laplace transform $\BiLap{\del_{P|Q}}(1-\q)$ converges for all real $\q > 1$; and if $Q \ll P$, it converges for all $\q < 1$, except for $\q = 0$. At $\q = 0$ or $q = 1$, the transform does not converge as the expression for $\del_{P|Q}(\eps)$ has singularities at these points because numerator $\Eren{\q}{P}{Q} = \int P^\q Q^{1-\q} \dif{\thet} = 1$ when $\q = 0$ or $1$ while denominator becomes zero. Since region of convergence is always a strip in the complex plane $\Com$ parallel to the imaginary line, the R\'enyi divergence exists not just for real orders $\q$, but also for imaginary orders: $\{\q \in \Com \ : \ \mathfrak{Re}(\q) > 1\}$ when $P \ll Q$ and $\{\q \in \Com \ : \ \mathfrak{Re}(\q) < 1 \ \text{and}\ \mathfrak{Re}(\q) \neq 0 \}$ when $Q \ll P$. Therefore, as long as either $P \ll Q$ \emph{or} $Q \ll P$, the region $\mathrm{ROC}_\mathcal{B}\{\del_{P|Q}\}$ is not empty and a characterizing curve $\Ren{\q}{P}{Q}$ exists for the profile $\del_{P|Q}(\eps)$. That is to say,~\autoref{thm:dp_profile_to_rdp} can be used to derive the exact privacy profile $\del_{P|Q}$ using the R\'enyi divergence $\Ren{\q}{P}{Q}$ function. We can do this by substituting $1 - \q = s$ in~\eqref{eqn:dp_profile_to_rdp}, rearranging, and applying the \emph{inverse Laplace transform}~\eqref{eqn:inverse_laplace}, to get the following explicit form:
\begin{align}
	\label{eqn:privacy_prof_inverse_laplace}
	\del_{P|Q}(\eps) &= \mathcal{L}^{-1} \left\{\frac{e^{-s\Ren{1-s}{P}{Q}}}{s(s-1)}\right\}(\eps) = \frac{1}{2\pi i} \lim_{\omega\rightarrow \infty} \int_{\gamma -i \omega}^{\gamma +i \omega} e^{s \eps} \cdot\frac{e^{-s\Ren{1-s}{P}{Q}}}{s(s-1)} ds,
\end{align}
where $\gamma \in \R$ can be \emph{any} real point in $\mathrm{ROC}_\mathcal{B}\{\del_{P|Q}\}$. 

\begin{figure}[t]
\centering
\begin{tikzpicture}
    \begin{axis}[
        width=12cm,
        height=8cm,
        axis x line=middle,
        axis y line=middle,
        xmin=-3, xmax=5,
        ymin=-5, ymax=5,
        xlabel={$\gamma$ (Re)},
        ylabel={$\omega$ (Im)},
	title={Region of Convergence of $\BiLap{\del_{P|Q}}(1-\q)$ for $P \ll Q$ and $Q \ll P$ at R\'enyi order $\q = \gamma + i \omega$},
        xtick={-2, -1, 0, 1, 2, 3, 4},
        ytick={-4, -2, 0, 2, 4},
        legend style={/tikz/every even column/.append  style={column sep=0.5cm}, at={(0.5,-0.05)}, anchor=north, legend columns=-1}
    ]

        \addplot[fill=blue, opacity=0.2, area legend] coordinates {(1, -5) (3, -5) (3, 5) (1, 5) (1, -5)};
        \addlegendentry{$P \ll Q$: $\mathfrak{Re}(q) > 1$}

        \addplot[fill=red, opacity=0.2, area legend] coordinates {(-1, -5) (1, -5) (1, 5) (-1, 5) (-1, -5)};
	\addlegendentry{$Q \ll P$: $\mathfrak{Re}(q) < 1$ except $\mathfrak{Re}(q) = 0$ }

        \node at (axis cs:0,0) {\huge$\times$};
	\addlegendentry{Singularity}
        \node at (axis cs:1,0) {\huge$\times$};

        \addplot[dashed, ultra thick, black] coordinates {(1, -5) (1, 5)};
        \addplot[dashed, ultra thick, black] coordinates {(0, -5) (0, 5)};

        \node[anchor=east] at (axis cs:-1,2.5) {To $-\infty$};
        \draw [arrows = {-Latex[width=10pt, length=10pt]}] (-1,2) -- (-2,2);

        \node[anchor=east] at (axis cs:-1,-1.5) {To $-\infty$};
        \draw [arrows = {-Latex[width=10pt, length=10pt]}] (-1,-2) -- (-2,-2);

        \node[anchor=east] at (axis cs:4,2.5) {To $\infty$};
        \draw [arrows = {-Latex[width=10pt, length=10pt]}] (3,2) -- (4,2);

        \node[anchor=east] at (axis cs:4,-1.5) {To $\infty$};
        \draw [arrows = {-Latex[width=10pt, length=10pt]}] (3,-2) -- (4,-2);

    \end{axis}

\end{tikzpicture}
\end{figure}

\textbf{Case study.} The following example demonstrates an application of the Laplace transform identity in~\eqref{thm:dp_profile_to_rdp}. We first describe the privacy profile of the randomized response mechanism~\citep{kairouz2015composition} and then use~\eqref{eqn:dp_profile_to_rdp} on it to reason about its R\'enyi divergence characteristics.
\begin{theorem}[Privacy profile of randomized response]
	\label{thm:rr_privacy_profile}
	Fix $\eps > 0$ and $\del \in [0, 1]$.
	Let $\M_\mathrm{RR}^{\eps, \del} : \{0, 1\} \rightarrow \{0, 1\} \times \{\bot, \top\} $ be the randomized response mechanism, which has the following output probabilities.
	\begin{equation}
		\label{eqn:rr_defn}
		\M_\mathrm{RR}^{\eps,\del}(0) = \begin{cases}
			(0, \bot) & \text{with probability}\ \del,\\ (0, \top) & \text{with probability}\ \frac{(1-\del)e^{\eps}}{e^\eps+1},\\
			(1, \top) & \text{with probability}\ \frac{(1-\del)}{e^\eps+1}, \\
			(1, \bot) & \text{with probability}\ 0,
		\end{cases}
		\M_\mathrm{RR}^{\eps,\del}(1) = \begin{cases}
			(0, \bot) & \text{with probability}\ 0,\\ 
			(0, \top) & \text{with probability}\ \frac{(1-\del)}{e^\eps+1},\\
			(1, \top) & \text{with probability}\ \frac{(1-\del)e^{\eps}}{e^\eps+1}, \\
			(1, \bot) & \text{with probability}\ \del.
		\end{cases}
	\end{equation}
	For $P = \M_\mathrm{RR}^{\eps,\del}(0)$ and $Q = \M_\mathrm{RR}^{\eps, \del}(1)$, the privacy profiles are
	\begin{equation}
		\label{eqn:rr_privacy_profile}
		\forall t \in \R \ : \ \del_\mathrm{RR} := \del_{P|Q}(t) = \del_{Q|P}(t) =
		\begin{cases}
			\del & \text{if}\ \eps < t, \\
			1 - \frac{e^t+1}{e^\eps + 1}(1-\del) &\text{if} \ -\eps < t \leq \eps, \\ 
			1-e^t(1 - \del) & \text{if} \ t \leq -\eps.
		\end{cases}
	\end{equation}
\end{theorem}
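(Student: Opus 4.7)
The plan is to evaluate the integral form of the privacy profile given in Definition \ref{def:dp}, namely $\del_{P|Q}(t) = \sum_{\omega} \max\{0, P(\omega) - e^t Q(\omega)\}$, by direct enumeration over the four outcomes $(b, c) \in \{0, 1\} \times \{\bot, \top\}$. Since the output space is finite, no Laplace-transform machinery is needed here; the result will serve as input for a later case-study analysis via Theorem \ref{thm:dp_profile_to_rdp}.

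First I would tabulate the four differences $P(\omega) - e^t Q(\omega)$ using the probabilities in \eqref{eqn:rr_defn}:
\begin{align*}
 (0, \bot) &: \del - 0 = \del, \\
 (0, \top) &: (1-\del)(e^\eps - e^t)/(e^\eps + 1), \\
 (1, \top) &: (1-\del)(1 - e^{t+\eps})/(e^\eps + 1), \\
 (1, \bot) &: 0 - e^t \del = -e^t \del.
\end{align*}
The outcome $(1, \bot)$ is always non-positive and the outcome $(0, \bot)$ always contributes $\del$ (assuming $\del \geq 0$). The outcome $(0, \top)$ contributes a positive amount precisely when $t < \eps$, and $(1, \top)$ contributes a positive amount precisely when $t < -\eps$. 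These thresholds carve the real line into the three regimes $t > \eps$, $-\eps < t \leq \eps$, and $t \leq -\eps$ listed in the statement.

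Second, I would sum the active contributions in each regime and simplify. In the regime $t > \eps$, only $(0, \bot)$ is active, giving $\del$. In the regime $-\eps < t \leq \eps$, summing the $(0, \bot)$ and $(0, \top)$ contributions and clearing denominators gives
\[
 \del + \frac{(1-\del)(e^\eps - e^t)}{e^\eps + 1} = \frac{\del(e^\eps + 1) + (1-\del)(e^\eps - e^t)}{e^\eps + 1} = 1 - \frac{(1-\del)(e^t + 1)}{e^\eps + 1},
\]
matching the middle case. For $t \leq -\eps$, adding the $(1, \top)$ contribution to the above and using $(e^\eps - e^t) + (1 - e^{t+\eps}) = (e^\eps + 1) - e^t(1 + e^\eps)$ collapses the expression to $\del + (1-\del)(1 - e^t) = 1 - e^t(1 - \del)$, matching the third case. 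This is mostly bookkeeping; continuity at $t = \pm \eps$ is automatic since the active-outcome boundaries were chosen so that the newly activated term vanishes at the boundary.

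Third, for the symmetry assertion $\del_{P|Q}(t) = \del_{Q|P}(t)$, I would exhibit an involution of the output space that swaps $P$ and $Q$. Concretely, the map $\pi : (b, c) \mapsto (1-b, c)$ satisfies $Q(\omega) = P(\pi(\omega))$ and $P(\omega) = Q(\pi(\omega))$ by inspection of \eqref{eqn:rr_defn}. Re-indexing the sum defining $\del_{Q|P}(t)$ via $\omega \mapsto \pi(\omega)$ then yields $\del_{P|Q}(t)$, so the two profiles coincide.

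The main obstacle is simply keeping the case-analysis tidy and verifying that the algebra in the middle regime rearranges to the stated closed form; there is no conceptual difficulty beyond careful arithmetic.
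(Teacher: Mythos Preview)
Your proposal is correct and follows essentially the same approach as the paper: both compute $\del_{P|Q}(t)$ directly from its definition by determining, for each of the four outcomes, whether $P(\omega)-e^tQ(\omega)$ is positive, which partitions $\R$ into the three regimes $t>\eps$, $-\eps<t\le\eps$, and $t\le-\eps$, and then summing the active contributions in each regime. Your symmetry argument via the bit-flip involution $\pi:(b,c)\mapsto(1-b,c)$ makes explicit what the paper simply attributes to the symmetry of the probability table.
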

\autoref{thm:rr_privacy_profile} extends \citet[Theorem 2]{balle2020privacy}, which provides the privacy profile for randomized response only in the case where \( \delta = 0 \) and for values \( t \geq 0 \). From~\eqref{eqn:rr_defn}, we can see that neither $P \ll Q$ nor $Q\ll P$ holds for the output distributions of randomized response mechanism when $\del > 0$. As such, the Laplace transform $\BiLap{\del_{P|Q}}(1-\q)$ must not converge for any $\q \in \R$. One can check that when $\q \geq 1$, the transform has a term $\int_{\eps}^\infty e^{(q-1)t} \cdot \delta\mathrm{d}t$ which blows up to $\infty$, and when $\q < 1$, the transform has a term $\int_{-\infty}^{-\eps} e^{(\q-1)t} \cdot (1 - e^t(1-\del)) \mathrm{d}t$ that blows up to $-\infty$. Hence, the R\'enyi divergence of the privacy profile~\eqref{eqn:rr_privacy_profile} cannot be defined for any order $\q \in \R$ when $\del > 0$.

When $\del = 0$, note that both $P \ll Q$ and $Q \ll P$. Therefore, the Laplace transform $\del_{P|Q}(\eps)$ must exist for all $\q \in \R \setminus \{0, 1\}$. The following theorem shows the resulting R\'enyi divergence curve, derived by computing the Laplace transform.
\begin{theorem}[R\'enyi DP of $(\eps, 0)$-Randomized Response]
	\label{thm:rdp_of_rr}
	For any $\eps > 0$ and $\del = 0$, the output distributions of randomized response mechanism in~\autoref{thm:rr_privacy_profile} exhibit a R\'enyi divergence
	\begin{equation}
		\label{eqn:rdp_of_rr}
		\forall \q \in \Com \ \text{s.t.} \ \mathfrak{Re}(\q) \not\in\{0, 1\} : \Ren{\q}{P}{Q} = \frac{1}{\q-1} \log \left(\frac{e^\eps}{1+e^{\eps}} e^{-\q\eps} + \frac{1}{1+e^{\eps}} e^{\q\eps}\right).
	\end{equation}
\end{theorem}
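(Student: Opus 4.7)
The plan is to apply the Laplace identity $e^{(\q-1)\Ren{\q}{P}{Q}} = \q(\q-1) \cdot \BiLap{\del_{P|Q}(t)}(1-\q)$ from~\autoref{thm:dp_profile_to_rdp} directly to the piecewise privacy profile given by~\autoref{thm:rr_privacy_profile} at $\del = 0$. When $\del = 0$, both $P$ and $Q$ are supported on $\{(0,\top),(1,\top)\}$, so they are mutually absolutely continuous and the region of convergence $\mathrm{ROC}_\mathcal{B}\{\del_{P|Q}\}$ is guaranteed to be nonempty. I would work in the strip $\mathfrak{Re}(s) < 0$ (equivalently $\mathfrak{Re}(\q) > 1$), which is where both summands of the left-tail integrand $e^{-st}(1 - e^t)$ decay at $-\infty$.

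Setting $s = 1 - \q$, I would split $\BiLap{\del_{P|Q}}(s)$ into its two nonzero pieces---the middle slab $\int_{-\eps}^{\eps} e^{-st}\bigl(1 - \tfrac{e^t+1}{e^\eps+1}\bigr)\,\dif{t}$ and the left tail $\int_{-\infty}^{-\eps} e^{-st}(1 - e^t)\,\dif{t}$---evaluate each using the antiderivatives $-\tfrac{1}{s}e^{-st}$ and $\tfrac{1}{1-s}e^{(1-s)t}$, and collect the result by the two resulting denominators. A clean cancellation is expected: after grouping the $\tfrac{1}{s}$ and $\tfrac{1}{1-s}$ coefficients, both bracketed expressions collapse to $-\tfrac{e^{s\eps} + e^{(1-s)\eps}}{e^\eps + 1}$, and the partial-fraction identity $\tfrac{1}{s} + \tfrac{1}{1-s} = \tfrac{1}{s(1-s)}$ consolidates them into
\begin{equation*}
	\BiLap{\del_{P|Q}(t)}(s) = \frac{e^{s\eps} + e^{(1-s)\eps}}{s(s-1)(e^\eps+1)}.
\end{equation*}
Substituting $s = 1-\q$ back in and multiplying by $\q(\q-1)$ then gives $e^{(\q-1)\Ren{\q}{P}{Q}} = \tfrac{e^{(1-\q)\eps} + e^{\q\eps}}{e^\eps+1} = \tfrac{e^\eps}{e^\eps+1}e^{-\q\eps} + \tfrac{1}{e^\eps+1}e^{\q\eps}$; taking the logarithm and dividing by $\q-1$ yields the claimed R\'enyi divergence.

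To cover the full stated domain $\{\q \in \Com : \mathfrak{Re}(\q) \notin \{0,1\}\}$---particularly the strip $\mathfrak{Re}(\q) < 1$ where the bilateral Laplace integral itself diverges at the left tail---I would invoke analytic continuation: the definition~\eqref{eqn:rdp} gives $\Eren{\q}{P}{Q}$ as the two-term sum $(\tfrac{e^\eps}{e^\eps+1})^\q (\tfrac{1}{e^\eps+1})^{1-\q} + (\tfrac{1}{e^\eps+1})^\q (\tfrac{e^\eps}{e^\eps+1})^{1-\q}$, which is entire in $\q$. The equality derived in the strip $\mathfrak{Re}(\q) > 1$ therefore extends uniquely, with the excluded lines $\mathfrak{Re}(\q) \in \{0,1\}$ corresponding precisely to the simple poles of the $\tfrac{1}{\q(\q-1)}$ prefactor in~\autoref{thm:dp_profile_to_rdp}. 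The main obstacle is clerical---keeping signs and boundary contributions at $t = \pm\eps$ aligned so the cancellation materializes---and this is facilitated by the continuity of the profile at $t = -\eps$, equivalently the identity $\tfrac{e^{-\eps}+1}{e^\eps+1} = e^{-\eps}$.
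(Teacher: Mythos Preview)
Your proposal is correct and follows essentially the same route as the paper: apply \autoref{thm:dp_profile_to_rdp} to the piecewise profile of \autoref{thm:rr_privacy_profile} at $\del=0$, split $\BiLap{\del_{P|Q}}(1-\q)$ into the left-tail piece $\int_{-\infty}^{-\eps}e^{(\q-1)t}(1-e^t)\,\dif{t}$ and the slab $\int_{-\eps}^{\eps}e^{(\q-1)t}\tfrac{e^\eps-e^t}{1+e^\eps}\,\dif{t}$, evaluate the antiderivatives, and collapse via $\tfrac{1}{\q-1}-\tfrac{1}{\q}=\tfrac{1}{\q(\q-1)}$. Your handling of the extension beyond $\mathfrak{Re}(\q)>1$ by analytic continuation of the entire function $\Eren{\q}{P}{Q}$ is in fact cleaner than the paper's somewhat terse appeal to dominated convergence, since the bilateral integral genuinely diverges on the left tail once $\mathfrak{Re}(\q)\leq 1$.
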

\autoref{thm:rdp_of_rr} generalizes \citet[Proposition 5]{mironov2017renyi}, which gives the R\'enyi divergence of randomized response only for real orders \( q > 1 \). In the following section, we elaborate on the significance of complex orders in R\'enyi divergence.
\autoref{fig:dp_rdp_dominance} visualizes the privacy profile $\del_{P|Q}$, its Laplace transform $\BiLap{\del_{P|Q}}$, and the corresponding R\'enyi divergence $\Ren{\q}{P}{Q}$ of this randomized response mechanism, and compares it with that of Gaussian mechanism~(cf.~\autoref{thm:gaussian_privacy}).

\subsection{Dominance: R\'enyi Divergence vs. Privacy Profile}
\label{ssec:dominance}

\begin{figure}[t]
	\centering
	\includegraphics[width=\linewidth]{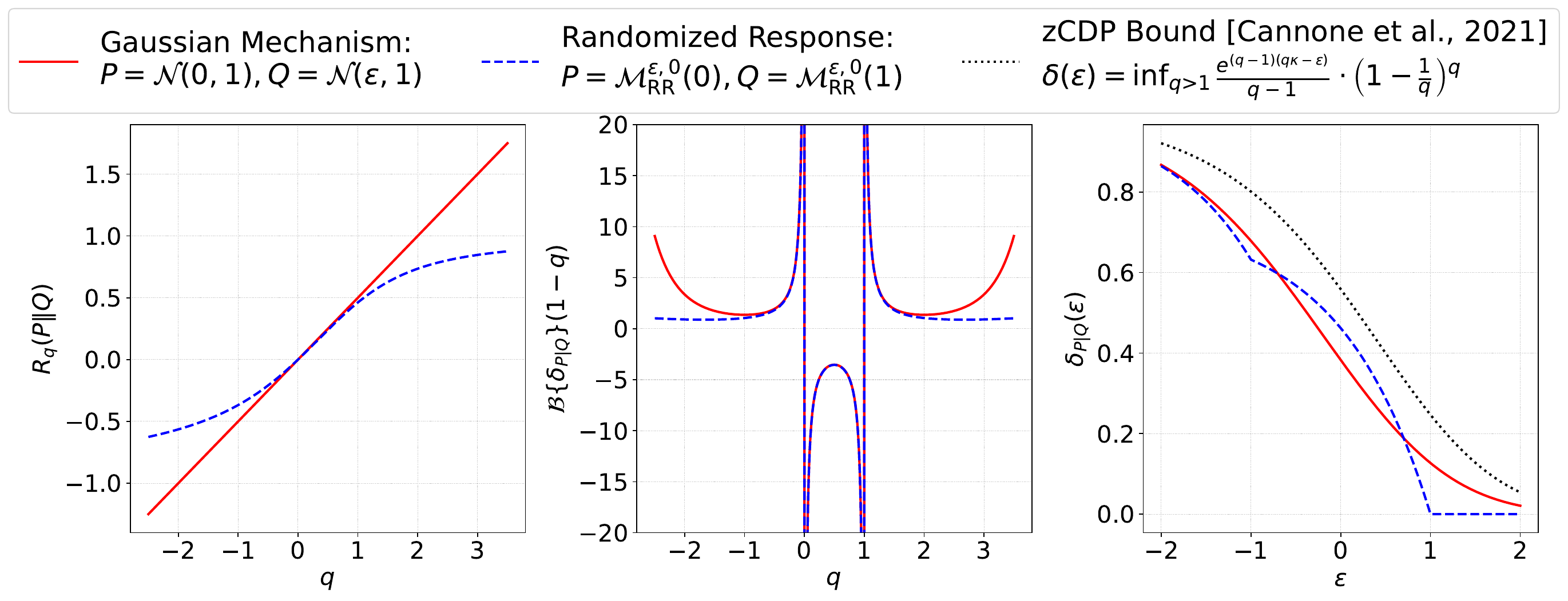}
	\caption{\label{fig:dp_rdp_dominance} Comparison between the indistinguishability characteristic of Gaussian mechanism of~\autoref{thm:gaussian_privacy} ($\kappa = \Vert\mu - \mu'\Vert^2/2\sigma^2 = \eps^2 / 2$) and randomized response mechanism defined in~\autoref{thm:rr_privacy_profile} (with $\del = 0$). This figure visualizes the singularities at $\q = 0$ and $\q = 1$ that exists for the Laplace transform $\BiLap{\del_{P|Q}}(1-q)$ disappears for R\'enyi divergence $\Ren{\q}{P}{Q}$, which is an effect of the replica trick unfolding. This figure also demonstrates that neither the dominance $\Ren{\q}{P_1}{Q_1} \leq \Ren{\q}{P_2}{Q_2}$ for all $\q > 1$, nor the dominance $\BiLap{\del_{P_1|Q_1}}(1-\q) \leq \BiLap{\del_{P_2|Q_2}}(1-\q)$ for all $\q \in \R \setminus \{0, 1\}$ is enough to bound $\del_{P_1|Q_1}(\eps) \leq \del_{P_2|Q_2}(\eps)$ at all $\eps \in \R$. 
	Additionally, the black dotted line in the rightmost plot shows that even the tightest\protect\footnotemark conversion on the R\'enyi curve considering only real orders $\q > 1$ fails to characterize its own privacy profile.
	}
\end{figure}

\footnotetext{The tightest conversion by~\citet{asoodeh2020better} lacks a closed-form expression and is challenging to approximate numerically in a stable way. Therefore, we compare with~\citet[Corollary 13]{canonne2020discrete}, which yields  similar values.}

Differential privacy is a study of distributional divergence between output distributions $P$ and $Q$ not just for a pair of neighboring inputs $\D, \D'$ but across \emph{all} neighboring inputs. When considering functional notions of DP, comparing indistinguishability characteristics of two output-distribution pairs, say $(P_1, Q_1)$ and $(P_2, Q_2)$, requires a notion of \emph{dominance}. \citet{zhu2022optimal} define a \emph{dominating pair of distribution} specific to a mechanism $\M$ as a pair of distributions $P, Q$ such that 
\begin{equation}
	\forall \eps \in \R \ : \ \sup_{\D \simeq \D'} \del_{\M(\D)|\M(\D')}(\eps) \leq \del_{P|Q}(\eps).
\end{equation}
Following the definition by~\citet{zhu2022optimal}, we define the following notions of dominance.
\begin{definition}[Dominance for Distribution Pairs]
	\label{defn:dominance}
	We say that the distribution pair \( (P_2, Q_2) \) dominates \( (P_1, Q_1) \) in privacy profile (denote as $(P_1, Q_1) \preceq_\del (P_2, Q_2)$) if
	\begin{equation}
		\forall \eps \in \R \ : \ \del_{P_1|Q_1}(\eps) \leq \del_{P_2|Q_2}(\eps).
	\end{equation}
	And, we say \( (P_2, Q_2) \) dominates \( (P_1, Q_1) \) in R\'enyi divergence (denote as $(P_1, Q_1) \preceq_R (P_2, Q_2)$) if 
	\begin{equation}
		\forall \q > 1 \ : \Ren{\q}{P_1}{Q_1} \leq \Ren{\q}{P_2}{Q_2}.
	\end{equation}
\end{definition}
The following theorem reveals the surprising fact that, while the R\'enyi divergence curve and privacy profile curve are equivalent for all absolutely continuous distribution pairs, this equivalence does not imply an identical dominance ordering across absolutely continuous distribution pairs.
\begin{theorem}
	\label{thm:dominance}
	Consider two absolutely continuous distribution pairs $(P_1, Q_1)$ and $(P_2, Q_2)$. Then
	\begin{equation}
		(P_1, Q_1) \preceq_\del (P_2, Q_2) \implies (P_1, Q_1) \preceq_R (P_2, Q_2).
	\end{equation}
	However, the opposite direction does not hold:
	\begin{equation}
		(P_1, Q_1) \preceq_R (P_2, Q_2) \;\not\!\!\!\implies (P_1, Q_1) \preceq_\del (P_2, Q_2).
	\end{equation}
\end{theorem}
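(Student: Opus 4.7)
The plan is to prove the forward direction by invoking the Laplace transform identity of \autoref{thm:dp_profile_to_rdp}, and to refute the converse via an explicit counterexample combining the Gaussian mechanism of \autoref{thm:gaussian_privacy} with the randomized response characterization in \autoref{thm:rdp_of_rr}.

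For the forward direction, first observe that every privacy profile is pointwise non-negative: taking $S = \OO$ in~\eqref{eqn:profile} gives $\del_{P|Q}(\eps) \geq 1 - e^\eps > 0$ for $\eps < 0$, while $S = \emptyset$ gives $\del_{P|Q}(\eps) \geq 0$ for $\eps \geq 0$. Since both pairs are absolutely continuous, $\mathrm{ROC}_\mathcal{B}\{\del_{P_i|Q_i}\}$ contains every real $1 - \q$ with $\q > 1$, so \autoref{thm:dp_profile_to_rdp} applies. Fix any $\q > 1$. The Laplace kernel $e^{-(1-\q) t} = e^{(\q-1) t}$ is strictly positive, so integrating the pointwise inequality $\del_{P_1|Q_1}(t) \leq \del_{P_2|Q_2}(t)$ against it yields $\BiLap{\del_{P_1|Q_1}}(1-\q) \leq \BiLap{\del_{P_2|Q_2}}(1-\q)$. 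Multiplying by the positive factor $\q(\q-1)$ and applying~\eqref{eqn:dp_profile_to_rdp} gives $e^{(\q-1)\Ren{\q}{P_1}{Q_1}} \leq e^{(\q-1)\Ren{\q}{P_2}{Q_2}}$; taking logarithms and dividing by $\q - 1 > 0$ completes the argument.

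For the converse, take $(P_2, Q_2) = (\mathcal{N}(0, 1), \mathcal{N}(1, 1))$ (so $\kappa = 1/2$) and let $(P_1, Q_1)$ be the $(\eps_0, 0)$-randomized response pair of \autoref{thm:rr_privacy_profile} with $\eps_0 = 1$. By \autoref{thm:gaussian_privacy}, $\Ren{\q}{P_2}{Q_2} = \q/2$, while by \autoref{thm:rdp_of_rr}, $\Ren{\q}{P_1}{Q_1}$ is non-decreasing in $\q$ with limit $\eps_0 = 1$; a direct check (illustrated in the middle plot of \autoref{fig:dp_rdp_dominance}) verifies that $\Ren{\q}{P_1}{Q_1} < \q/2$ throughout $\q \in (1, \infty)$, so $(P_1, Q_1) \preceq_R (P_2, Q_2)$. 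Evaluating the profiles at $\eps = 0$, however, yields $\del_{P_2|Q_2}(0) = 1 - 2\overline\Phi(1/2) \approx 0.383 < 0.462 \approx 1 - 2/(e+1) = \del_{P_1|Q_1}(0)$, so the profile dominance $(P_1, Q_1) \preceq_\del (P_2, Q_2)$ fails.

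The main technical obstacle is the uniform verification that $\Ren{\q}{P_1}{Q_1} < \q/2$ on the whole ray $(1, \infty)$, not only in the asymptotic limits $\q \to 1^+$ and $\q \to \infty$. The deeper reason the converse is doomed in general is time-frequency in nature: Rényi dominance at real $\q > 1$ samples the profile's bilateral Laplace transform only along the real axis, whereas recovering $\del_{P|Q}(\eps)$ via~\eqref{eqn:privacy_prof_inverse_laplace} integrates along the entire vertical line $\mathfrak{Re}(s) = \gamma$ in $\Com$, so contributions from the imaginary direction can reorder the inverted time-domain profiles.
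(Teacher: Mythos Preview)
Your proposal is correct and follows essentially the same approach as the paper: the forward implication via the Laplace-transform identity of \autoref{thm:dp_profile_to_rdp} together with positivity of the kernel and of $\q(\q-1)$, and the non-implication via the same Gaussian-versus-randomized-response pair with $\kappa=\eps_0^2/2$. The paper appeals to \autoref{fig:dp_rdp_dominance} rather than your explicit evaluation at $\eps=0$, and for the uniform R\'enyi inequality you flag as the ``main technical obstacle'' it cites \citet[Proposition B.7(a)]{dong2019gaussian} in a footnote.
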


\begin{proof}
	First part directly follows from~\eqref{eqn:rdp_as_laplace} by noting that
	\begin{equation}
		\BiLap{\del_{P_1|Q_1}}(1-\q) = \int_{-\infty}^\infty e^{-(1-\q)t} \cdot \del_{P_1|Q_1}(t)\dif{t} \leq \int_{-\infty}^\infty e^{-(1-\q)t} \cdot \del_{P_2|Q_2}(t)\dif{t} = \BiLap{\del_{P_2|Q_2}}(1-\q)
	\end{equation}
	and that for $\q > 1$, $\q (\q - 1) > 0$. 

	The proof of the second statement is based on the example by~\citet{zhu2022optimal} comparing the R\'enyi DP curve and privacy profile of Gaussian mechanism described in~\autoref{thm:gaussian_privacy} (denote with $(P_2, Q_2)$) with that of randomized response mechanism, defined in~\autoref{thm:rr_privacy_profile} (denote with $(P_1, Q_1)$). For any $\eps > 0$, we set $\kappa = \eps^2/2$ in the Gaussian mechanism where $(\eps, 0)$ is the parameter of the randomized response and compare them in~\autoref{fig:dp_rdp_dominance}. From the leftmost plot, we can see that $(P_1, Q_1) \preceq_R (P_2, Q_2)$ holds.\footnote{\citet[Proposition B.7 (a)]{dong2019gaussian} show that $\Ren{\q}{\mathcal{M}_\mathrm{RR}^{\eps,0}(0)}{\mathcal{M}_\mathrm{RR}^{\eps,\del}(1)} \leq \Ren{\q}{\mathcal{N}(0,1)}{\mathcal{N}(\eps,1)}$ for all $\q > 1$, but the bound actually holds for all $\q > 0$.} However, note that in the rightmost plot that their privacy profiles cross one another. Hence, the dominance $(P_1, Q_1) \preceq_\del (P_2, Q_2)$ does not hold. 
\end{proof}

\citet{zhu2022optimal} point out that it is troubling that identifying a dominating pair of distributions in R\'enyi divergence does not guarantee a corresponding dominance in privacy profiles. Although the R\'enyi divergence curve $\Ren{\q}{P}{Q}$ is an equivalent representation of the privacy profile $\del_{P|Q}(\eps)$, converting the $\Ren{\q}{P}{Q}$ curve into a tight upper bound for privacy profile will \emph{always} introduce a gap. This gap can be substantial, as shown in~\autoref{fig:dp_rdp_dominance}, where we compare the (nearly) tight upper bound (black dotted line) derived for the privacy profile from the R\'enyi divergence curve (red line on the left) of the Gaussian mechanism with the actual privacy profile of this Gaussian mechanism.

\textbf{Role of complex orders.} While recognizing the fundamental gap between the tightest achievable bound on the privacy profile derived from a bound on the R\'enyi divergence curve and the privacy profile itself, no such tight bound has yet been established. The best available bounds rely on taking an infimum over pointwise conversions from R\'enyi divergence at a single order $\q > 1$ to \((\eps, \del)\)-DP~\citep{canonne2020discrete, bun2016concentrated}. However, since pointwise guarantees offer a lossy characterization of indistinguishability, taking an infimum over privacy profiles implied by these guarantees is unlikely to achieve tight upper bounds. Notably, from the Inverse Laplace Transform expression~\eqref{eqn:privacy_prof_inverse_laplace} for privacy profiles, we observe that the value of \( \del_{P|Q} \) at any \( \epsilon \) depends on the behavior of the Rényi divergence \( \Ren{\q}{P}{Q} \) along the complex line \( \mathfrak{Re}(1 - \q) = \gamma \) for any $\gamma \in \R \setminus \{0, 1\}$, and not along the real line. In fact, the choice of $\gamma$ on $\R$ does not matter at all, as long as it lies in the $\mathrm{ROC}_\mathcal{B}\{\del_{P|Q}\}$. Thus, we believe that establishing a tight functional conversion will require examining how the Rényi divergence \emph{curls as we move the order $\q$ along the complex line} $\gamma + i \omega$.

\section{Exactly-Tight Composition Theorems}
\label{sec:composition}

Unlike functional DP guarantees, point guarantees like $(\eps, \del)$-DP or $(\q, \rhO)$-R\'enyi DP are a lossy characterization of the indistinguishability between two distributions $P$ and $Q$. Despite this, using them for reporting or certifying an algorithm's worst-case privacy is generally acceptable as the privacy protection they guarantee, although conservative, is adequate.
The main issue arises when attempting to compose point DP guarantees, as the quantification loss often compounds drastically, resulting in a significant overestimation of the actual privacy protection provided by an algorithm. Consider the $k$-fold (non-adaptive) composition of a one-dimensional Gaussian mechanism with $L_2$ sensitivity 1 and noise variance $\sigma^2 = 1$. For individual 1D Gaussian distributions $P = \mathcal{N}(0, 1)$ and $Q = \mathcal{N}(1, 1)$, the privacy profile is of the order $\del_{P|Q}(\eps) = O(e^{-\eps^2/2})$ (cf.~\autoref{thm:gaussian_privacy}), indicating that the mechanism satisfies a point guarantee of $(O(\sqrt{\log(1/\del)}), \del)$-DP for any $\del \in (0, 1]$. When we extend this to $k$-fold non-adaptive self-composition, the resulting output distribution is a $k$-dimensional Gaussian, specifically $P^{\otimes k} = \mathcal{N}(\vec{0}, I_k)$ and $Q^{\otimes k} = \mathcal{N}(\vec{1}, I_k)$. The privacy profile of this composition is of the asymptotic order $\del_{P^{\otimes k}|Q^{\otimes k}}(\eps) = O(e^{-\eps^2/2k})$, yielding a point guarantee of $(O(\sqrt{k \log (1/\del)}), \del)$-DP. However, if we attempt to compose the individual $(O(\sqrt{\log(1/\del)}), \del)$-DP point guarantees for each Gaussian mechanism, even with the optimal composition theorem for $(\eps, \del)$-DP point guarantees from \citet{kairouz2015composition}, the best achievable guarantee is $(O(\sqrt{k} \log(1/\del)), (k+1)\del)$-DP. This result is significantly more pessimistic than the true privacy profile---a factor of $O(\sqrt{\log (k / \del)})$ in the $\eps$ for the same $\del$.

Functional notions of DP effectively address this problem by capturing the indistinguishability between any two distributions $P$ and $Q$ accurately \citep{dong2019gaussian, koskela2020computing}. However, R\'enyi DP (as a function of order $\q$) remains the only functional DP notion with an \emph{exactly tight} composition theorem (i.e., matching even the constants) that can accommodate \emph{adaptively chosen heterogeneous mechanisms}.\footnote{Adaptive means each mechanism’s output may depend on previous outputs; heterogeneous means the mechanisms need not be identical.} 
\begin{remark}
	The previous statement requires some justifications. We note that the PLD formalism for composition is limited to non-adaptive mechanisms \citep{sommer2018privacy, koskela2020computing, gopi2021numerical}. On the other hand for $f$-DP, there is no general composition formula for arbitrary trade-off curves $f_{P_i|Q_i}$. Instead, \citet{dong2019gaussian} provides an explicit composition operator expression applicable only to Gaussian trade-off functions (Corollary 3.3). Furthermore, \citet{zhu2022optimal}'s characteristic function formalism~\eqref{eqn:zhu_char} appears equivalent to R\'enyi divergence, as we have learned that the order $\q$ in \eqref{eqn:rdp_as_laplace} can assume imaginary values.
\end{remark}

In the following theorem, we provide an \emph{exactly tight} composition result that applies to arbitrary privacy profiles $\del_{P_1|Q_1}$ and $\del_{P_2|Q_2}$. Later we also extend this theorem to handle adaptivity.

\begin{theorem}[Exactly Tight Composition of Privacy Profiles]
	\label{thm:compose_privacy_profile}
	If $P= P_1 \times P_2$ and $Q = Q_1 \times Q_2$ are two product distributions on $\OO_1 \times \OO_2$ such that $(P_1, Q_1)$ and $(P_2, Q_2)$ are absolutely continuous at least in one direction, then
	\begin{equation}
		\label{eqn:compose_privacy_profile}
		\del_{P|Q}(\eps) = \left(\del_{P_1|Q_1} \convolve \left(\ddot\del_{P_2|Q_2} - \dot\del_{P_2|Q_2}\right) \right)(\eps) = \int_{-\infty}^\infty \del_{P_1|Q_1}(\eps - \tau) \cdot \left(\ddot\del_{P_2|Q_2}(\tau) - \dot\del_{P_2|Q_2}(\tau)\right) \dif{\tau},
	\end{equation}
	where $\dot\del_{P_2|Q_2}$ and $\ddot\del_{P_2|Q_2}$ are the first and second order gradient functions of $\del_{P_2|Q_2}$.
\end{theorem}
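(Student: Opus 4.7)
The strategy is to show that both sides have the same bilateral Laplace transform at $s = 1-\q$ as functions of $\q$ on a common region of convergence, and then invoke Lerch's uniqueness theorem (discussed in~\autoref{ssec:laplace_transform}) to conclude equality of the two continuous functions on all of $\R$. This reduces a convolution identity in the time domain to an algebraic identity in the frequency domain, which is precisely the kind of maneuver the Laplace interpretation developed in~\autoref{sec:characterization} is designed to make routine.

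First I would handle the LHS: by~\autoref{thm:dp_profile_to_rdp},
\[
	\BiLap{\del_{P|Q}(t)}(1-\q) \;=\; \frac{\Eren{\q}{P}{Q}}{\q(\q-1)}.
\]
Because $P = P_1 \times P_2$ and $Q = Q_1 \times Q_2$, Fubini's theorem applied to $\int (P_1 P_2)^\q (Q_1 Q_2)^{1-\q}\,\dif{\thet}$ yields the standard multiplicativity of the R\'enyi moment on product distributions, i.e.\ $\Eren{\q}{P}{Q} = \Eren{\q}{P_1}{Q_1}\cdot\Eren{\q}{P_2}{Q_2}$.

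Next I would take the Laplace transform of the RHS and use, successively, the convolution and differentiation properties of the bilateral Laplace transform from~\autoref{tab:laplaceTransformProp}:
\begin{align*}
	\BiLap{\del_{P_1|Q_1} \convolve (\ddot\del_{P_2|Q_2} - \dot\del_{P_2|Q_2})}(1-\q)
	 &= \BiLap{\del_{P_1|Q_1}}(1-\q)\cdot \BiLap{\ddot\del_{P_2|Q_2} - \dot\del_{P_2|Q_2}}(1-\q) \\
	 &= \BiLap{\del_{P_1|Q_1}}(1-\q)\cdot \bigl[(1-\q)^2 - (1-\q)\bigr]\,\BiLap{\del_{P_2|Q_2}}(1-\q) \\
	 &= \frac{\Eren{\q}{P_1}{Q_1}}{\q(\q-1)} \cdot \q(\q-1)\,\BiLap{\del_{P_2|Q_2}}(1-\q) \\
	 &= \frac{\Eren{\q}{P_1}{Q_1}\cdot\Eren{\q}{P_2}{Q_2}}{\q(\q-1)},
\end{align*}
where I used $(1-\q)^2 - (1-\q) = \q(\q-1)$ and~\autoref{thm:dp_profile_to_rdp} twice (once to convert to and once to convert from a R\'enyi moment). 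This matches the LHS transform. The absolute-continuity-in-one-direction hypothesis on each pair guarantees that both $\mathrm{ROC}_\mathcal{B}\{\del_{P_i|Q_i}\}$ are non-empty (see the ROC figure after~\autoref{thm:dp_profile_to_rdp}), so their intersection is also a non-empty vertical strip on which both transforms can be simultaneously evaluated, legitimizing the Lerch step.

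\textbf{Main obstacle.} The derivatives $\dot\del_{P_2|Q_2}$ and $\ddot\del_{P_2|Q_2}$ need not exist in the classical sense: privacy profiles are convex and non-increasing but frequently have kinks (for instance, the randomized-response profile in~\autoref{thm:rr_privacy_profile} has kinks at $t = \pm\eps$), so $\dot\del$ has jumps and $\ddot\del$ contains Dirac masses. The cleanest way to handle this is to interpret both derivatives in the distributional sense, exactly as is done in the generalized-density footnote of~\autoref{thm:dp_as_laplace}; under this interpretation the differentiation and convolution identities used above remain valid, and the convolution on the RHS still yields a continuous function in $\eps$, matching the LHS, which is continuous by the argument following~\autoref{thm:dp_profile_to_rdp}. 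Verifying that the formal manipulations remain correct on distributions (rather than smooth functions) is the only delicate technical point; everything else is a bookkeeping application of the Laplace toolkit already built up in the paper.
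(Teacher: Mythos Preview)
Your proposal is correct and follows essentially the same route as the paper: both sides are taken to the frequency domain via~\autoref{thm:dp_profile_to_rdp}, the identity reduces to the multiplicativity $\Eren{\q}{P}{Q}=\Eren{\q}{P_1}{Q_1}\cdot\Eren{\q}{P_2}{Q_2}$ on product distributions (the paper obtains this through the PLD convolution $f_\Z=f_{\Z_1}\convolve f_{\Z_2}$ rather than a direct Fubini step, but that is cosmetic), the factor $\q(\q-1)$ is absorbed via the derivative property~\eqref{eq:Differentiation}, and uniqueness of the Laplace transform finishes. One small caution: non-emptiness of each $\mathrm{ROC}_\mathcal{B}\{\del_{P_i|Q_i}\}$ does not automatically give a non-empty intersection (e.g.\ if $P_1\ll Q_1$ while only $Q_2\ll P_2$ the two strips lie on opposite sides of $\mathfrak{Re}(s)=0$), so your Lerch step needs the directions of absolute continuity to agree---the paper sidesteps this by simply fixing $s=1-\q<0$ without comment.
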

\begin{proof}
	Let's consider the random variables $\Z \sim \privloss{P}{Q}$, $\Z_1 \sim \privloss{P_1}{Q_1}$, and $\Z_2 \sim \privloss{P_2}{Q_2}$. For a pair $(\Thet_1, \Thet_2) \sim P$, the privacy loss random variable $\Z$ is given by $\priv{P}{Q}(\Thet_1, \Thet_2)$, which simplifies to:
	\begin{equation}
		\log \frac{P_1(\Thet_1) P_2(\Thet_2)}{Q_1(\Thet_1)Q_2(\Thet_2)} = \priv{P_1}{Q_1}(\Thet_1) + \priv{P_2}{Q_2}(\Thet_2).
	\end{equation}
	This decomposition implies that $\Z$ can be expressed as the sum of $\Z_1$ and $\Z_2$, i.e., $\Z = \Z_1 + \Z_2$. Consequently, the probability density of $\Z$, $f_\Z$, is the convolution of $f_{\Z_1}$ and $f_{\Z_2}$:
	\begin{equation}
		f_\Z(t) = \int_{-\infty}^\infty f_{\Z_1}(\tau) f_{\Z_2}(t-\tau)d \tau = (f_{\Z_1} \convolve f_{\Z_2})(t).
	\end{equation}
	Invoking Theorem~\ref{thm:dp_as_laplace}, we then obtain the Laplace transform of $f_\Z$ at $(1-\q)$:
	\begin{align}
		\BiLap{f_\Z(t)}(1-\q) \overset{\eqref{eq:Convolution}}{=} \BiLap{f_{\Z_1}(t)}(1-\q) \cdot \BiLap{f_{\Z_2}(t)}(1-\q).
	\end{align}
	From Definition~\ref{def:rdp}, this directly implies that the R\'enyi divergence of order $\q$ for the pair $(P, Q)$ is the sum of the R\'enyi divergences for the pairs $(P_1, Q_1)$ and $(P_2, Q_2)$ as
	\begin{equation}
		\Ren{\q}{P}{Q} = \Ren{\q}{P_1}{Q_1} + \Ren{\q}{P_2}{Q_2}.
	\end{equation}
	Now suppose $s = 1-\q < 0$. Thanks to absolute continuity, we can express the R\'enyi divergences in terms of the Laplace transform using Theorem~\ref{thm:dp_profile_to_rdp} as follows, cancelling out the common terms:
	\begin{equation}
		s(s-1) \BiLap{\del_{P|Q}(t)}(s) = s(s-1) \BiLap{\del_{P_1|Q_1}(t)}(s)  \cdot s(s-1) \BiLap{\del_{P_2|Q_2}(t)}(s).
	\end{equation}
	Since $\q = 1 - s$ cannot be $0$ or $1$, we can divide by $s(s-1)$ on both sides:
	\begin{align}
		\BiLap{\del_{P|Q}(t)}(s)&= \BiLap{\del_{P_1|Q_1}(t)}(s)  \cdot s(s-1)\BiLap{\del_{P_2|Q_2}(t)}(s) \\
					&= \BiLap{\del_{P_1|Q_1}(t)}(s)  \cdot \left(s^2 \BiLap{\del_{P_2|Q_2}(t)}(s) - s\BiLap{\del_{P_2|Q_2}(t)}(s)\right) \\
					&\overset{\eqref{eq:Differentiation}}{=} \BiLap{\del_{P_1|Q_1}(t)}(s) \cdot \left(\BiLap{\ddot\del_{P_2|Q_2}(t)}(s) -\BiLap{\dot\del_{P_2|Q_2}(t)}(s)\right) \\
					&\overset{\eqref{eq:Linearity}}{=} \BiLap{\del_{P_1|Q_1}(t)}(s) \cdot \BiLap{\ddot\del_{P_2|Q_2}(t) - \dot\del_{P_2|Q_2}(t)}(s) \\
				     &\overset{\eqref{eq:Convolution}}{=} \BiLap{\left(\del_{P_1|Q_1} \circledast \left(\ddot\del_{P_2|Q_2} - \dot\del_{P_2|Q_2}\right)\right)(t)}(s).
	\end{align}
	Hence, from the uniqueness of Laplace transform, we get 
	\begin{equation}
		\forall \eps \in \R, \quad \del_{P|Q}(\eps) = \left(\del_{P_1|Q_1} \circledast \left(\ddot\del_{P_2|Q_2} -\dot\del_{P_2|Q_2}\right) \right)(\eps).
	\end{equation}
\end{proof}
\autoref{thm:compose_privacy_profile} provides an \emph{exactly tight} composition theorem---not only because the terms in~\eqref{eqn:compose_privacy_profile} are equal, but also because the privacy profiles \(\del_{P_1|Q_1}\) and \(\del_{P_2|Q_2}\) precisely capture the indistinguishability of their respective distributions. This theorem mirrors the composition property of R\'enyi divergence but works in the time domain \(\eps\) instead of the frequency domain \(\q\). It assumes absolute continuity in at least one direction (\(P_i \ll Q_i\) or \(Q_i \ll P_i\)) for both \(i = 1\) and \(2\). 

Interestingly however, our result in \eqref{eqn:compose_privacy_profile} appears to hold even when absolute continuity fails in either direction. We will see an example of this in the next section where we apply~\eqref{eqn:compose_privacy_profile} to compose the privacy profile of randomized mechanisms $\del_\mathrm{RR}^{\eps_1,\del_1} \otimes \del_\mathrm{RR}^{\eps_2,\del_2}$ and get a tight expression for $\del_1, \del_2 > 0$ without running into a singularity. This happens because even when the Laplace transform is undefined everywhere, the frequency-domain manipulations performed on it still correspond to valid manipulation steps in the time domain. Since our main interest lies in the time domain function---the composed privacy profile---taking advantage of this flexibility proves beneficial. To emphasize the significance of this, \autoref{thm:compose_privacy_profile} allows us to tightly derive composed privacy profiles with the same exact-tightness as R\'enyi DP even for mechanisms that do not satisfy R\'enyi-DP for any order $\q \in \R\setminus \{0, 1\}$, opening exciting possibilities beyond the limitations of R\'enyi DP. 

\begin{remark}
	Formally proving why dropping the absolute continuity assumption in~\autoref{thm:compose_privacy_profile} does not compromise the validity of~\eqref{eqn:compose_privacy_profile} appears to be a challenging yet intriguing problem. Our efforts to resolve this suggests that a mathematical understanding of the number $0^i$ is necessary.
\end{remark}

\textbf{Adaptive Composition.} For two mechanisms $\M_1$ and $\M_2$, if mechanism $\M_2$ sees the output from $\M_1$, then the output distribution of $\M_1$ and $\M_2$ are no longer independent. \autoref{thm:compose_privacy_profile} can still be applied as long as their exists a distribution pair $P_2, Q_2$ that \emph{dominates the privacy profile} of output distribution pairs $\M_2(\D, \thet)$ and $\M_2(\D', \thet)$ across all $\thet$ for a given dataset pair $\D \simeq \D'$, which is a reasonable assumption for adaptively compositing functional guarantees for DP.\footnote{Adaptive composition for point DP guarantees requires that for fixed $\D \simeq \D'$, conditioned on any observation from $\M_1$, the point DP guarantee holds for $\M_2$. Adaptively composing DP curves needs a stronger assumption that conditioned on any output of $\M_1$, the privacy profile of $\M_2$ lies below a worst-case DP curve.} This is possible due to the following result.
\begin{lemma}[{\citet[Theorem 27]{zhu2022optimal}}]
	\label{lem:adaptive_profile}
	Let $P(x,y) = P_1(x)\cdot P^x_2(y)$ and $Q(x,y) = Q_1(x)\cdot Q^x_2(y)$ be two joint distributions on $\OO_1 \times \OO_2$. Then for any distributions $P_2$ and $Q_2$ on $\OO_2$ such that $\del_{P^x_2|Q^x_2}(\eps) \leq \del_{P_2|Q_2}(\eps)$ for all $\eps \in \R$ and $x \in \OO_1$, we have $\del_{P|Q}(\eps) \leq \del_{P_1 \times P_2 |Q_1 \times Q_2}(\eps)$.
\end{lemma}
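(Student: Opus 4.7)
The plan is to reduce this adaptive-composition bound to a pointwise application of the hypothesis $\del_{P_2^x|Q_2^x}(\eps') \leq \del_{P_2|Q_2}(\eps')$ by conditioning on the outcome $x$ of the first coordinate. Starting from the integral form of the privacy profile in~\eqref{eqn:profile}, I would write
\[
	\del_{P|Q}(\eps) = \int_{\OO_1}\int_{\OO_2} \max\{0,\, P_1(x)\, P_2^x(y) - e^\eps\, Q_1(x)\, Q_2^x(y)\} \dif{y}\,\dif{x},
\]
factor $P_1(x)$ out of the inner integrand, and absorb the ratio $Q_1(x)/P_1(x)$ into the privacy parameter. Recognising the inner $y$-integral as a privacy profile of the conditional distributions at a shifted argument then gives the clean identity
\[
	\del_{P|Q}(\eps) = \int_{\OO_1} P_1(x) \cdot \del_{P_2^x|Q_2^x}\!\bigl(\eps - \ell(x)\bigr) \dif{x}, \quad\text{where}\ \ell(x) \eqdef \log\bigl(P_1(x)/Q_1(x)\bigr).
\]

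The key observation is that the \emph{same} factorisation applied to the reference product pair $(P_1\times P_2, Q_1\times Q_2)$---where the conditionals $P_2^x \equiv P_2$ and $Q_2^x \equiv Q_2$ no longer depend on $x$---yields
\[
	\del_{P_1\times P_2\vert Q_1\times Q_2}(\eps) = \int_{\OO_1} P_1(x) \cdot \del_{P_2|Q_2}\!\bigl(\eps - \ell(x)\bigr) \dif{x}.
\]
Both sides of the target inequality are now integrals against the \emph{same} probability measure $P_1$ of privacy profiles evaluated at the \emph{same} shifted argument $\eps - \ell(x)$. Applying the hypothesis pointwise in $x$ with $\eps' = \eps - \ell(x)$ and integrating in $x$ immediately gives $\del_{P|Q}(\eps) \leq \del_{P_1\times P_2\vert Q_1\times Q_2}(\eps)$.

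The only real obstacle I foresee is the measure-zero technicality at points $x$ where $P_1(x) = 0$ or $Q_1(x) = 0$, i.e., where $\ell(x) = \pm\infty$. This can be handled by assigning the natural limits $\del_{P_2^x|Q_2^x}(+\infty) = 0$ and $\del_{P_2^x|Q_2^x}(-\infty) = 1$ (and likewise for the reference profile), or by splitting $\OO_1$ into the $P_1$-singular, $Q_1$-singular, and mutually absolutely-continuous parts and verifying the factorisation on each piece separately; in each case the integrand simplifies by inspection and the pointwise comparison still holds. No machinery beyond elementary integration is required, so the bulk of the work lies in the one-line decomposition of the joint privacy profile into an average of conditional profiles.
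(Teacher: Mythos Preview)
The paper does not supply its own proof of this lemma; it is quoted verbatim from \citet[Theorem 27]{zhu2022optimal} and used as a black box in the adaptive-composition discussion and in the proof of \autoref{thm:approxdp_composition_hetro}. So there is no in-paper argument to compare against.

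Your proposal is correct and is essentially the standard proof of this fact. The key identity
\[
	\del_{P|Q}(\eps) = \int_{\OO_1} P_1(x)\,\del_{P_2^x|Q_2^x}\!\bigl(\eps - \priv{P_1}{Q_1}(x)\bigr)\,\dif{x}
\]
follows exactly as you describe by factoring $P_1(x)$ out of the integral form~\eqref{eqn:profile}, and once the same identity is written for the product pair $(P_1\times P_2,\,Q_1\times Q_2)$ the conclusion is immediate from the pointwise hypothesis. Your handling of the singular sets ($P_1(x)=0$ contributes nothing to either side; $Q_1(x)=0$ with $P_1(x)>0$ contributes $P_1(x)$ to both sides) is also correct, so no gap remains.
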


\subsection{Tight Composition for $(\eps, \del)$-DP} 
\label{ssec:point_dp_composition}

In this section, we use~\autoref{thm:compose_privacy_profile} to prove an exactly-tight composition theorem for adaptive composition of a sequence of $(\eps_i, \del_i)$-DP mechanisms. We begin with a variant of~\citet{kairouz2015composition}'s result that the privacy profile $\del_{P|Q}(\eps)$ under an $(\eps, \del)$-DP point guarantee is dominated by the privacy profile of randomized response mechanism.
\begin{theorem}[{Dominating Privacy Profile under $(\eps, \del)$-DP~\citep{kairouz2015composition}}]
	\label{thm:rr_dominates_approxdp}
	Fix $\eps \geq 0$ and $\del \in [0, 1]$. Suppose distributions $P$ and $Q$ over $\OO$ satisfy $(\eps, \del)$-differential privacy. Then,
	\begin{equation}
		\label{eqn:rr_dominates_approxdp}
		\forall \eps \in \R \ : \ \del_{P|Q}(\eps) \leq \del_\mathrm{RR}(\eps) \quad \text{and}\quad \del_{Q|P}(\eps) \leq \del_\mathrm{RR}(\eps),
	\end{equation}
	where $\del_\mathrm{RR}(t)$ is the privacy profile of the randomized response mechanism $\M_\mathrm{RR}^{\eps, \del}$.
\end{theorem}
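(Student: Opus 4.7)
The plan is to treat $\del_{P|Q}(t) = \sup_S\bigl(P(S) - e^{t} Q(S)\bigr)$ as the maximum of a linear functional over the two-dimensional region of achievable pairs $(p, q) := (P(S), Q(S))$, and to bound this region using the $(\eps,\del)$-DP hypothesis. Applying the DP inequality to both $S$ and $S^\complement$, in both directions, yields four linear constraints on $(p, q) \in [0,1]^2$:
\begin{equation*}
p - e^\eps q \leq \del, \quad q - e^\eps p \leq \del, \quad e^\eps q - p \leq e^\eps - 1 + \del, \quad e^\eps p - q \leq e^\eps - 1 + \del.
\end{equation*}
These carve out a symmetric convex polygon $\mathcal{R}(\eps,\del) \subseteq [0,1]^2$ containing every attainable pair, so $\del_{P|Q}(t) \leq \max_{(p,q) \in \mathcal{R}(\eps,\del)} \bigl(p - e^{t}q\bigr)$. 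It then suffices to show this maximum equals $\del_\mathrm{RR}(t)$ in each of the three ranges of $t$ that appear in the piecewise formula~\eqref{eqn:rr_privacy_profile}.

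For $t > \eps$, I would invoke the first constraint directly: $p - e^{t}q \leq p - e^\eps q \leq \del$, matching the top piece. For $t \leq -\eps$, I would apply the reversal identity in Remark~\ref{rem:dp_reversal}, namely $\del_{P|Q}(t) = 1 - e^{t} + e^{t}\del_{Q|P}(-t)$, and reduce to the previous case with the roles of $P$ and $Q$ swapped; this is legitimate because $(\eps,\del)$-DP is symmetric in the two neighbouring datasets, and since $-t \geq \eps$ one obtains $\del_{Q|P}(-t) \leq \del$ and hence $\del_{P|Q}(t) \leq 1 - e^{t}(1-\del)$, matching the bottom piece.

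The main content is the intermediate regime $-\eps < t \leq \eps$, where the linear objective attains its maximum at a single vertex of $\mathcal{R}(\eps,\del)$. I would identify this vertex as the solution of the two binding constraints $p - e^\eps q = \del$ and $e^\eps p - q = e^\eps - 1 + \del$, solve the resulting $2 \times 2$ system to get $(p^*, q^*) = \bigl(\tfrac{e^\eps + \del}{1 + e^\eps},\, \tfrac{1 - \del}{1 + e^\eps}\bigr)$, and substitute into the objective to obtain exactly $1 - (1-\del)\tfrac{1 + e^{t}}{1 + e^\eps}$, matching the middle piece of $\del_\mathrm{RR}$. The second inequality $\del_{Q|P}(t) \leq \del_\mathrm{RR}(t)$ is then immediate, since both $\mathcal{R}(\eps,\del)$ and $\del_\mathrm{RR}$ are symmetric under exchanging $P$ and $Q$.

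The main obstacle I anticipate is not any single computation but the vertex-selection bookkeeping in the intermediate case: one must verify that $(p^*, q^*)$ is in fact the maximizer for every $t \in (-\eps, \eps]$, which amounts to checking that the gradient direction $(1, -e^{t})$ stays inside the outward normal cone at $(p^*, q^*)$ throughout this range, and that the boundary values $t = \eps$ and $t = -\eps$ correctly transition to the neighbouring vertices $(\del, 0)$ and $(1, 1-\del)$ (matching the easy cases). These are routine monotonicity checks in $t$ but easy to miscalculate algebraically.
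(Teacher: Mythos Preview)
Your argument is correct and takes a genuinely different route from the paper's own proof. The paper invokes an external structural lemma (\citet[Corollary 24]{steinke2022composition}), which decomposes any $(\eps,\del)$-DP pair $(P,Q)$ as convex mixtures
\[
P = (1-\del)\tfrac{e^\eps}{e^\eps+1}A + (1-\del)\tfrac{1}{e^\eps+1}B + \del P', \qquad
Q = (1-\del)\tfrac{e^\eps}{e^\eps+1}B + (1-\del)\tfrac{1}{e^\eps+1}A + \del Q',
\]
reads this as $P,Q$ being post-processings of $\M_{\mathrm{RR}}^{\eps,\del}$, substitutes into $P(S)-e^t Q(S)$, and then maximizes over $A(S),B(S),P'(S),Q'(S)\in[0,1]$ coefficient by coefficient to recover the three pieces of $\del_{\mathrm{RR}}$. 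Your approach instead works directly with the two-dimensional ``privacy region'' $\{(P(S),Q(S)):S\subset\OO\}\subseteq\mathcal{R}(\eps,\del)$ cut out by the four DP inequalities, and does a linear-programming vertex analysis; this is essentially the original geometric picture of \citet{kairouz2015composition}. Your route is more elementary and fully self-contained (no need to import Steinke's decomposition), while the paper's route makes the operational ``$(P,Q)$ is a post-processing of randomized response'' interpretation explicit, which is conceptually nice but logically heavier. The normal-cone bookkeeping you flag as the main obstacle is indeed routine: the cone at $(p^*,q^*)$ is generated by $(1,-e^\eps)$ and $(e^\eps,-1)$, and $(1,-e^t)$ lies in it precisely when $-\eps\le t\le\eps$, with the transitions to $(\del,0)$ and $(1,1-\del)$ occurring exactly at $t=\eps$ and $t=-\eps$ respectively.
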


\citet{kairouz2015composition} do not express their notion of \emph{dominance} in the same way as we do in Definition~\ref{defn:dominance}---they say distribution pair $(P_1, Q_1)$ dominates $(P_2, Q_2)$ if their trade-off curves satisfy
\begin{equation}
	\forall \alpha \in [0,1] \ : \ f_{P_1|Q_1}(\alpha) \geq f_{P_2|Q_2}(\alpha).
\end{equation}
Nonetheless, our notion of dominance and their notion is equivalent, which was established by~\citep{dong2019gaussian} by showing that the privacy profile $\del_{P_i|Q_i}$ and the corresponding trade-off curve $f_{P_i|Q_i}$ are \emph{primal} and \emph{dual} with respect to Frenchel duality. We also provide a direct proof of~\autoref{thm:rr_dominates_approxdp} in the Appendix~\ref{app:composition}. 

As a side note, observe that combining~\autoref{thm:rr_dominates_approxdp} with~\autoref{thm:rdp_of_rr} gives a tight R\'enyi DP guarantee for a pure $\eps$-DP mechanism, which has recently attracted interest~\citep{differentialprivacyTightZCDP}.
\begin{equation}
	\del_{P|Q}(\eps) = \del_{Q|P}(\eps) = 0  \implies \forall \q > 1 \ : \ \Ren{\q}{P}{Q} \leq \frac{1}{\q - 1}\log\left(\frac{e^\eps}{e^\eps + 1} e^{-\q\eps} + \frac{1}{e^\eps + 1}e^{\q \eps}\right).
\end{equation}

Following the objective of this section, we use our~\autoref{thm:compose_privacy_profile} on the above worst-case privacy profile under $(\eps, \del)$-DP point guarantees, resulting in an exactly-tight composition guarantee as stated below.
\begin{theorem}[{Tight Composition for $(\eps, \del)$-DP}]
	\label{thm:approxdp_composition_hetro}
	For any $\eps_i \geq 0$, $\del_i \in [0,1]$ for $i \in \{1, \cdots, k\}$, the $k$-fold composition of $(\eps_i, \del_i)$-differentially private mechanisms satisfies $(\eps, \del^{\otimes k}(\eps))$-DP for all $\eps$, defined recursively as
	\begin{equation}
		\label{eqn:approxdp_composition_hetro}
		\forall t \in \R \ : \ \del^{\otimes l}(t) = \del_l + \frac{(1-\del_l)}{e^{\eps_l} + 1}\left[ e^{\eps_l} \cdot \del^{\otimes l-1}(t -\eps_l) + \del^{\otimes l-1}(t +\eps_l)\right],
	\end{equation}
	with $\del^{\otimes 0}(t) = [1 - e^t]_+$.
\end{theorem}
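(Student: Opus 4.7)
The plan is to first reduce the problem to computing the privacy profile of a non-adaptive composition of randomized response mechanisms, and then compute that profile inductively via the privacy loss distribution. By \autoref{thm:rr_dominates_approxdp}, for any fixed history, each $(\eps_i,\del_i)$-DP mechanism has its conditional privacy profile pointwise dominated by $\del_\mathrm{RR}^{\eps_i,\del_i}$, the profile of $\M_\mathrm{RR}^{\eps_i,\del_i}$ from \autoref{thm:rr_privacy_profile}. Applying \autoref{lem:adaptive_profile} iteratively (one coordinate at a time) then upper-bounds the privacy profile of the adaptive $k$-fold composition by the profile of the non-adaptive product $\M_\mathrm{RR}^{\eps_1,\del_1}\times \cdots \times \M_\mathrm{RR}^{\eps_k,\del_k}$. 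So it suffices to show that this product's privacy profile is exactly $\del^{\otimes k}(t)$ as defined by \eqref{eqn:approxdp_composition_hetro}.

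For this remaining step, rather than convolving privacy profiles via \eqref{eqn:compose_privacy_profile}---which is delicate when $\del_i>0$ because neither direction of absolute continuity holds (the outputs $(1,\bot)$ and $(0,\bot)$ have zero probability under one of the two inputs)---I would work directly with the privacy loss distribution. From \eqref{eqn:rr_defn} one reads off that the privacy loss $\Z_i \sim \privloss{P_i}{Q_i}$ of $\M_\mathrm{RR}^{\eps_i,\del_i}$ puts mass $\del_i$ on $+\infty$, mass $\tfrac{(1-\del_i)e^{\eps_i}}{e^{\eps_i}+1}$ on $\eps_i$, and mass $\tfrac{1-\del_i}{e^{\eps_i}+1}$ on $-\eps_i$. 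For the non-adaptive product, the $\Z_i$ are independent, so by \eqref{eqn:dp_via_Z} the composed privacy profile equals $\expec{\Z_1,\ldots,\Z_k}{[1-e^{t-\Z_1-\cdots-\Z_k}]_+}$.

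I would then prove equality to $\del^{\otimes l}$ by induction on $l$. The base case $l=0$ corresponds to $P=Q$ (no mechanism applied), giving $\del_{P|P}(t) = \sup_S P(S)(1-e^t) = [1-e^t]_+ = \del^{\otimes 0}(t)$. For the inductive step, condition on $\Z_l$: with probability $\del_l$ we have $\Z_l=+\infty$ and hence $[1-e^{t-\Z_1-\cdots-\Z_l}]_+=1$, contributing $\del_l$; with the remaining probability $1-\del_l$, $\Z_l$ takes one of the finite values $\pm\eps_l$ independently of $\Z_1+\cdots+\Z_{l-1}$, so pulling the inner expectation through and invoking the induction hypothesis gives $\del^{\otimes l-1}(t-\Z_l)$, and averaging over the two values with conditional masses $\tfrac{e^{\eps_l}}{e^{\eps_l}+1}$ and $\tfrac{1}{e^{\eps_l}+1}$ reproduces the bracketed term in \eqref{eqn:approxdp_composition_hetro}. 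The main subtlety I anticipate is the consistent treatment of the point mass at $+\infty$---reading $[1-e^{t-\infty}]_+$ as $1$ regardless of $t$---which is exactly why I favor the direct PLD argument over invoking the Laplace-domain machinery of \autoref{thm:compose_privacy_profile}, whose stated absolute-continuity hypothesis does not formally cover this case.
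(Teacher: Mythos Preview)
Your proposal is correct and takes a genuinely different route from the paper's own proof. Both start identically---using \autoref{thm:rr_dominates_approxdp} and \autoref{lem:adaptive_profile} to reduce the adaptive $k$-fold composition to the non-adaptive product of randomized-response mechanisms---but diverge in how the profile of that product is computed. The paper applies its convolution formula \autoref{thm:compose_privacy_profile}: it writes $\del_\mathrm{RR}^{\eps_l,\del_l}$ as a piecewise function via Heaviside steps, computes $\ddot\del_\mathrm{RR}-\dot\del_\mathrm{RR}$ in the distributional sense (producing Dirac deltas at $\pm\eps_l$), and evaluates the convolution integral against $\del^{\otimes l-1}$ explicitly. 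Your approach instead reads off the three-point PLD of $\M_\mathrm{RR}^{\eps_l,\del_l}$ directly from \eqref{eqn:rr_defn} and uses the expectation representation \eqref{eqn:dp_via_Z}, so that conditioning on $\Z_l$ immediately yields the recursion. Your route is more elementary and, as you note, entirely sidesteps the absolute-continuity hypothesis of \autoref{thm:compose_privacy_profile}, which does not hold when $\del_l>0$ (the paper itself flags this as an unresolved issue in the remark following that theorem). The paper's route, on the other hand, serves its expository purpose of demonstrating the Laplace-transform machinery in action. Your treatment of the $+\infty$ atom is sound: since no $\Z_i$ can equal $-\infty$ under $P_i$, the sum is unambiguously $+\infty$ whenever any summand is, and $[1-e^{-\infty}]_+=1$ matches $\del^{\otimes l-1}(-\infty)=1$, which follows from the recursion.
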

\begin{figure}[t]
	\centering
	\includegraphics[width=\linewidth]{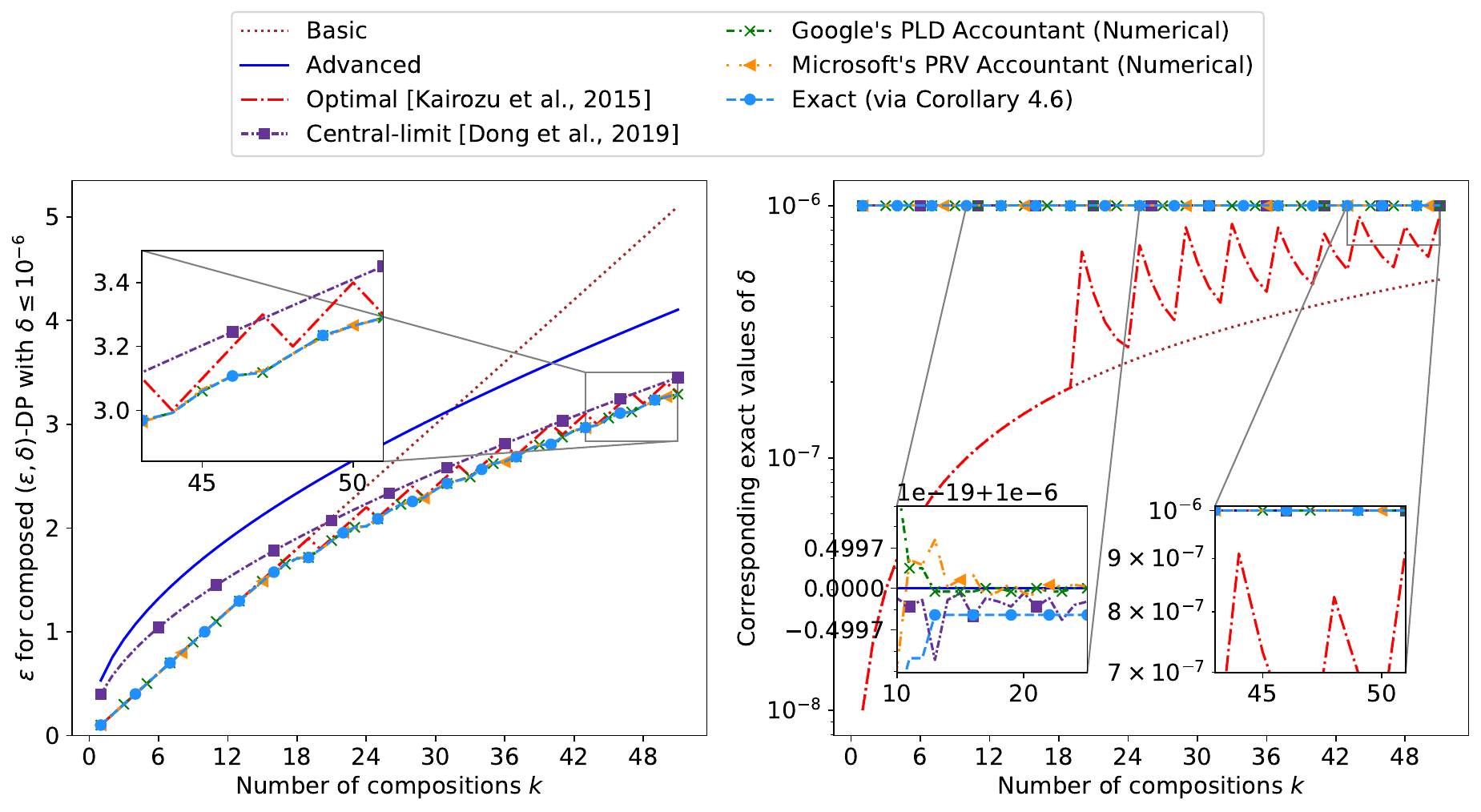}
	\caption[caption of figure]{\label{fig:Laplace_composition}
		Comparison of $(\eps, \del)$-DP bounds from various composition theorems for $k$-fold composition of a $(0.1, 10^{-8})$-DP point guarantee, with the budget constraint $\del < 10^{-6}$. The spikes in the right plot, showing exact $\del < 10^{-6}$ values from \citet[Theorem 3.3]{kairouz2015composition}, occur because, out of a set of $\lfloor k/2 \rfloor$ DP point guarantees by their result, we select the smallest $\eps$ corresponding to the largest $\del < 10^{-6}$ in the set, which fluctuates as $k$ increases.
	}
\end{figure}

\autoref{thm:approxdp_composition_hetro} introduces a convenient recursive method for computing compositions of heterogeneous DP guarantees. While this bound matches each of the $\lfloor k \rfloor$ discrete $(\eps, \del)$-DP values given by the optimal composition theorem from \citet[Theorem 3.3]{kairouz2015composition}, our result offers a continuous curve over all \(\eps \in \R\). Consequently, for a given budget on $\del$, our bound provides a tighter limit on $\eps$ than that of \citep{kairouz2015composition}, as shown in~\autoref{fig:Laplace_composition}. Furthermore, the recursion reduces to the following exact expression when composing homogeneous DP guarantees.
\begin{corollary}
	\label{corr:approxdp_composition_homo}
	For any $\eps \geq 0$, $\del \in [0,1]$, the $k$-fold composition of $(\eps, \del)$-DP mechanisms satisfies $(\eps, \del^{\otimes k}(\eps))$-DP for all $\eps$, where
	\begin{equation}
		\label{eqn:approxdp_composition_homp}
		\forall t \in \R \  : \ \del^{\otimes k}(t) = 1 - (1-\del)^k \left( 1- \expec{Y \leftarrow  \mathrm{Binomial}\left(k, \frac{e^\eps}{1+e^\eps}\right)}{1 - e^{t-\eps \cdot (2Y - k)}}_+\right).
	\end{equation}
\end{corollary}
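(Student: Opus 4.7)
The plan is to prove this corollary by induction on $k$, starting from the recursion in \autoref{thm:approxdp_composition_hetro}. Specializing to the homogeneous case $\eps_i = \eps$ and $\del_i = \del$, and writing $p = e^\eps/(e^\eps + 1)$ and $q = 1/(e^\eps + 1)$, the recursion reads
\[\del^{\otimes l}(t) = \del + (1 - \del)\bigl[p \cdot \del^{\otimes l-1}(t - \eps) + q \cdot \del^{\otimes l-1}(t + \eps)\bigr].\]
The key preparatory step is to substitute $m^{\otimes l}(t) := (1 - \del^{\otimes l}(t))/(1-\del)^l$, which strips out both the additive $\del$ appearing at each level and the multiplicative $(1-\del)^l$ prefactor. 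Since $p + q = 1$, the induced recursion for $m^{\otimes l}$ collapses into a clean two-term convex combination
\[m^{\otimes l}(t) = p \cdot m^{\otimes l-1}(t - \eps) + q \cdot m^{\otimes l-1}(t + \eps), \qquad m^{\otimes 0}(t) = 1 - \maxi{1 - e^t},\]
so the claim reduces to showing $m^{\otimes k}(t) = 1 - \expec{Y \leftarrow \mathrm{Binomial}(k, p)}{\maxi{1 - e^{t - \eps(2Y - k)}}}$.

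For the induction step, expand $m^{\otimes k-1}(t - \eps)$ and $m^{\otimes k-1}(t + \eps)$ using the hypothesis. The shift $t \mapsto t - \eps$ reindexes the inner exponent as $2j - (k-1) \mapsto 2(j+1) - k$, while $t \mapsto t + \eps$ gives $2j - (k-1) \mapsto 2j - k$. Relabel both resulting sums in a common index $l$ that represents the number of $+\eps$ increments out of $k$ steps. The interior coefficients then merge via Pascal's identity $\binom{k-1}{l-1} + \binom{k-1}{l} = \binom{k}{l}$ into the single binomial weight $\binom{k}{l} p^l q^{k-l}$, while the boundary terms $l = 0$ and $l = k$ are contributed by exactly one of the two shifted sums and already carry the correct coefficients $q^k$ and $p^k$ respectively. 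The constant parts combine as $p \cdot 1 + q \cdot 1 = 1$, so $m^{\otimes k}(t) = 1 - \sum_{l=0}^k \binom{k}{l} p^l q^{k-l} \maxi{1 - e^{t - \eps(2l - k)}}$, which is exactly the claimed binomial expectation. Substituting back via $\del^{\otimes k}(t) = 1 - (1-\del)^k m^{\otimes k}(t)$ gives the statement.

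There is no serious obstacle here; the argument is essentially Pascal's identity applied to a biased random walk. The only piece of foresight required is the substitution $m^{\otimes l} = (1-\del^{\otimes l})/(1-\del)^l$, which decouples the additive $\del$ terms from the multiplicative $(1-\del)^l$ prefactor and exposes an underlying $\pm\eps$-step random walk with success probability $p$. The closed form also admits a transparent interpretation: $k$-fold composition of $\M_\mathrm{RR}^{\eps, \del}$ suffers a catastrophic leak with probability $1 - (1-\del)^k$, and otherwise its composed privacy loss distributes as $\eps(2Y - k)$ where $Y$ counts the number of ``large likelihood ratio'' outputs among the $k$ independent responses.
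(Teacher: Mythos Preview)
Your proof is correct and takes essentially the same approach as the paper: both specialize the recursion of \autoref{thm:approxdp_composition_hetro} to the homogeneous case, expose the underlying $\pm\eps$ biased random walk with step probability $p = e^\eps/(1+e^\eps)$, and collapse it into a Binomial expectation. The only cosmetic difference is that you first substitute $m^{\otimes l}(t) = (1-\del^{\otimes l}(t))/(1-\del)^l$ and then close the induction via Pascal's identity, whereas the paper unrolls the recursion directly as an iterated Bernoulli expectation and collects the geometric $\del$-series $\del\sum_{i=1}^l(1-\del)^{i-1} = 1-(1-\del)^l$ at the end.
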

\autoref{fig:Laplace_composition} illustrates the enhanced privacy quantification achieved by our bound for \(k\)-fold composition of $(\eps, \del)$-DP guarantees. We also compare our results with Google's PLDAccountant~\citep{doroshenko2022connect} and Microsoft's PRVAccountant~\citep{gopi2021numerical}, which utilize the Discrete Fast Fourier Transform. This comparison shows that our analytical approach closely matches the values approximated by these numerical methods. Additionally, through \autoref{fig:composition_numerical_compare} in the appendix, we show that these numerical methods can be unstable at edge case values and yield non-negligible gaps in their approximation.

\section{Asymmetry and DP Notion Equivalences}
\label{sec:subsampling}

An important characteristic of functional notions of DP is that they can be \emph{asymmetric}, in the sense that switching $P \leftrightarrow Q$ may yield a different curve $\del_{Q|P}$ than the original $\del_{P|Q}$. In context of a mechanism $\M$, such an asymmetry between its output distribution $P = \M(\D)$ and $Q = \M(\D')$ means that a sample $\Thet \sim P$ might reveal more (or less) information that it came from $\D$ than a sample $\Thet' \sim Q$ reveals about coming from $\D'$. Since \( \D \) and \( \D' \) are neighboring datasets differing by the presence or absence of a single record, this asymmetry in indistinguishability means that an attacker might have an easier time trying to detect the presence of a record from the output of \( \mathcal{M} \) than to detect its absence. In other words, optimal hypothesis tests would experience a skew in the trade-off between their false positive and false negative rates.

Such skewness often arises due to \emph{subsampling}, which is a heavily used technique to boost an algorithm's intrinsic privacy properties~\citep{balle2020privacy,bassily2014private,abadi2016deep,wang2015privacy,balle2018privacy}. For instance, Poisson subsampling in the context of the \emph{add or remove} relationship between neighboring datasets, which is commonly used in DP-SGD~\citep{abadi2016deep} algorithm, skews the privacy profile of a base mechanism, as illustrated in the following example.

\textbf{Effect of Poisson Subsampling on $\del_{P|Q}$.} Without loss of generality, assume datasets $\D\simeq\D'$ are such that the record at index $i$ is present in $\D$ but empty in $\D'$, i.e., $\D[i] \neq \D'[i] = \bot$. If we randomly filter the records using an iid selection mask $U \sim \mathrm{Bernoulli}(\lambda)^{\otimes \n}$, the subsampled datasets $\D_U$ and $\D'_U$ are defined as follows for all $i \in [n]$:
\begin{equation}
	\D_U[i] \eqdef \begin{cases} \D[i] &\text{ if }  U_i = 1 \\ \bot  &\text{ otherwise} \end{cases}  \quad \text{and}\quad  \D'_U[i] \eqdef \begin{cases} \D'[i] &\text{ if }  U_i = 1 \\ \bot  &\text{ otherwise} \end{cases}.
\end{equation}
The distributions $P$ and $Q$ of the outputs $\M(\D_U)$ and $\M(\D'_U)$ for any algorithm $\M$ will be identical with probability $1 - \lambda$, which amplifies privacy considerably. More precisely, let $P_\mathrm{IN}$ and $Q_\mathrm{IN}$ be the distributions of $\M(\D_U)$ and $\M(\D'_U)$ conditioned on $i \in U$, and $P_\mathrm{OUT}$ and $Q_\mathrm{OUT}$ be the distributions conditioned on $i \not\in U$. For any event $S \subseteq \OO$, we have:
\begin{align}
	P(S) &= \Pr[i \not\in U]\cdot P_\mathrm{OUT}(S) + \Pr[i \in U] \cdot P_\mathrm{IN}(S)  \nonumber \\
	     &= (1-\lambda) \cdot Q_\mathrm{OUT}(S) + \lambda \cdot P_\mathrm{IN}(S) \label{eqn:subsampling1} \\
	     &= (1-\lambda) \cdot Q_\mathrm{IN}(S) + \lambda \cdot P_\mathrm{IN}(S). \label{eqn:subsampling2}
\end{align}
Equation~\eqref{eqn:subsampling1} holds because if $i \not\in U$, then $\D_U = \D'_U$, and equation~\eqref{eqn:subsampling2} holds because the $i$th record in $\D'$ is empty, so conditioning on $i \in U$ or $i \not\in U$ does not affect the output distribution, i.e., $Q_\mathrm{IN} = Q_\mathrm{OUT} = Q$. Using this fact, we the following theorem shows the \emph{exact effect} Poisson subsampling has on the privacy profile.

\begin{theorem}[Poisson subsampling on add/remove neighbours]
	\label{thm:subsampling}
	Let $0 < \lambda \leq 1$. For any distributions $P$, $Q$, $P_\mathrm{IN}$ and $Q_\mathrm{IN}$ such that $P = \lambda P_\mathrm{IN} + (1-\lambda) Q_\mathrm{IN}$ and $Q = Q_\mathrm{IN}$, 
	\begin{align}
		\label{eqn:subsampling}
		\del_{P| Q}(\eps) = \begin{cases} \lambda \del_{P_\mathrm{IN}|Q_\mathrm{IN}}(\log(1 + (e^{\eps} - 1)/\lambda)) & \text{if } \eps > \log(1-\lambda), \\1 - e^\eps &\text{otherwise}. \end{cases}
	\end{align}
\end{theorem}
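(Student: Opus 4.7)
The plan is to derive the formula directly from the integral form of the privacy profile given in~\eqref{eqn:profile}, without invoking any of the Laplace transform machinery from earlier sections. Substituting the mixture decomposition $P = \lambda P_\mathrm{IN} + (1-\lambda) Q_\mathrm{IN}$ and $Q = Q_\mathrm{IN}$ into $\del_{P|Q}(\eps) = \int_\OO [P(\thet) - e^\eps Q(\thet)]_+ \dif{\thet}$, I would collect the two terms involving $Q_\mathrm{IN}$ to obtain the integrand
\begin{equation}
	\bigl[\lambda P_\mathrm{IN}(\thet) - (e^\eps - (1-\lambda)) Q_\mathrm{IN}(\thet)\bigr]_+.
\end{equation}
The sign of the coefficient $e^\eps - (1-\lambda)$ then naturally splits the analysis into the two regimes that appear in~\eqref{eqn:subsampling}.

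In the regime $\eps > \log(1-\lambda)$, the coefficient $e^\eps - (1-\lambda)$ is strictly positive, so I can factor $\lambda$ out of the integrand and write
\begin{equation}
	\del_{P|Q}(\eps) = \lambda \int_\OO \bigl[P_\mathrm{IN}(\thet) - e^{\eps'} Q_\mathrm{IN}(\thet)\bigr]_+ \dif{\thet} = \lambda \cdot \del_{P_\mathrm{IN}|Q_\mathrm{IN}}(\eps'),
\end{equation}
where $e^{\eps'} := (e^\eps - (1-\lambda))/\lambda = 1 + (e^\eps-1)/\lambda$, which is precisely $\eps' = \log(1 + (e^\eps-1)/\lambda)$, matching the first branch of~\eqref{eqn:subsampling}. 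In the complementary regime $\eps \leq \log(1-\lambda)$, the coefficient $e^\eps - (1-\lambda)$ is non-positive, so the bracketed quantity inside $[\cdot]_+$ is pointwise non-negative (both summands have the same sign) and the $[\cdot]_+$ drops. The integral then splits by linearity, using $\int P_\mathrm{IN} = \int Q_\mathrm{IN} = 1$, to give $\lambda - (e^\eps - (1-\lambda)) = 1 - e^\eps$, matching the second branch.

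The argument is essentially bookkeeping, and the main subtlety to watch for is the boundary case $\eps = \log(1-\lambda)$: at this point $e^{\eps'} = 1$, so $\del_{P_\mathrm{IN}|Q_\mathrm{IN}}(\eps') = \TV{P_\mathrm{IN}, Q_\mathrm{IN}}$, and one should verify that $\lambda \cdot \TV{P_\mathrm{IN}, Q_\mathrm{IN}}$ agrees with $1 - e^\eps = \lambda$ only in the extreme case where $P_\mathrm{IN}, Q_\mathrm{IN}$ are mutually singular; in general the two branches need not coincide at the junction and the convention on which branch takes the boundary point needs to be fixed (the stated formula assigns the boundary $\eps = \log(1-\lambda)$ to the lower branch $1-e^\eps$). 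No genuine obstacle arises beyond this, since the entire derivation is a direct manipulation of the defining integral of the privacy profile, and crucially does not require $P_\mathrm{IN} \ll Q_\mathrm{IN}$ or any absolute-continuity assumption, so the result is valid for arbitrary base distributions under Poisson subsampling.
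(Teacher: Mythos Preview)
Your proof is correct and takes a more elementary route than the paper's. The paper works in the privacy-loss-distribution picture: it expresses $\del_{P|Q}(\eps)$ as $\expec{}{[e^{-\Z'_\lambda} - e^\eps]_+}$ via the Laplace-transform identity~\eqref{eqn:dp_as_laplace_2}, relates the subsampled loss variable $\Z'_\lambda \sim \privloss{Q}{\lambda P_\mathrm{IN} + (1-\lambda)Q_\mathrm{IN}}$ to the base loss $\Z' \sim \privloss{Q}{P_\mathrm{IN}}$ through $\Z'_\lambda = -\log(1-\lambda + \lambda e^{-\Z'})$, and then substitutes to recover the two-case formula. Your argument bypasses all of this and works directly with the defining integral $\int_\OO [P - e^\eps Q]_+\,\dif\thet$; the case split then falls out of the sign of the coefficient $e^\eps - (1-\lambda)$ rather than from whether the argument of a logarithm stays positive. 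Both proofs are short, but yours is self-contained and makes clear that this particular amplification result does not actually require the Laplace machinery of the paper.

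One minor slip in your boundary-case remark: you write that at $\eps = \log(1-\lambda)$ one has $e^{\eps'} = 1$ and hence $\del_{P_\mathrm{IN}|Q_\mathrm{IN}}(\eps')$ equals the total variation. In fact $e^{\eps'} = (e^\eps - (1-\lambda))/\lambda = 0$ there, i.e., $\eps' \to -\infty$, so the first branch tends to $\lambda \cdot \del_{P_\mathrm{IN}|Q_\mathrm{IN}}(-\infty) = \lambda \cdot 1 = \lambda$, which \emph{does} coincide with the second branch $1 - e^\eps = 1 - (1-\lambda) = \lambda$. The two branches therefore meet continuously at the boundary for every base pair $(P_\mathrm{IN}, Q_\mathrm{IN})$, and no boundary convention is needed; this subtlety simply evaporates once the arithmetic is corrected.
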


\begin{figure}[!htb]
	\centering
	\includegraphics[width=\linewidth]{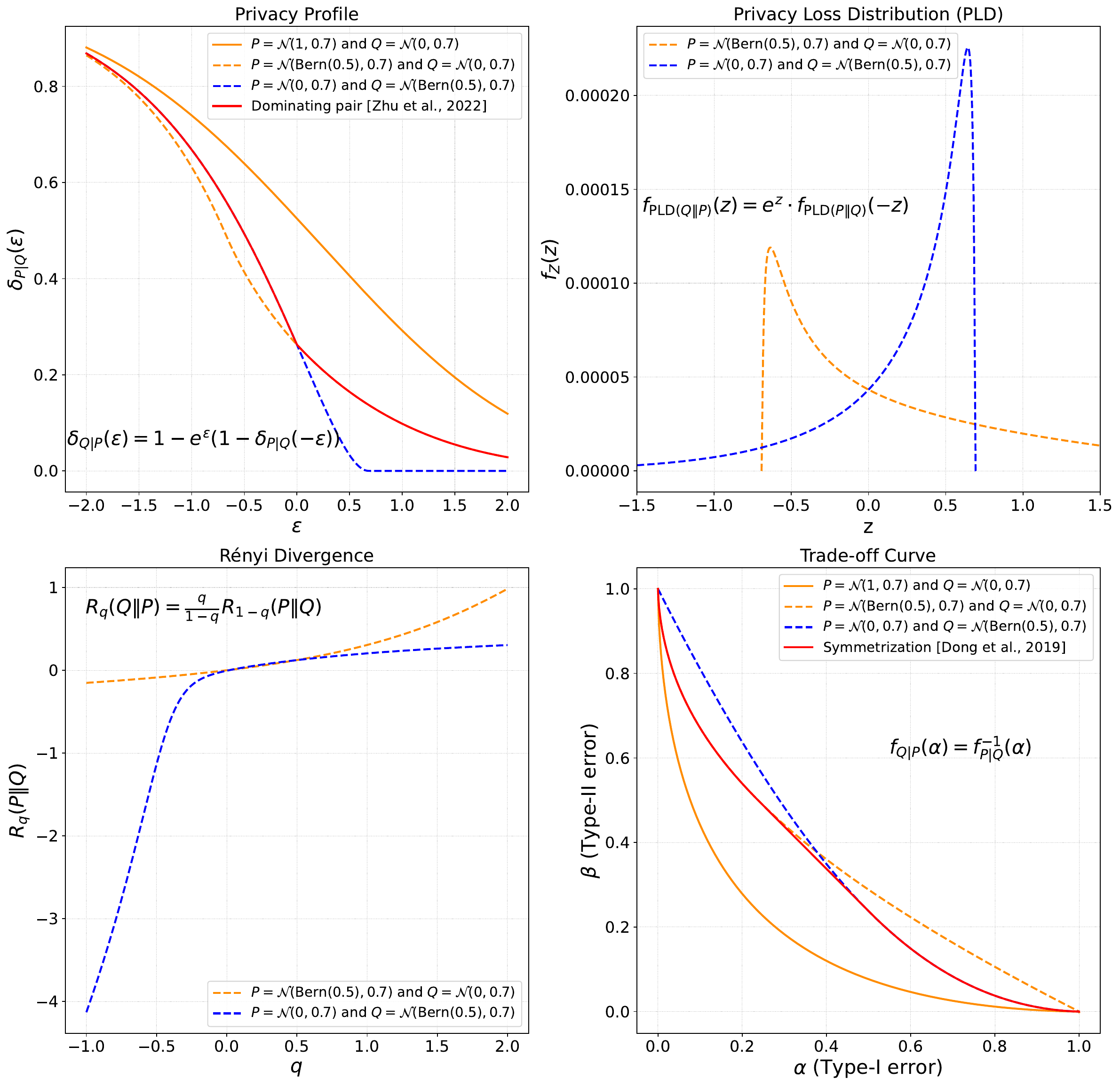}
	\caption{\label{fig:notion_comparisions} Visualization of the four functional notions of DP, namely privacy profile $\del_{P|Q}(\eps)$ as a function of $\eps$, the generalized density function of privacy loss distribution $\privloss{P}{Q}$, the R\'enyi divergence $\Ren{\q}{P}{Q}$ as a function of order $\q$, and the trade-off function $f_{P|Q}(\alpha)$ for hypothesis testing between $P$ and $Q$. We also provide the reversal theorems for each of the plots.}
\end{figure}

This privacy amplification result was first provided by~\citet{li2012sampling}, and since has appeared in several works~\citep{balle2020privacy,balle2018privacy,abadi2016deep,steinke2022composition}. But unlike other works, we present this amplification effect as a \emph{single curve} that exactly captures the impact of subsampling on both directions. This effect is visualized in the top-left plot in~\autoref{fig:notion_comparisions} (solid orange curve vs. dashed orange curve).

\textbf{Imprecise Handling of Asymmetry.} We observe that several works on privacy handle asymmetric notions of DP in a somewhat imprecise manner, which can introduce significant slack in the analysis or numerical bounds. For example, \citet[Theorem 4.2]{dong2019gaussian} uses the biconjugation operation \( \min\{f_{P|Q}, f_{Q|P}\}^{**} \) to quantify amplification for Poisson subsampling on the trade-off curve \( f_{P|Q} \) (corresponding to the subsampled profile \( \del_{P|Q} \) in~\autoref{thm:subsampling}), leading to an overestimation of the actual trade-off curve \( f_{P|Q} \) (see the bottom-right plot in \autoref{fig:notion_comparisions} for a comparison of the overestimated trade-off after symmetrization in the red line versus the actual trade-off \( f_{P|Q} \) in the orange dashed line). Similarly, \citet[Proposition 30]{dong2019gaussian} defines a dominating profile under subsampling by taking the max of \( \del_{P|Q} \) and \( \del_{Q|P} \) (see the top-left plot in \autoref{fig:notion_comparisions} to compare the overestimated privacy profile in red with the actual profiles).

The reasoning behind these operations is that a DP guarantee function should hold in both directions, requiring consideration of the worst-case aspects from each direction. However, these symmetrization steps can introduce small gaps that may compound significantly when several privacy profiles are composed. Moreover, these operations disrupt the equivalence across different privacy notions, as after symmetrization, converting to another notion, such as the privacy profile, no longer aligns with the actual privacy profile.

\textbf{Proposed Solution.} We note that all these functional notions of privacy possess a reversal property (see Remarks~\ref{rem:dp_reversal},~\ref{rem:rdp_reversal},~\ref{rem:f_reversal}, and~\ref{rem:pld_reversal}), which allows us to bypass the need for symmetrization operations. The key idea is to retain the chosen characterizing function \emph{in only one direction}, without attempting to control its asymmetry. Additionally, while composing functional notions, we refrain from changing the notion of adjacency (or its direction). Doing this enables lossless composition operations, thanks to~\autoref{thm:compose_privacy_profile} or R\'enyi DP composition~\citep{mironov2017renyi}, without running the risk of accidentally underestimating the privacy. When a pointwise DP guarantee is required, we can leverage the reversal properties to query the curve at a specific budget constraint in both directions, providing the max of the two, which results in an exactly-tight pointwise DP guarantee. Additionally, we also avoid the overhead of maintaining the functional representation, such as the PLD, in both directions as currently done in Google's \href{https://github.com/google/differential-privacy/blob/0a80e37c1d08feaae798f9ce54cf12c1781a5927/python/dp_accounting/dp_accounting/pld/privacy_loss_distribution.py#L99-L101}{PLDAccountant}.

\section{Conclusion}
\label{sec:conclusion}

In summary, this paper presents a novel interpretation of differential privacy by leveraging time-frequency dualities across various functional representations, including privacy profiles, R\'enyi divergence, and privacy loss distributions. By framing these within the context of Laplace transforms, we develop a versatile analytical toolkit for DP, enhancing both theoretical understanding and practical composition methods. Our approach addresses limitations in existing adaptive composition bounds, provides continuous guarantees for composed privacy profiles, and bridges gaps between different DP frameworks without needing approximations or symmetrizations. Together, these results push forward the capabilities of differential privacy research, setting a foundation for more nuanced and robust privacy-preserving algorithms.

\bibliography{references}

\begin{thebibliography}{32}
\providecommand{\natexlab}[1]{#1}
\providecommand{\url}[1]{\texttt{#1}}
\expandafter\ifx\csname urlstyle\endcsname\relax
  \providecommand{\doi}[1]{doi: #1}\else
  \providecommand{\doi}{doi: \begingroup \urlstyle{rm}\Url}\fi

\bibitem[Abadi et~al.(2016)Abadi, Chu, Goodfellow, McMahan, Mironov, Talwar,
  and Zhang]{abadi2016deep}
M.~Abadi, A.~Chu, I.~Goodfellow, H.~B. McMahan, I.~Mironov, K.~Talwar, and
  L.~Zhang.
\newblock Deep learning with differential privacy.
\newblock In \emph{Proceedings of the 2016 ACM SIGSAC conference on computer
  and communications security}, pages 308--318, 2016.

\bibitem[Asoodeh et~al.(2020)Asoodeh, Liao, Calmon, Kosut, and
  Sankar]{asoodeh2020better}
S.~Asoodeh, J.~Liao, F.~P. Calmon, O.~Kosut, and L.~Sankar.
\newblock A better bound gives a hundred rounds: Enhanced privacy guarantees
  via f-divergences.
\newblock In \emph{2020 IEEE International Symposium on Information Theory
  (ISIT)}, pages 920--925. IEEE, 2020.

\bibitem[Balle and Wang(2018)]{balle2018improving}
B.~Balle and Y.-X. Wang.
\newblock Improving the gaussian mechanism for differential privacy: Analytical
  calibration and optimal denoising.
\newblock In \emph{International Conference on Machine Learning}, pages
  394--403. PMLR, 2018.

\bibitem[Balle et~al.(2018)Balle, Barthe, and Gaboardi]{balle2018privacy}
B.~Balle, G.~Barthe, and M.~Gaboardi.
\newblock Privacy amplification by subsampling: Tight analyses via couplings
  and divergences.
\newblock \emph{Advances in neural information processing systems}, 31, 2018.

\bibitem[Balle et~al.(2020{\natexlab{a}})Balle, Barthe, and
  Gaboardi]{balle2020privacy}
B.~Balle, G.~Barthe, and M.~Gaboardi.
\newblock Privacy profiles and amplification by subsampling.
\newblock \emph{Journal of Privacy and Confidentiality}, 10\penalty0 (1),
  2020{\natexlab{a}}.

\bibitem[Balle et~al.(2020{\natexlab{b}})Balle, Barthe, Gaboardi, Hsu, and
  Sato]{balle2020hypothesis}
B.~Balle, G.~Barthe, M.~Gaboardi, J.~Hsu, and T.~Sato.
\newblock Hypothesis testing interpretations and renyi differential privacy.
\newblock In \emph{International Conference on Artificial Intelligence and
  Statistics}, pages 2496--2506. PMLR, 2020{\natexlab{b}}.

\bibitem[Bassily et~al.(2014)Bassily, Smith, and Thakurta]{bassily2014private}
R.~Bassily, A.~Smith, and A.~Thakurta.
\newblock Private empirical risk minimization: Efficient algorithms and tight
  error bounds.
\newblock In \emph{2014 IEEE 55th annual symposium on foundations of computer
  science}, pages 464--473. IEEE, 2014.

\bibitem[Bun and Steinke(2016)]{bun2016concentrated}
M.~Bun and T.~Steinke.
\newblock Concentrated differential privacy: Simplifications, extensions, and
  lower bounds.
\newblock In \emph{Theory of Cryptography Conference}, pages 635--658.
  Springer, 2016.

\bibitem[Canonne et~al.(2020)Canonne, Kamath, and Steinke]{canonne2020discrete}
C.~L. Canonne, G.~Kamath, and T.~Steinke.
\newblock The discrete gaussian for differential privacy.
\newblock \emph{Advances in Neural Information Processing Systems},
  33:\penalty0 15676--15688, 2020.

\bibitem[Cohen(2007)]{cohen2007numerical}
A.~M. Cohen.
\newblock \emph{Numerical methods for Laplace transform inversion}, volume~5.
\newblock Springer Science \& Business Media, 2007.

\bibitem[Dong et~al.(2019)Dong, Roth, and Su]{dong2019gaussian}
J.~Dong, A.~Roth, and W.~J. Su.
\newblock Gaussian differential privacy.
\newblock \emph{arXiv preprint arXiv:1905.02383}, 2019.

\bibitem[Doroshenko et~al.(2022)Doroshenko, Ghazi, Kamath, Kumar, and
  Manurangsi]{doroshenko2022connect}
V.~Doroshenko, B.~Ghazi, P.~Kamath, R.~Kumar, and P.~Manurangsi.
\newblock Connect the dots: Tighter discrete approximations of privacy loss
  distributions.
\newblock \emph{arXiv preprint arXiv:2207.04380}, 2022.

\bibitem[Dwork(2006)]{dwork2006differential}
C.~Dwork.
\newblock Differential privacy.
\newblock In \emph{International colloquium on automata, languages, and
  programming}, pages 1--12. Springer, 2006.

\bibitem[Dwork and Rothblum(2016)]{dwork2016concentrated}
C.~Dwork and G.~N. Rothblum.
\newblock Concentrated differential privacy.
\newblock \emph{arXiv preprint arXiv:1603.01887}, 2016.

\bibitem[Dwork et~al.(2010)Dwork, Rothblum, and Vadhan]{dwork2010boosting}
C.~Dwork, G.~N. Rothblum, and S.~Vadhan.
\newblock Boosting and differential privacy.
\newblock In \emph{2010 IEEE 51st annual symposium on foundations of computer
  science}, pages 51--60. IEEE, 2010.

\bibitem[Dwork et~al.(2014)Dwork, Roth, et~al.]{dwork2014algorithmic}
C.~Dwork, A.~Roth, et~al.
\newblock The algorithmic foundations of differential privacy.
\newblock \emph{Foundations and Trends{\textregistered} in Theoretical Computer
  Science}, 9\penalty0 (3--4):\penalty0 211--407, 2014.

\bibitem[Dyke and Dyke(2001)]{dyke2001introduction}
P.~P. Dyke and P.~Dyke.
\newblock \emph{An introduction to Laplace transforms and Fourier series}.
\newblock Springer, 2001.

\bibitem[Gopi et~al.(2021)Gopi, Lee, and Wutschitz]{gopi2021numerical}
S.~Gopi, Y.~T. Lee, and L.~Wutschitz.
\newblock Numerical composition of differential privacy.
\newblock \emph{Advances in Neural Information Processing Systems},
  34:\penalty0 11631--11642, 2021.

\bibitem[Kairouz et~al.(2015)Kairouz, Oh, and
  Viswanath]{kairouz2015composition}
P.~Kairouz, S.~Oh, and P.~Viswanath.
\newblock The composition theorem for differential privacy.
\newblock In \emph{International conference on machine learning}, pages
  1376--1385. PMLR, 2015.

\bibitem[Koskela et~al.(2020)Koskela, J{\"a}lk{\"o}, and
  Honkela]{koskela2020computing}
A.~Koskela, J.~J{\"a}lk{\"o}, and A.~Honkela.
\newblock Computing tight differential privacy guarantees using fft.
\newblock In \emph{International Conference on Artificial Intelligence and
  Statistics}, pages 2560--2569. PMLR, 2020.

\bibitem[Li et~al.(2012)Li, Qardaji, and Su]{li2012sampling}
N.~Li, W.~Qardaji, and D.~Su.
\newblock On sampling, anonymization, and differential privacy or,
  k-anonymization meets differential privacy.
\newblock In \emph{Proceedings of the 7th ACM Symposium on Information,
  Computer and Communications Security}, pages 32--33, 2012.

\bibitem[Miller(1951)]{miller1951moment}
J.~F. Miller.
\newblock Moment-generating functions and laplace transforms.
\newblock \emph{Journal of the Arkansas Academy of Science}, 4\penalty0
  (1):\penalty0 97--100, 1951.

\bibitem[Mironov(2017)]{mironov2017renyi}
I.~Mironov.
\newblock R{\'e}nyi differential privacy.
\newblock In \emph{2017 IEEE 30th computer security foundations symposium
  (CSF)}, pages 263--275. IEEE, 2017.

\bibitem[Murtagh and Vadhan(2015)]{murtagh2015complexity}
J.~Murtagh and S.~Vadhan.
\newblock The complexity of computing the optimal composition of differential
  privacy.
\newblock In \emph{Theory of Cryptography Conference}, pages 157--175.
  Springer, 2015.

\bibitem[Oppenhiem et~al.(1996)Oppenhiem, Willsky, and
  Nawab]{oppenhiem1996signals}
A.~V. Oppenhiem, A.~S. Willsky, and S.~H. Nawab.
\newblock Signals and systems, 1996.

\bibitem[Orloff(2015)]{orloff2015uniqueness}
J.~Orloff.
\newblock Uniqueness of laplace transform, 2015.

\bibitem[Privacy()]{differentialprivacyTightZCDP}
D.~Privacy.
\newblock {T}ight {R}{D}{P} \& z{C}{D}{P} {B}ounds from {P}ure {D}{P} ---
  differentialprivacy.org.
\newblock \url{https://differentialprivacy.org/pdp-to-zcdp/}.
\newblock [Accessed 12-11-2024].

\bibitem[Sason and Verd{\'u}(2016)]{sason2016f}
I.~Sason and S.~Verd{\'u}.
\newblock $ f $-divergence inequalities.
\newblock \emph{IEEE Transactions on Information Theory}, 62\penalty0
  (11):\penalty0 5973--6006, 2016.

\bibitem[Sommer et~al.(2018)Sommer, Meiser, and Mohammadi]{sommer2018privacy}
D.~Sommer, S.~Meiser, and E.~Mohammadi.
\newblock Privacy loss classes: The central limit theorem in differential
  privacy.
\newblock \emph{Cryptology ePrint Archive}, 2018.

\bibitem[Steinke(2022)]{steinke2022composition}
T.~Steinke.
\newblock Composition of differential privacy \& privacy amplification by
  subsampling.
\newblock \emph{arXiv preprint arXiv:2210.00597}, 2022.

\bibitem[Wang et~al.(2015)Wang, Fienberg, and Smola]{wang2015privacy}
Y.-X. Wang, S.~Fienberg, and A.~Smola.
\newblock Privacy for free: Posterior sampling and stochastic gradient monte
  carlo.
\newblock In \emph{International Conference on Machine Learning}, pages
  2493--2502. PMLR, 2015.

\bibitem[Zhu et~al.(2022)Zhu, Dong, and Wang]{zhu2022optimal}
Y.~Zhu, J.~Dong, and Y.-X. Wang.
\newblock Optimal accounting of differential privacy via characteristic
  function.
\newblock In \emph{International Conference on Artificial Intelligence and
  Statistics}, pages 4782--4817. PMLR, 2022.

\end{thebibliography}

\newpage

\appendix

\section{Appendix}
\subsection{Table of Properties of Laplace Transform}
\label{app:prop_table}
\begin{table}[!htbp]
	\centering
	\caption{\label{tab:laplaceTransformProp}Properties of the Laplace Transform. Let $g(t)$ and $h(t)$ be two functions defined for $t \in \R$ and let $a, b \in \R$ be arbitrary constants.}
	\begin{tabularx}{\textwidth}{@{}rML@{}}
		\toprule
		\textbf{Property} & \textbf{Expression} & \multicolumn{1}{l}{} \\
		\midrule
		Linearity : &
			\begin{aligned}
				\UniLap{a g(t) + b h(t)}(s) &= a \UniLap{g(t)}(s) + b \UniLap{h(t)}(s) \\ 
				\BiLap{a g(t) + b h(t)}(s) &= a \BiLap{g(t)}(s) + b \BiLap{h(t)}(s) 
			\end{aligned} & eq:Linearity \\
		\hline 
		Time-Shifting : &
			\begin{aligned}
				\UniLap{g(t - a)\indic{t > a}}(s) &= e^{-as} \UniLap{g(t)}(s), \text{ for } a > 0 \\
				\BiLap{g(t - a)}(s) &= e^{-as} \BiLap{g(t)}(s), \text{ for } a \in \R 
			\end{aligned} & eq:TimeShifting \\
		\hline
		Frequency-Shifting : & 
			\begin{aligned}
				\UniLap{e^{at} g(t)}(s) &= \UniLap{g(t)}(s - a) \\ 
				\BiLap{e^{at} g(t)}(s) &= \BiLap{g(t)}(s - a)
			\end{aligned} & eq:FrequencyShifting \\
		\hline
		Time-Scaling : & 
			\begin{aligned}
				\UniLap{g(at)}(s) &= \frac{1}{a} \UniLap{g(t)}\left(\frac{s}{a}\right) \text{ for } a > 0 \\
				\BiLap{g(at)}(s) &= \frac{1}{|a|} \BiLap{g(t)}\left(\frac{s}{a}\right) \text{ for } a \in \R
			\end{aligned} & eq:Scaling \\
		\hline
		Reversal : &
			\BiLap{g(-t)}(s) = \BiLap{g(t)}(-s) & eq:Reversal \\
		\hline
		Derivative : &
			\begin{aligned}
				\UniLap{\dot g(t)}(s) &= s\UniLap{g(t)}(s) - g(0^+) \\
				\BiLap{\dot g(t)}(s) &= s\BiLap{g(t)}(s) 
			\end{aligned} & eq:Differentiation \\
		\hline
		Integration : & 
			\begin{aligned}
				\UniLap{\int_{0}^t g(t)dt}(s) &= \frac{1}{s}\UniLap{g(t)}(s), \text{ for } \mathfrak{Re}(s) > 0 \\
				\BiLap{\int_{-\infty}^t g(t)dt}(s) &= \frac{1}{s}\BiLap{g(t)}(s), \text{ for } \mathfrak{Re}(s) > 0
			\end{aligned} & eq:BilatIntegration \\
		\hline
		Convolution : & 
			\begin{aligned} 
				\UniLap{\int_0^t g(\tau)h(t-\tau) d\tau}(s) &= \UniLap{g(t)}(s)\cdot\UniLap{h(t)}(s) \\
				\BiLap{\int_{-\infty}^\infty g(\tau)h(t-\tau)d\tau}(s) &= \BiLap{g(t)}(s) \cdot \BiLap{h(t)}(s) 
			\end{aligned} & eq:Convolution \\
		\bottomrule
	\end{tabularx}
\end{table}

\newpage
\subsection{Deferred Proofs for Section~\ref{sec:characterization}}
\label{app:characterization}

\begin{reptheorem}{thm:dp_as_laplace}
	For a random variable $X$, let $F_X(t) \eqdef \Pr[X \leq t]$ denote its \emph{cumulative distribution function} and $f_X(t)$ denote its \emph{generalized probability density function}. 
	Let $P$ and $Q$ be probability distributions and $\Z \sim \privloss{P}{Q}$ and $\Z' \sim \privloss{Q}{P}$ denote their privacy loss random variables.
	If $\Z \sim \privloss{P}{Q}$ and $\Z' \sim \privloss{Q}{P}$, then for all $\eps \in \R$, 
	\begin{align}
		\del_{P|Q}(\eps) &= \UniLap{1 - F_\Z(t + \eps)}(1) \label{eqn:dp_as_laplace_1_appendix}\\
				 &= e^\eps \cdot \UniLap{F_{\Z'}(-t-\eps)} (-1) \label{eqn:dp_as_laplace_2_appendix}\\
				 &= e^\eps \cdot \UniLap{f_{\Z'}(-t-\eps)}(-1) - \UniLap{f_\Z(t + \eps)}(1) \label{eqn:dp_as_laplace_3_appendix} \\
				 &= \UniLap{f_\Z(t + \eps)}(0)  - e^\eps \cdot \UniLap{f_{\Z'}(-t-\eps)}(0). \label{eqn:dp_as_laplace_4_appendix}
	\end{align}
	And, for all $\q \in \mathrm{ROC}_\mathcal{B}\{f_{\Z'}\}$ (or equivalently, $1-\q \in \mathrm{ROC}_{\mathcal{B}}\{f_{\Z}\}$),
	\begin{align}
		e^{(\q - 1) \cdot \Ren{\q}{P}{Q}} = \Eren{\q}{P}{Q} = \mathcal{B}\{f_{\Z}(t)\}(1-\q) = \mathcal{B}\{f_{\Z'}(t)\}(\q). \label{eqn:rdp_as_laplace_appendix}
	\end{align}
\end{reptheorem}
\begin{proof}
	Denote the set where privacy loss $\priv{P}{Q}(\thet)$ exceeds $\eps$ as
	\begin{align}
		S^*_{>\eps} = \{ \thet \in \OO : P(\thet) > e^\eps Q(\thet)\}.
	\end{align}
	Note that for all $S \subset \OO$, and all $\eps \in \R$, we have 
	\begin{equation}
		P(S) - e^\eps Q(S) \leq P(S^*_{>\eps}) - e^\eps Q(S^*_{>\eps}) = \del_{P|Q}(\eps),
	\end{equation}
	because $S^*_{>\eps}$ includes any and all points where $P(\thet) > e^\eps Q(\thet)$. We can express the probabilities $P(S^*_{>\eps})$ and $Q(S^*_{>\eps})$ using the Laplace transform as follows:

	\noindent\begin{minipage}{.5\linewidth}
		\begin{align}
			P(S^*_{>\eps}) &= \int_{S^*_{>\eps}} P(\thet) d\thet \\
				       &= \int_{S^*_{>\eps}} \left(\frac{P(\thet)}{Q(\thet)}\right) Q(\thet) d\thet \\
				       &= \int_{S^*_{>\eps}} e^{-\priv{Q}{P}(\thet)} Q(\thet) d\thet \\
				       &= \int_{\eps^+}^\infty e^{t} \int_{\{\thet \in \OO : \priv{Q}{P}(\thet) = -t\}} Q(\thet) d\thet \\
				       &= \int_{\eps^+}^\infty e^{t} \int_{-t^-}^{-t^+} F_{\Z'}(u)du \\
				       &= \int_{\eps^+}^\infty e^{t} f_{\Z'}(-t) dt \\
				       &= e^{\eps} \int_{0^+}^\infty e^{t'} f_{\Z'}(-t' - \eps) dt' \\
				       &= e^{\eps} \UniLap{f_{\Zc}(-t-\eps)}(-1).
		\end{align}
	\end{minipage}
	\begin{minipage}{.5\linewidth}
		\begin{align}
			Q(S^*_{>\eps}) &= \int_{S^*_{>\eps}} Q(\thet) d\thet \\
				       &= \int_{S^*_{>\eps}} \left(\frac{Q(\thet)}{P(\thet)}\right) P(\thet) d\thet \\
				       &= \int_{S^*_{>\eps}} e^{-\priv{P}{Q}(\thet)} P(\thet) d\thet \\
				       &= \int_{\eps^+}^\infty e^{-t} \int_{\{\thet \in \OO : \priv{P}{Q}(\thet) = t\}} P(\thet) d\thet \\
				       &= \int_{\eps^+}^\infty e^{-t} \int_{t^-}^{t^+} F_{\Z}(u)du \\
				       &= \int_{\eps^+}^\infty e^{-t} f_\Z(t) dt \\
				       &= e^{-\eps} \int_{0^+}^\infty e^{-t'} f_\Z(t' + \eps) dt \\
				       &= e^{-\eps} \UniLap{f_{\Z}(t + \eps)}(1).
		\end{align}
	\end{minipage}
	Combining the two, we get equation~\eqref{eqn:dp_as_laplace_3_appendix}:
	\begin{equation}
		\del_{P|Q}(\eps) = e^{\eps} \cdot \UniLap{f_{\Z'}(-t-\eps)}(-1) - \UniLap{f_{\Z}(t+\eps)}(1).
	\end{equation}
	Alternatively, we can express the profile directly as:
	\begin{align}
		P(S^*_{>\eps}) - e^\eps Q(S^*_{>\eps}) &= \int_{S^*_{>\eps}} (1 - e^\eps \frac{Q(\thet)}{P(\thet)}) P(\thet) d\thet \\
							     &= \int_{S^*_{>\eps}} (1 - e^{\eps - \priv{P}{Q}(\thet)}) P(\thet) d\thet \\
							     &= \int_{\eps^+}^\infty (1-e^{\eps - t}) f_\Z(t) dt \\
							     &= \int_{\eps^+}^\infty f_\Z(t)dt - \int_{0^+}^\infty e^{-t'} f_\Z(t' + \eps) dt' \tag{Change $t = t'+\eps$}\\
							     &= 1 - F_\Z(\eps^+) - \UniLap{f_{\Z}(t+\eps)}(1) \\
							     &\overset{\eqref{eq:Differentiation}}{=} \UniLap{1 - F_{\Z}(t+\eps)}(1).
	\end{align}
	Similarly, we can express it in terms of $\Z'$ as
	\begin{align}
		P(S^*_{>\eps}) - e^\eps Q(S^*_{>\eps}) &= \int_{S^*_{>\eps}} (\frac{P(\thet)}{Q(\thet)} - e^\eps) Q(\thet) d\thet \\
							 &= \int_{S^*_{>\eps}} (e^{\priv{P}{Q}(\thet)} - e^{\eps}) Q(\thet) d\thet \\
							 &= \int_{\eps^+}^{\infty} (e^{t}-e^{\eps}) f_\Zc(-t) dt \\
							 &= e^\eps\left(\int_{0^+}^\infty e^{t'} f_\Zc(-t' - \eps) dt'- \int_{\eps^+}^{\infty} f_\Zc(-t)dt \right) \tag{Change $t = t' +\eps$} \\
							 &= e^\eps \left(\UniLap{f_{\Z'}(-t-\eps)}(-1) -  F_{\Z'}(-\eps^-)\right)  \\
							 &\overset{\eqref{eq:Differentiation}}{=} e^\eps \UniLap{F_{\Z'}(-t-\eps)}(-1).
	\end{align}
	For showing~\eqref{eqn:dp_as_laplace_4_appendix}, we apply the derivative property of Laplace transform to~\eqref{eqn:dp_as_laplace_1_appendix} and~\eqref{eqn:dp_as_laplace_2_appendix} to get
 	\begin{align}
		\del_{P|Q}(\eps) &= \UniLap{1-F_\Z(t+\eps)}(1) \overset{\eqref{eq:Differentiation}}{=} - \UniLap{f_\Z(t+\eps)}(1) + 1 - F_\Z(\eps^+), \quad \text{and}  \\
 		\del_{P|Q}(\eps) &= e^\eps \cdot \UniLap{F_{\Z'}(-t - \eps)}(-1) \overset{\eqref{eq:Differentiation}}{=} e^\eps \cdot \left(\UniLap{f_{\Z'}(-t-\eps)}(-1)  - F_{\Z'}(-\eps^-)\right).
 	\end{align}
 	Adding the above two equations and subtracting~\eqref{eqn:dp_as_laplace_3_appendix} from it, we get
 	\begin{align}
		\del_{P|Q}(\eps) &= 1-F_{\Z}(\eps^+) - e^\eps F_\Zc(-\eps^-) \\
				 &= \Pr[\Z > \eps] - e^\eps \Pr[\Zc < -\eps] \\
				 &= \int_{0^+}^{\infty} e^{0 \cdot t} \cdot f_\Z(t + \eps)dt - e^\eps \cdot \int_{0^+}^{\infty} e^{0\cdot t} f_{\Z'}(-t-\eps) dt \\
				 &= \UniLap{f_\Z(t + \eps)}(0) - e^\eps \cdot \UniLap{f_{\Z'}(-t-\eps)}(0).
 	\end{align}
	For the last part, recall from definition that R\'enyi divergence $\Ren{\q}{P}{Q} = \frac{1}{\q-1} \log \Eren{\q}{P}{Q}$, for which we show the following two equivalences:
	
	\noindent\begin{minipage}{.5\linewidth}
		\begin{align}
			\Eren{\q}{P}{Q}		&= \int_\OO \left(\frac{P(\thet)}{Q(\thet)}\right)^{\q-1}P(\thet)d\thet \\
						&= \int_\OO e^{(\q-1)\priv{P}{Q}(\thet)}P(\thet)d\thet \\
						&= \int_{-\infty}^\infty e^{(\q-1)t} f_\Z(t)dt \\
						&= \BiLap{f_\Z(t)}(1-\q).
		\end{align}
	\end{minipage}
	\begin{minipage}{.5\linewidth}
		\begin{align}
			\Eren{\q}{P}{Q}		 &= \int_\OO \left(\frac{P(\thet)}{Q(\thet)}\right)^{\q}Q(\thet)d\thet \\
						 &= \int_\OO e^{-\q \priv{Q}{P}(\thet)}Q(\thet)d\thet \\
						 &= \int_{-\infty}^\infty e^{-\q t} f_\Zc(t)dt \\
						 &= \BiLap{f_{\Z'}(t)}(\q).
		\end{align}
	\end{minipage}
\end{proof}

\begin{reptheorem}{thm:rr_privacy_profile}[Privacy profile of randomized response]
	Fix $\eps > 0$ and $\del \in [0, 1]$.
	Let $\M_\mathrm{RR} : \{0, 1\} \rightarrow \{0, 1\} \times \{\bot, \top\} $ be the randomized response mechanism, which has the following output probabilities.
	\begin{equation}
		\label{eqn:rr_pld_app}
		\M_\mathrm{RR}(0) = \begin{cases}
			(0, \bot) & \text{with probability}\ \del,\\ (0, \top) & \text{with probability}\ \frac{(1-\del)e^{\eps}}{e^\eps+1},\\
			(1, \top) & \text{with probability}\ \frac{(1-\del)}{e^\eps+1}, \\
			(1, \bot) & \text{with probability}\ 0,
		\end{cases}
		\M_\mathrm{RR}(1) = \begin{cases}
			(0, \bot) & \text{with probability}\ 0,\\ 
			(0, \top) & \text{with probability}\ \frac{(1-\del)}{e^\eps+1},\\
			(1, \top) & \text{with probability}\ \frac{(1-\del)e^{\eps}}{e^\eps+1}, \\
			(1, \bot) & \text{with probability}\ \del.
		\end{cases}
	\end{equation}
	For $P = \M_\mathrm{RR}(0)$ and $Q = \M_\mathrm{RR}(1)$, the privacy profiles are
	\begin{equation}
		\forall t \in \R \ : \ \del_{P|Q}(t) = \del_{Q|P}(t) =
		\begin{cases}
			\del & \text{if}\ \eps < t, \\
			1 - \frac{(e^t+1)}{e^\eps + 1}(1-\del) &\text{if} \ -\eps < t \leq \eps, \\ 
			1-e^t(1 - \del) & \text{if} \ t \leq -\eps.
		\end{cases}
	\end{equation}
\end{reptheorem}
\begin{proof}
	Let $S_1 = \{(0, \bot)\}$, $S_2 = S_1 \cup \{(1, \bot)\}$, and $S_3 = S_2 \cup \{(1, \top)\}$.
	From~\eqref{eqn:dp_pld}, 
	\begin{align}
		\del_{P|Q}(t) &= \Pr_{\Z \leftarrow \privloss{P}{Q}}[\Z > t] - e^\eps \cdot \Pr_{\Z' \leftarrow \privloss{Q}{P}}[\Z' < -t] \\
			      &= \Pr_{P}[\log \frac{P(\Thet)}{Q(\Thet)} > t] - e^\eps\cdot \Pr_{Q}[\log \frac{Q(\Thet)}{P(\Thet)} < -t] \\
			      &= \Pr_{P}[P(\Thet) > e^t \cdot Q(\Thet)] - e^\eps\cdot \Pr_{Q}[P(\Thet) < e^{t} \cdot Q(\Thet)] \\
			      &= \begin{cases}
				      P(S_1) - e^t \cdot Q(S_1)& \text{if} \ \eps < t, \\
				      P(S_2) - e^t \cdot Q(S_2) & \text{if} \ -\eps < t \leq \eps, \\
				      P(S_3) - e^t \cdot Q(S_3) & \text{otherwise}
			      \end{cases} \\
			      &= \begin{cases}
				      \del& \text{if} \ \eps < t, \\
				      \del + \frac{1 - \del}{e^\eps + 1} \cdot (e^\eps - e^t) & \text{if} \ -\eps < t \leq \eps, \\
				      1 - e^t \cdot (1 - \del) & \text{otherwise}
			      \end{cases}
	\end{align}
	The same holds for $\del_{Q|P}$ due to symmetry of~\eqref{eqn:rr_pld_app}.
\end{proof}
\begin{reptheorem}{thm:rdp_of_rr}[R\'enyi DP of $(\eps, 0)$-Randomized Response]
	For any $\eps > 0$ and $\del = 0$, the output distributions of randomized response mechanism in~\autoref{thm:rr_privacy_profile} exhibit a R\'enyi divergence
	\begin{equation}
		\label{eqn:rdp_of_rr_app}
		\forall \q \in \Com \ \text{s.t.} \ \mathfrak{Re}(\q) \not\in\{0, 1\} : \Ren{\q}{P}{Q} = \frac{1}{\q-1} \log \left(\frac{e^\eps}{1+e^{\eps}} e^{-\q\eps} + \frac{1}{1+e^{\eps}} e^{\q\eps}\right).
	\end{equation}
\end{reptheorem}
\begin{proof}
	From~\autoref{thm:rr_privacy_profile}, when $\del = 0$, the privacy profile of randomized response algorithm's output-distributions $P$ and $Q$ is 
	\begin{equation}
		\del_{P|Q}(t) = \begin{cases}
			0 &\text{if}\  \eps < t, \\
			\frac{e^{\eps} - e^t}{1+e^{\eps}} &\text{if} \ -\eps < t \leq \eps, \\
			1 - e^t & \text{otherwise}.
		\end{cases}
	\end{equation}
	From the equivalence~\eqref{eqn:dp_profile_to_rdp} of~\autoref{thm:dp_profile_to_rdp}, 
	\begin{align}
		\frac{e^{(\q-1)R_\q(P\Vert Q)}}{\q(\q-1)} &= \BiLap{\del_{P|Q}(t)}(1-\q)\\
		&=\int_{-\infty}^\infty e^{(\q-1)t}\del_{P|Q}(t)\mathrm{d}t \\
		&= \int_{-\infty}^{-\eps} e^{(\q-1)t} \cdot (1-e^t)\mathrm{d}t + \int_{-\eps}^{\eps}e^{(\q-1)t} \cdot\frac{e^{\eps} - e^t}{1+e^{\eps}}\mathrm{d}t \\
		&=\left[\frac{e^{(\q - 1) t}}{\q-1} - \frac{e^{\q t}}{\q}\right]_{-\infty}^{-\eps} + \frac{1}{1+e^{\eps}}\left[\frac{e^{\eps} \cdot e^{(\q-1)t}}{\q-1} - \frac{e^{\q t}}{\q}\right]_{-\eps}^{\eps} \\
		&= \left(\frac{e^{-(\q-1)\eps}}{\q-1} - \frac{e^{-\q\eps}}{\q}\right) + \frac{1}{1+e^{\eps}}\left[\left(\frac{e^{\q\eps}}{\q-1} - \frac{e^{\q \eps}}{\q}\right) - \left(\frac{e^{\eps} \cdot e^{-(\q-1)\eps}}{\q-1} - \frac{e^{-\q\eps}}{\q}\right)\right] \\
		&= \frac{e^{-(\q-1)\eps}}{\q-1}\cdot\left(1 - \frac{e^{\eps}}{1+e^{\eps}}\right) -\frac{e^{-\q\eps}}{\q}\cdot\left(1-\frac{1}{1+e^{\eps}}\right) + \frac{e^{\q\eps}}{1+e^{\eps}} \cdot \left(\frac{1}{\q-1} - \frac{1}{\q}\right) \\
		&= \frac{e^{-(\q-1)\eps}}{1+e^{\eps}} \cdot \left(\frac{1}{\q-1}-\frac{1}{\q}\right)+\frac{e^{\q\eps}}{1+e^{\eps}} \cdot \left(\frac{1}{\q-1} - \frac{1}{\q}\right) \\
		&=\frac{e^{\eps} \cdot e^{-\q\eps} + e^{\q\eps}}{1+e^{\eps}} \cdot \frac{1}{\q(\q-1)}.
	\end{align}
	Therefore, for any $\q \in \R \setminus \{0,1\}$ we can cancel $\q(\q-1)$ from the denominator in both sides, which proves the theorem statement for real orders. From dominated convergence theorem the theorem statement holds for complex orders as well on corresponding real orders (cf.~\autoref{ssec:laplace_transform}).
\end{proof}

\subsection{Deferred Proofs for Section~\ref{sec:composition}}
\label{app:composition}

\begin{lemma}[{\citep[Corollary 24]{steinke2022composition}}]
	\label{corr:steinke_approxdp_decomp}
	Let $P$ and $Q$ be probability distributions over $\OO$. Fix $\eps \geq 0$ and $\del \in [0, 1]$. Suppose $P, Q$ satisfy $(\epsilon, \delta)$-differential privacy. Then there exists distributions $A, B, P', Q'$ over $\OO$ such that
	\begin{align}
		P &= (1-\del)\frac{e^{\eps}}{e^{\eps} + 1}A + (1-\del) \frac{1}{e^{\eps} + 1}B + \del P', \\
		Q &= (1-\del)\frac{e^{\eps}}{e^{\eps} + 1}B + (1-\del) \frac{1}{e^{\eps} + 1}A + \del Q'.
	\end{align}
\end{lemma}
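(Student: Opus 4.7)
The plan is to decompose the proof into two successive reductions. First, show that any $(\eps, \del)$-DP pair $(P, Q)$ can be expressed as a mixture $P = (1-\del)\tilde P + \del P'$ and $Q = (1-\del)\tilde Q + \del Q'$, where the residual pair $(\tilde P, \tilde Q)$ satisfies pure $\eps$-DP pointwise, i.e., $e^{-\eps}\tilde Q(\thet) \le \tilde P(\thet) \le e^\eps \tilde Q(\thet)$ almost everywhere. Second, show that any pointwise pure $\eps$-DP pair admits a randomized-response style decomposition, yielding distributions $A$ and $B$ with the required mixture coefficients. Composing the two reductions produces the desired four-component decomposition.

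The second stage is the easier of the two. Given the pointwise pure $\eps$-DP constraint on $(\tilde P, \tilde Q)$ and assuming $\eps > 0$, I would solve the $2 \times 2$ linear system to obtain
\begin{equation*}
A(\thet) := \frac{e^\eps\,\tilde P(\thet) - \tilde Q(\thet)}{e^\eps - 1}, \qquad B(\thet) := \frac{e^\eps\,\tilde Q(\thet) - \tilde P(\thet)}{e^\eps - 1}.
\end{equation*}
The pointwise constraint renders both numerators non-negative, so $A, B \ge 0$; each integrates to $1$ since $\tilde P, \tilde Q$ are probability distributions; and direct substitution verifies $\frac{e^\eps}{e^\eps+1} A + \frac{1}{e^\eps+1} B = \tilde P$ and its symmetric counterpart for $\tilde Q$. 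The boundary case $\eps = 0$ is trivial: pure $0$-DP forces $\tilde P = \tilde Q$, and one may take $A = B = \tilde P$.

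The first stage carries the bulk of the work. The idea is to absorb the two excess densities $\alpha(\thet) := (P(\thet) - e^\eps Q(\thet))_+$ and $\beta(\thet) := (Q(\thet) - e^\eps P(\thet))_+$---which have disjoint supports $S^+ := \{P > e^\eps Q\}$ and $S^- := \{Q > e^\eps P\}$ and satisfy $\int \alpha \le \del$, $\int \beta \le \del$ by the $(\eps, \del)$-DP hypothesis---into $\del P'$ and $\del Q'$, together with auxiliary top-up non-negative measures $\alpha^\sharp$ and $\beta^\sharp$ of masses $\del - \int \alpha$ and $\del - \int \beta$ respectively. The top-ups are necessary because $\int \alpha$ and $\int \beta$ may be strictly less than $\del$ while $P'$ and $Q'$ must still be probability distributions. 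The core constraints on $(\alpha^\sharp, \beta^\sharp)$ are (i) pointwise non-negativity of $P - \alpha - \alpha^\sharp$ and $Q - \beta - \beta^\sharp$, and (ii) the pointwise pure $\eps$-DP bounds on the rescaled residuals $\tilde P = (P - \alpha - \alpha^\sharp)/(1-\del)$ and $\tilde Q = (Q - \beta - \beta^\sharp)/(1-\del)$.

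The main obstacle is coordinating the choice of $\alpha^\sharp$ and $\beta^\sharp$ so that conditions (i) and (ii) hold simultaneously, which is delicate when $\int \alpha$ and $\int \beta$ differ substantially. A case analysis across the three regions $S^+$, $S^-$, and the middle $S^0 := \OO \setminus (S^+ \cup S^-)$ suggests supporting $\alpha^\sharp$ on $\OO \setminus S^-$ and symmetrically $\beta^\sharp$ on $\OO \setminus S^+$, with pointwise densities bounded by $P - e^{-\eps} Q$ and $Q - e^{-\eps} P$ in $S^0$, so that on $S^{\pm}$ the pure $\eps$-DP constraint reduces to the tight equalities $\tilde P = e^{\pm\eps} \tilde Q$ already enforced by $\alpha$ and $\beta$. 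Feasibility of this allocation---ensuring the available budget exceeds $\del - \int \alpha$---follows from the accounting identity $\int (P - e^{-\eps} Q) = 1 - e^{-\eps}$ combined with the reverse-direction DP slack bound $\int \beta \le \del$, with the symmetric count handling $\beta^\sharp$. Once Stage 1 is established, applying Stage 2 to $(\tilde P, \tilde Q)$ completes the proof.
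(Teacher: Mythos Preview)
The paper does not prove this lemma; it imports it verbatim as Corollary~24 of \citet{steinke2022composition} and uses it as a black box in the proof of \autoref{thm:rr_dominates_approxdp}. There is therefore no in-paper argument to compare against, and your two-stage reduction is the standard direct route. Stage~2 is correct as written.

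The gap is in the feasibility argument for Stage~1. Your decoupled bounds $\alpha^\sharp \le P - e^{-\eps}Q$ and $\beta^\sharp \le Q - e^{-\eps}P$ on $S^0$, together with the room on $S^\pm$, give a total budget for $\alpha^\sharp$ of $1 - e^{-\eps} + e^{-\eps}\!\int\beta - \int\alpha$, so the feasibility condition becomes $\del \le 1 - e^{-\eps}(1 - \int\beta)$. The only input you invoke is $\int\beta \le \del$, which moves the right-hand side the \emph{wrong} way. Concretely, take $P = Q$ (so $\int\alpha = \int\beta = 0$) with small $\eps$ and any $\del > 1 - e^{-\eps}$: your budget for $\alpha^\sharp$ is $1 - e^{-\eps}$, strictly less than the required mass $\del$. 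The accounting identity $\int(P - e^{-\eps}Q) = 1 - e^{-\eps}$ combined with $\int\beta \le \del$ does not close this.

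The fix is to couple the top-ups rather than bound them independently. One clean construction: assume WLOG $\int\alpha \ge \int\beta$, set $c := \min(P, e^\eps Q)$ and $d := \min(Q, e^\eps P)$ (so $e^{-\eps}c \le d \le e^\eps c$ pointwise), choose $d'' \in [e^{-\eps}c,\, d]$ with $\int d'' = \int c = 1 - \int\alpha$ (the interval has enough mass precisely because $\int\alpha \ge \int\beta$), and then rescale both $c$ and $d''$ by $(1-\del)/(1-\int\alpha) \le 1$. This yields $\tilde P, \tilde Q$ with the pointwise $\eps$-DP property and the correct masses for every $\del \in [0,1]$, without any restriction relating $\del$ to $1 - e^{-\eps}$.
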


\begin{reptheorem}{thm:rr_dominates_approxdp}[Dominating Privacy Profile under $(\eps, \del)$-DP]
	Fix $\eps \geq 0$ and $\del \in [0, 1]$. Suppose distributions $P$ and $Q$ over $\OO$ satisfy $(\eps, \del)$-differential privacy. Then,
	\begin{equation}
		\label{eqn:rr_dominates_approxdp_app}
		\forall t \in \R \ : \ \del_{P|Q}(t) \leq \del_\mathrm{RR}(t) \quad \text{and}\quad \del_{Q|P}(t) \leq \del_\mathrm{RR}(t),
	\end{equation}
	where $\del_\mathrm{RR}(t)$ is the privacy profile of the randomized response mechanism $\M_\mathrm{RR}^{\eps, \del}$.
\end{reptheorem}
\begin{proof}
	Since the output distributions $P$ and $Q$ are $(\eps, \del)$-differentially private, Lemma~\ref{corr:steinke_approxdp_decomp} from~\citet{steinke2022composition} tells us that we can simulate these two distributions as post-processing of the randomized response mechanism $\M_\mathrm{RR}^{\eps,\del}$. To see this, imagine that $P = \M(\D)$ and $Q = \M(\D')$ are the output distributions of some mechanism $\M$. Define another mechanism $\mathcal{G}: \{0, 1\} \times \{\bot, \top\} \rightarrow \OO$ with output distribution:
	\begin{equation}
		\mathcal{G}(u) = 
		\begin{cases} 
			P' & \text{if} \ u = (0, \bot), \\
			A & \text{if} \ u = (0, \top), \\
			B & \text{if} \ u = (1, \top), \\
			Q' & \text{if} \ u = (1, \bot).
		\end{cases}
	\end{equation}
	From Lemma~\ref{corr:steinke_approxdp_decomp}, see that $\mathcal{G}(\M_\mathrm{RR}^{\eps,\del}(0)) = \M(\D)$ and $\mathcal{G}(\M_\mathrm{RR}^{\eps, \del}(1)) = \M(D')$. Therefore, by expressing $P$ and $Q$ in terms of distributions $P', A, B, Q'$, we can conclude that for all $t \in \mathbb{R}$,
	\begin{align}
		\del_{P|Q}(t) &= \sup_{S\subset\OO} P(S) - e^t\cdot Q(S) \\
			      &= \sup_{S\subset\OO} \left(\del \cdot P'(S) + \frac{(1-\del)(e^\eps - e^t)}{e^\eps+1} \cdot A(S) + \frac{(1-\del)(1 - e^{\eps +t})}{e^\eps + 1}\cdot B(S) - \del e^t \cdot Q'(S)\right) \\
			      &\leq \del + \begin{cases}
					0						& \text{if} \ \eps < t, \\ 
					\frac{(1-\del)(e^\eps - e^t)}{e^\eps + 1}	& \text{if} \ -\eps < t \leq \eps, \\
					(1-\del)(1 - e^t)				& \text{if} \ t \leq -\epsilon,
				\end{cases} \\
				&=\begin{cases}
					\del						& \text{if} \ \eps < t, \\ 
					1 - \frac{(e^t+1)(1-\del)}{e^\eps + 1}		& \text{if} \ -\eps < t \leq \eps, \\
					1 - e^t(1-\del)					& \text{if} \ t \leq -\eps.
				\end{cases}
	\end{align}
	Note that the expression on the right is the privacy profile of the randomized response mechanism $\M_\mathrm{RR}^{\eps,\del}$.
	An identical bound follows for $\del_{Q|P}(t)$ as well, with the switched roles: $P'\leftrightarrow Q'$ and $A \leftrightarrow B$.
\end{proof}

\begin{reptheorem}{thm:approxdp_composition_hetro}[{Tight Composition for $(\eps, \del)$-DP}]
	For any $\eps_i \geq 0$, $\del_i \in [0,1]$ for $i \in \{1, \cdots, k\}$, the $k$-fold composition of $(\eps_i, \del_i)$-differentially private mechanisms satisfies $(\eps, \del^{\otimes k}(\eps))$-DP for all $\eps$, defined recursively as
	\begin{equation}
		\label{eqn:approxdp_composition_hetro_app}
		\forall t \in \R \ : \ \del^{\otimes l}(t) = \del_l + \frac{(1-\del_l)}{e^{\eps_l} + 1}\left[ e^{\eps_l} \cdot \del^{\otimes l-1}(t -\eps_l) + \del^{\otimes l-1}(t +\eps_l)\right],
	\end{equation}
	with $\del^{\otimes 0}(t) = [1 - e^t]_+$.
\end{reptheorem}
\begin{proof}
	Let $P_{1:k}, Q_{1:k}$ be the joint output distributions of the $k$-fold composed mechanism on neighboring inputs. To prove the statement, we need to show that
	\begin{equation}
		\forall t \in \R \ : \ \del_{P_{1:k}|Q_{1:k}}(t) \leq \del^{\otimes}(t).
	\end{equation}
	Let's define $P_i^{x_{<i}}, Q_i^{x_{<i}}$ be the output distributions of the $i^{th}$ mechanism, conditioned on the preceding $i-1$ mechanisms' output being $x_{<i}$. From~\autoref{thm:rr_dominates_approxdp} and from~\autoref{lem:adaptive_profile}, we know that under adaptive $(\eps_i, \del_i)$-DP, the privacy profiles for conditional distributions are dominated as follows.
	\begin{equation}
		\label{eqn:A}
		\forall i \in \{1, \cdots, k\} \ : \ \sup_{x_{<i} \in \OO_{1}\times \cdots \times \OO_{i-1}} \del_{P^{x_{<i}}_i|Q^{x_{<i}}_i} (t) \leq \del_{\mathrm{RR}}^{\eps_i, \del_i}(t),
	\end{equation}
	where $\del_{\mathrm{RR}}^{\eps_i, \del_i}(t)$ is the privacy profile of $\M_\mathbb{RR}^{\eps_i, \del_i}$.

	Using this, we prove the theorem statement inductively.

	\textbf{Base step.} Let's denote the Heaveside step function as $H(t) := \mathbb{I}\{t > 0\}$. Then, we can write $\del^{\otimes 0}(t) = (1 - H(t)) \cdot (1 - e^t)$. Using this, we can express
	\begin{align}
		\del^{\otimes 1}(t) &= \del_1 + \frac{1-\del_1}{e^{\eps_1}+1} \left[e^{\eps_1} \cdot \del^{\otimes 0}(t -\eps_1) + \delta^{\otimes 0}(t +\eps_1)\right] \\
		&=\del_1 +  \frac{1-\del_1}{e^{\eps_1}+1} \left[e^\eps_1 \cdot (1 - H(t-\eps_1)) \cdot (1 - e^{t - \eps_1}) + (1 - H(t+\eps_1))\cdot (1 - e^{t+\eps_1})\right] \\
		&= 1+e^x(1-\del_1) +H(t-\eps_1) \cdot \frac{(1-\del_1)(e^t - e^{\eps_1})}{e^{\eps_1}+1}+ H(t+\eps_1) \cdot \frac{(1-\del_1)(e^{t+\eps_1} - 1)}{e^{\eps_1}+1} \label{eqn:B}\\
		&=\begin{cases}
			\del_1 & \text{if}\ \eps_1 < t, \\ 
			1 - \frac{(e^t+1)(1-\del_1)}{e^\eps_1 + 1} &\text{if} \ -\eps_1 < t \leq \eps_1, \\ 
			1 - e^t(1-\del_1) & \text{if} \ t \leq -\eps_1.
		\end{cases} \\
		&=\del_\mathrm{RR}^{\eps_1, \del_1}(x)
	\end{align}
	From~\eqref{eqn:A}, we therefore get that $\del_{P_1|Q_1}(t) \leq \del^{\otimes 1}(t)$ for all $t \in \R$.

	\textbf{Induction step.} Suppose for any $l \in \{2,\cdots, k\}$ the composition of first $l-1$ mechanisms have a privacy profile dominated by $\del^{\otimes l-1}$. More precisely,
	\begin{equation}
		\forall t \in \R \ : \ \del_{P_{1:l-1}|Q_{1:l-1}}(t) \leq \del^{\otimes l-1}(t).
	\end{equation}
	We need to show that 
	\begin{equation}
		\forall t \in \R \ : \ \del_{P_{1:l}|Q_{1:l}}(t) \leq \del^{\otimes l}(t).
	\end{equation}
	Recall that from the adaptive $(\eps_l, \del_l)$-DP assumption on the $l^{th}$ mechanism, \eqref{eqn:A} says that
	\begin{equation}
		\sup_{x_{<l} \in \OO_{1}\times \cdots \times \OO_{l-1}} \del_{P^{x_{<l}}_l|Q^{x_{<l}}_l} (t) \leq \del_\mathrm{RR}^{\eps_l, \del_l}(t).
	\end{equation}
	Therefore, from~\autoref{thm:compose_privacy_profile},~\autoref{lem:adaptive_profile} and the induction assumption, we have that
	\begin{align}
		\del_{P_{1:l}|Q_{1:l}}(t) &\leq \left(\del_{P_{1:l-1}|Q_{1:l-1}} \circledast \left(\ddot \del^{\eps_l, \del_l}_\mathrm{RR} - \dot \del^{\eps_l, \del_l}_\mathrm{RR} \right)\right) (t) \\
		&\leq \left(\del^{\otimes l-1} \circledast \left(\ddot \del^{\eps_l, \del_l}_\mathrm{RR} - \dot \del^{\eps_l, \del_l}_\mathrm{RR} \right)\right) (t) \\
		&= \int_{-\infty}^\infty \del^{\otimes l-1}(t-\tau) \times \left(\ddot \del^{\eps_l, \del_l}_\mathrm{RR}(\tau) - \dot \del^{\eps_l, \del_l}_\mathrm{RR}(\tau)\right) \mathrm{d}\tau.
	\end{align}
	To differentiate properly, in a manner that handles discontinuity, we state the function $\del_{\mathrm{RR}}^{\eps,\del}(t)$ in terms of Heaviside functions as in~\eqref{eqn:B}:
	\begin{align}
		\del_{\mathrm{RR}}^{\eps, \del}(t) %
		&= \underbrace{1 - e^t(1-\del)}_{I_1(t)}  + H(t-\eps) \cdot \underbrace{\frac{(e^t - e^\eps)(1-\del)}{e^\eps+1}}_{I_2(t)} + H(t+\eps)\cdot\underbrace{\frac{(e^{t+\eps}-1)(1-\del)}{e^{\eps}+1}}_{I_3(t)}.
	\end{align}
	Then, by chain rule, its first and second derivatives are:
	\begin{align}
		\dot{\del}^{\eps, \del}_\mathrm{RR}(t) &= \dot I_1(t) + \left(H(t-\eps) \cdot\dot I_2(t) + \underbrace{\dirac(t-\eps) \cdot I_2(t)}_{J_2(t)} \right) + \left(H(t+\eps) \cdot \dot I_3(t) + \underbrace{\dirac(t+\eps) \cdot I_3(t)}_{J_3(t)}\right) \\
		\ddot{\del}^{\eps, \del}_\mathrm{RR}(t) &= \ddot I_1(t) + \left(H(t-\eps) \cdot \ddot I_2(t) +  \dirac(t-\eps) \cdot \dot I_2(t) + \dot J_2(t)\right) \nonumber \\
							&\quad + \left(H(t+\eps) \cdot \ddot I_3(t) + \dirac(t+\eps) \cdot \dot I_3(t) + \dot J_3(t)\right)
	\end{align}
	Note that $\dot I_1(t) = \ddot I_1(t) = -e^t(1-\del)$, $\dot I_2(t) = \ddot I_2(t) = e^t \cdot \frac{(1-\del)}{e^\eps + 1}$ and $\dot I_3(t) = \ddot I_3(t) = e^t \cdot \frac{e^{\eps}(1-\del) }{e^\eps + 1}$. Therefore, on subtracting the two, a lot of terms cancel out, and we get:
	\begin{align}
		\ddot{\del}^{\eps, \del}_\mathrm{RR}(t) - \dot{\del}^{\eps, \del}_\mathrm{RR}(t) &= \dirac(t-\eps) \cdot (\dot I_2(t) - I_2(t)) +  \dot J_2(t) + \dirac(t+\eps)\cdot(\dot I_3(t) - I_3(t)) + \dot J_3(t) \\
		&= \dirac(t-\eps) \cdot \frac{(1-\del)e^\eps}{e^\eps + 1} + \dot J_2(t) + \dirac(t+\eps) \cdot \frac{(1-\del)}{e^\eps + 1}  + \dot J_3(t).
	\end{align}
	Note that $J_2(t) = 0$ everywhere except at $t = \eps$. Similarly, $J_3(t) = 0$ everywhere except $t= -\eps$.
	
	On substituting and convolving, we get
	\begin{align}
		\del_{P_{1:l}|Q_{1:l}}(t) &\leq \int_{-\infty}^\infty \del^{\otimes l-1}(t-\tau) \times \left(\ddot \del^{\eps_l, \del_l}_\mathrm{RR}(\tau) - \dot \del^{\eps_l, \del_l}_\mathrm{RR}(\tau)\right) \mathrm{d}\tau \\
		&= \int_{-\infty}^\infty \del^{\otimes l-1}(t-\tau) \times \left(\dirac(\tau-\eps_l) \cdot \frac{(1-\del_l)e^{\eps_l}}{e^{\eps_l} + 1} + \dirac(\tau+\eps_l) \cdot \frac{(1-\del_l)}{e^{\eps_l} + 1} + \dot J_2(\tau) + \dot J_3(\tau)\right)\mathrm{d}\tau \\
		&= \del^{\otimes l-1}(t-\eps_l) \cdot \frac{(1-\del_l)e^{\eps_l}}{e^{\eps_l}+1} + \del^{\otimes l-1}(t+\eps_l) \cdot \frac{1-\del_l}{e^{\eps_l}+1} \\
		&\quad+ \underbrace{\int_{-\infty}^\infty \del^{\otimes l-1}(t-\tau) \times\dot J_2(\tau)  \mathrm{d}\tau}_{K_2(t)} + \underbrace{\int_{-\infty}^\infty \del^{\otimes l-1}(t-\tau) \times\dot J_3(\tau)  \mathrm{d}\tau}_{K_3(t)}
	\end{align}
	For the last integral, apply integration by parts to get
	\begin{align}
		K_2(t) &= \int_{-\infty}^\infty \del^{\otimes l-1}(t-\tau) \dot I_2(\tau) \cdot \dirac(\tau - \eps_l)\mathrm{d} \tau + \int_{-\infty}^\infty \del^{\otimes l-1}(t-\tau)I_2(\tau) \cdot \dot\dirac(\tau - \eps_l) \mathrm{d}\tau \\
		&= \del^{\otimes l-1}(t-\eps_l) \dot I_2(\eps_l) - \left(\del^{\otimes l-1}(t-\eps_l) \dot I_2(\eps_l) + \dot \del^{\otimes l-1}(t-\eps_l) I_2(\eps_l) \right) \\
		&= 0 - 0
	\end{align}
	since $I_2(\eps_l) = 0$ and for any function $f$ on $\R$ it holds that, $\int_\R f \cdot \dot \dirac \mathrm{d}\tau = - \int_{\R} \dot f \cdot \dirac \mathrm{d}\tau$. Similarly,
	\begin{align}
		K_3(t) &= \int_{-\infty}^\infty \del^{\otimes l-1}(t-\tau) \dot I_3(\tau) \cdot \dirac(\tau + \eps_l)\mathrm{d} \tau + \int_{-\infty}^\infty \del^{\otimes l-1}(t-\tau)I_3(\tau) \cdot \dot\dirac(\tau + \eps_l) \mathrm{d}\tau \\
		&= \del^{\otimes l-1}(t+\eps_l) \dot I_3(-\eps_l) - \del^{\otimes l-1}(t+\eps_l) \dot I_3(-\eps_l) -  \dot \del^{\otimes l-1}(t-\eps_l) I_3(-\eps_l) \\
		&= 0 - 0
	\end{align}
	since $I_3(-\eps_l) = 0$. Therefore, we have
	\begin{equation}
		\del_{P_{1:l}|Q_{1:l}}(t)  \leq \frac{(1-\del_l)}{e^{\eps_l}+1}\left[e^{\eps_l} \cdot  \del^{\otimes l-1}(t-\eps_l) +  \del^{\otimes l-1}(t+\eps_l) \right] \leq \del^{\otimes l}(t).
	\end{equation}
	Hence, the induction statement holds.
\end{proof}

\begin{repcorollary}{corr:approxdp_composition_homo}
	For any $\eps \geq 0$, $\del \in [0,1]$, the $k$-fold composition of $(\eps, \del)$-DP mechanisms satisfies $(\eps, \del^{\otimes k}(\eps))$-DP for all $\eps$, where
	\begin{equation}
		\label{eqn:approxdp_composition_homp_app}
		\forall t \in \R \  : \ \del^{\otimes k}(t) = 1 - (1-\del)^k \left( 1- \expec{Y \leftarrow  \mathrm{Binomial}\left(k, \frac{e^\eps}{1+e^\eps}\right)}{1 - e^{t-\eps \cdot (2Y - k)}}_+\right).
	\end{equation}
\end{repcorollary}
\begin{proof}
	We just have to show that recurrence relationship in~\autoref{thm:approxdp_composition_hetro}, that is $\forall l \in \{1, \cdots, k\}$
	\begin{equation}
		\forall t \in \R \ : \ \del^{\otimes l}(t) = \del_l + \frac{(1-\del_l)}{e^{\eps_l} + 1}\left[ e^{\eps_l} \cdot \del^{\otimes l - 1}(t -\eps_l) + \del^{\otimes l-1}(t +\eps_l)\right], \quad \text{where} \quad \del^{\otimes 0}(t) = [1 - e^t]_+,
	\end{equation}
	simplifies to the theorem statement when $\eps_i = \eps_j = \eps$ and $\del_i = \del_j = \del$ for all $i, j \in \{1, \cdots, k\}$. Let's define $p = \frac{e^\eps}{e^\eps + 1}$. The recurrence can then be stated as
	\begin{align}
		\del^{\otimes l}(t) &= \del + (1-\del) \expec{Y_l \leftarrow \mathrm{Bernoulli}(p)}{\del^{\otimes l-1}(t-\eps(2Y_l - 1))} \\
				    &=\del + (1-\del) \expec{Y_l \leftarrow \mathrm{Bernoulli}(p)}{\del + (1-\del) \expec{Y_{l-1} \leftarrow \mathrm{Bernoulli}(p)}{\del^{\otimes l-2} (t - \eps(2 Y_l - 1) - \eps(2Y_{l-1}-1))}} \\
				    &= \del\sum_{i=1}^2 (1-\del)^{i-1} + (1-\del)^{2}\expec{\substack{Y_l \leftarrow \mathrm{Bernoulli}(p) \\ Y_{l-1} \leftarrow \mathrm{Bernoulli(p)}}}{\del^{\otimes l-2} (t - \sum_{i=l-1}^l\eps(2 \cdot Y_i  - 1))} \\
				    &= \del\sum_{i=1}^l (1-\del)^{i-1} + (1-\del)^{l}\expec{\substack{Y_l \leftarrow \mathrm{Bernoulli}(p) \\ Y_{l-1} \leftarrow \mathrm{Bernoulli(p)}}}{\del^{\otimes 0} (t - \sum_{i=1}^l\eps(2 \cdot Y_i  - 1))} \\
				    &= 1 - (1-\del)^l +(1-\del)^l \expec{Y \leftarrow  \mathrm{Binomial}\left(k, p\right)}{\del^{\otimes 0} (t - \eps(2\cdot Y - l))} \\
				    &= 1 -(1-\del)^l\left(1 - \expec{Y \leftarrow  \mathrm{Binomial}\left(k, p\right)}{1 - e^{t-\eps \cdot (2Y - k)}}_+\right).
	\end{align}
\end{proof}

\subsection{Deferred Proofs for Section~\ref{sec:subsampling}}
\label{app:subsampling}

\begin{reptheorem}{thm:subsampling}[Poisson subsampling]
	Let $0 \leq \lambda \leq 1$. For any two distributions $P$ and $Q$ on $\OO$, 
	\begin{align}
		\del_{\lambda P + (1-\lambda) | Q}(\eps) = \begin{cases} \lambda \del_{P|Q}(\log(1 + (e^{\eps} - 1)/\lambda)) & \text{if } \eps > \log(1-\lambda) \\1 - e^\eps &\text{otherwise} \end{cases}.
	\end{align}
\end{reptheorem}

\begin{proof}
	Recall from Definition~\ref{def:pld} that $\privloss{Q}{P}$ and $\privloss{Q}{\lambda P + (1-\lambda)Q}$ are the distributions of $\priv{Q}{P}(\Thet)$ and $\priv{Q}{P\lambda + (1-\lambda)Q}(\Thet)$ respectively with $\Thet \sim Q$. Since for any $\thet \in \OO$, 
	\begin{equation}
		\priv{Q}{\lambda P + (1-\lambda)Q}(\thet) = - \log \frac{\lambda P(\thet) + (1-\lambda) Q(\thet)}{Q(\thet)} = - \log(1 - \lambda + \lambda \cdot e^{-\priv{Q}{P}(\thet)}),
	\end{equation}
	the random variables $\Z'_\lambda \sim \privloss{Q}{\lambda P + (1-\lambda) Q}$ and $\Z' \sim \privloss{Q}{P}$ are related as
	\begin{equation}
		\Z'_\lambda = -\log(1 + \lambda(e^{-\Z'} -1)).
	\end{equation}
	Using Theorem~\ref{thm:dp_as_laplace} that we can express the privacy profile as:
	\begin{align}
		\del_{\lambda P + (1-\lambda) Q|Q}(\eps) &= e^\eps\UniLap{F_{\Z'_\lambda}(-t - \eps)}(-1) \\
							 &\overset{\eqref{eq:Differentiation}}{=} e^\eps\left[\UniLap{f_{\Z'_\lambda}(-t-\eps)}(-1) - F_{\Z'_\lambda}(\eps^-)\right] \\
							 &= e^\eps \left[\int_{0^+}^\infty e^t f_{\Z'_\lambda}(-t-\eps)dt - \int_{-\infty}^{0^-}f_{\Z'_\lambda}(t-\eps)dt \right] \\
							 &=e^\eps \int^{0^-}_{-\infty} (e^{-t} - 1)f_{\Z'_\lambda}(t-\eps)dt \\
							 &= \int_{-\infty}^{-\eps^-} \left(e^{-t'} - e^\eps\right) f_{\Z'_\lambda}(t')dt' \\
							 &= \expec{}{e^{-\Z'_\lambda} - e^\eps}_+,
	\end{align}
	where $[x]_+ \eqdef \max\{0, x\}$.
	On substituting $\Z'_\lambda$, we get:
	\begin{align}
		\del_{\lambda P + (1-\lambda)Q| Q}(\eps) &= \expec{}{e^{-\Z'_\lambda} - e^\eps}_+ \\
							 &=\expec{}{1-\lambda + \lambda e^{-\Z'} - e^\eps}_+ \\
							 &= \lambda\expec{}{e^{-\Z'} - \frac{e^\eps +\lambda - 1}{\lambda}}_+ \\
							 &= \begin{cases}\lambda \expec{}{e^{-\Z'} - e^{\log(1 + (e^{\eps} - 1)/\lambda)}}_+ &\text{if } \eps > \log (1-\lambda)\\ \lambda \expec{}{e^{-\Z'}} + 1-\lambda - e^\eps &\text{otherwise} \end{cases}\\
							 &= \begin{cases} \lambda \del_{P|Q}(\log(1 + (e^{\eps} - 1)/\lambda)) & \text{if } \eps > \log(1-\lambda) \\1 - e^\eps &\text{otherwise} \end{cases}. \tag{$\because \expec{}{e^{-\Z'}} = 1$}
	\end{align}

\end{proof}

\newpage

\begin{figure}[!ht]
	\centering
	\includegraphics[width=\linewidth]{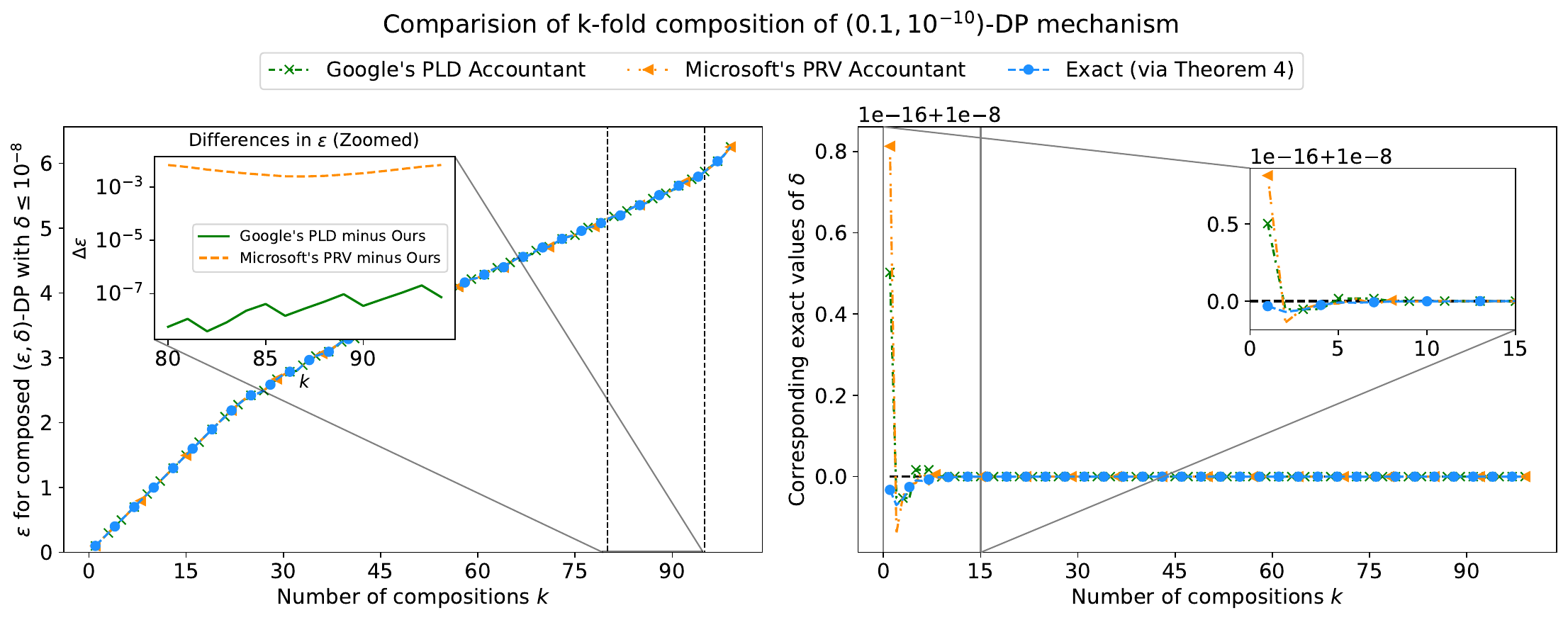}
	\caption[caption of figure]{\label{fig:composition_numerical_compare}
		Comparison of $(\eps, \del)$-DP bounds between numerical accountants and our~\autoref{corr:approxdp_composition_homo} for $100$-fold composition of a $(0.1, 10^{-10})$-DP point guarantee, with the budget constraint $\del < 10^{-8}$. Note that at $k=100$, the constraint on $\del$ cannot be satisfied and so the corresponding $\eps = \infty$ at that value. We note that at smaller values of $k$, numerical accountants can sometimes over exceed the budget constraints on $\del$. Additionally, the gap for $\eps$ between our exact bound and those approximated by numerical accountant tend to be of the order $\approx 10^{-7}$ for Google's PLDAccountant and $\approx 10^{-3}$ for Microsoft's PRVAccountant.
	}
\end{figure}

\end{document}